\documentclass[13pt]{article}
\usepackage{latexsym}
\usepackage{geometry}
\usepackage{graphicx}
\usepackage{amsmath, amssymb, amsthm}
\usepackage{booktabs}
\usepackage{algorithm}
\usepackage{algorithmic}

\usepackage[utf8]{inputenc} % allow utf-8 input
\usepackage[T1]{fontenc}    % use 8-bit T1 fonts

\usepackage{url}
\usepackage{natbib}
\usepackage{appendix}

\usepackage{amsfonts}
\usepackage{multirow}
\usepackage{multicol}

\usepackage{color}

\usepackage{xcolor,colortbl}
\definecolor{LightCyan}{rgb}{0.88,1,1}

\usepackage{subfigure}
\usepackage{hyperref}
\usepackage{tcolorbox}

\usepackage{color}
\usepackage{xcolor,colortbl}

\def\K{\mathcal{K}}
\def\I{\mathbb{I}}
\def\R{\mathbb{R}}
\def\E{\mathbb{E}}

\newcommand{\sign}{\mbox{sign}}

\newtheorem{theorem}{Theorem}
\newtheorem{lemma}{Lemma}

\newtheorem{definition}{Definition}
\newtheorem{assumption}{Assumption}
\newtheorem{remark}{Remark}

\begin{document}
	
\title{ CLion: Efficient Cautious Lion Optimizer with Enhanced Generalization }
\author{Feihu Huang\thanks{Feihu Huang is with College of Computer Science and Technology,
Nanjing University of Aeronautics and Astronautics, Nanjing, China;
and also with MIIT Key Laboratory of Pattern Analysis and Machine Intelligence, Nanjing, China. Email: huangfeihu2018@gmail.com}, \
Guanyi Zhang\thanks{Guanyi Zhang is with College of Computer Science and Technology,
Nanjing University of Aeronautics and Astronautics, Nanjing, China.}, \
Songcan Chen\thanks{Songcan Chen is with College of Computer Science and Technology,
Nanjing University of Aeronautics and Astronautics, Nanjing, China;
and also with MIIT Key Laboratory of Pattern Analysis and Machine Intelligence, Nanjing, China.}
 }

\date{}
\maketitle

\begin{abstract}
 Lion optimizer is a popular learning-based optimization algorithm in machine learning, which
 shows impressive performance in training many deep learning models. Although convergence property of the Lion optimizer has been studied, its generalization analysis is still missing.
 To fill this gap, we study generalization property of the Lion via algorithmic stability based on the mathematical induction. Specifically, we prove that the Lion has a generalization error of $O(\frac{1}{N\tau^T})$, where $N$ is training sample size, and
 $\tau>0$ denotes the smallest absolute value of non-zero element in gradient estimator, and $T$ is the total iteration number. In addition, we obtain an interesting byproduct that the SignSGD algorithm has the same generalization error as the Lion. 
 To enhance generalization of the Lion, we design a novel efficient Cautious Lion (i.e., CLion) optimizer by cautiously using sign function. Moreover, we prove that our CLion has a lower generalization error of $O(\frac{1}{N})$ than $O(\frac{1}{N\tau^T})$ of the Lion, since the parameter $\tau$ generally is very small.
 Meanwhile, we study convergence property of our CLion optimizer, and prove that our CLion has a fast convergence rate of $O(\frac{\sqrt{d}}{T^{1/4}})$ under $\ell_1$-norm of gradient for nonconvex stochastic  optimization, where $d$ denotes the model dimension.
 Extensive numerical experiments demonstrate effectiveness of our CLion optimizer.
\end{abstract}

\section{Introduction}
Efficient optimization algorithms~\citep{bottou2018optimization} have become one of central attention in machine learning, with ever-increasing costs of learning large models. In recent years, training large models has been dominated by the handcrafted optimizers such as stochastic gradient descent (SGD)~\citep{robbins1951stochastic}, momentum-based SGD (SGDM)~\citep{sutskever2013importance},
Adam~\citep{kingma2014adam} and AdamW~\citep{loshchilov2017decoupled} algorithms. For example, Adam~\citep{kingma2014adam} algorithm is designed by using the first-order momentum to obtain gradient estimator and using the second-order momentum to get adaptive learning rate. AdamW~\citep{loshchilov2017decoupled} is designed by using the decoupled weight decay
in Adam algorithm. Recently, these handcrafted optimizers are still the mainstream optimizers for training deep learning models including language models~\citep{vaswani2017attention} and vision models~\citep{wu2020visual}.

Another direction is to automatically find some efficient optimization algorithms via learning algorithms~\citep{harrison2022closer,chen2022scalable,chen2023symbolic}.
More recently, learning-based optimization algorithms have been begun to attract attention in machine learning.
For example, the Lion optimizer~\citep{chen2023symbolic} is a typical learning-based optimization algorithm, which is found by Google via program search. Specifically, the Lion uses the incorporation of two distinct interpolations between the previous step momentum and the current step gradient, as well as the integration of decoupled weight decay.
Lion~\citep{chen2023symbolic} shows impressive
performance in training many deep learning models, and performs comparably or favorably to AdamW but with greater memory efficiency. Recently, some works~\citep{chen2023lion,dong2024convergence,sfyraki2025lions,jiang2025convergence,yu2026sign} studied convergence properties of the Lion. For example, \cite{chen2023lion}
studied convergence properties of
the Lion optimizer for solving a class of bound-constrained optimization problems. \cite{dong2024convergence} further studied convergence properties of the Lion optimizer for solving the nonconvex unconstrained optimization problems.
\cite{sfyraki2025lions} studied convergence properties of the Lion and its variance reduced variant via stochastic Frank-Wolfe~\citep{frank1956algorithm,reddi2016stochastic} under the light-tailed and
heavy-tailed noise settings, respectively.
\cite{jiang2025convergence} proposed a variance reduced Lion and its distributed version, and provided its convergence analysis. Subsequently, \cite{yu2026sign} further studied convergence properties of the Lion
under the heavy-tailed noise setting.

\begin{table*}
	\centering
	\caption{ \textbf{Generalization error} comparison of our CLion optimizer and other representative optimizers. Here $N$ denotes the training sample size, and $\tau$ denotes  the absolute value of the smallest non-zero element in gradient estimator, \emph{which is generally very small}. $T$ is the iteration number.}
	\label{tab:1}
	\resizebox{0.96\textwidth}{!}{
		\begin{tabular}{c|c|c|c}
			\hline
			% after \\: \hline or \cline{col1-col2} \cline{col3-col4} ...
			\textbf{Algorithm} & \textbf{Reference} & \textbf{Generalization Error} & \textbf{Using Sign Function}
			\\ \hline
			SGD  & \cite{hardt2016train} & $O(\frac{1}{N})$  &  \\   \hline
			SGDM  & \cite{ramezani2024generalization} & $O(\frac{1}{N})$  &  \\  \hline
			%Adam  & \cite{zhou2024towards} & $O(\frac{1}{\sqrt{N}})$  &  \\   \hline
			%  AdamW  & \cite{zhou2024towards} & $O(\frac{1}{\sqrt{N}})$ &   \\  \hline
			%  HomeAdam & \cite{huang2026homeadam} & $O(\frac{1}{N})$ & \\  \hline
			SignSGD  & \cite{bernstein2018signsgd} & $O(\frac{1}{N\tau^{T}})$  & $\checkmark$  \\  \hline
			Lion  & \cite{chen2023symbolic} & $O(\frac{1}{N\tau^{T}})$  & $\checkmark$  \\  \hline
			CLion & Ours & \textcolor{red}{$O(\frac{1}{N})$}& \textcolor{red}{$\checkmark$} \\  \hline
		\end{tabular}
	}
\end{table*}

Meanwhile, some variants~\citep{liu2024communication,rong2025refined,yuan2024mars,sfyraki2025lions,jiang2025convergence} of Lion optimizer have been developed.
For example, \cite{liu2024communication} proposed a communication efficient distributed Lion algorithm for distributed optimization and provided its convergence analysis. \cite{yuan2024mars,sfyraki2025lions,jiang2025convergence} independently proposed  a class of variance reduced Lion optimizers based on the momentum-based variance reduced technique~\citep{cutkosky2019momentum,tran2022hybrid}. More recently, \cite{rong2025refined} proposed a refined lion optimizer by using a continuous function arctan instead of sign function and only studied its non-asymptotic convergence
under the convex setting.

Although the convergence property (i.e., optimization error~\citep{bottou2018optimization}) of the Lion optimizer and its variants, its generalization property (i.e., generalization error~\citep{shalev2010learnability}) is still missing.
In fact, optimization error~\citep{bottou2018optimization} only shows
how quickly the empirical training loss decreases. While the ultimate goal of machine learning models has a good performance on unseen examples. Naturally, generalization error~\citep{shalev2010learnability,hardt2016train,zhang2023mathematical} measures
the gap between training loss and population loss, and a small
generalization error shows strong performance on unseen examples.
However, it generally can remain large even
when optimization error is small, leading to overfitting of models.
So far, although optimization error of the Lion optimizer has been studied, its rigorous generalization analysis is still unexplored. To fill this critical blind spot,
we study generalization error of the Lion optimizer, and propose a novel efficient cautious Lion (i.e., CLion) optimizer to improve its generalization.
\subsection*{Contributions}
In the paper, our main contributions are given as follows:
\begin{itemize}
	\item[1)] We study generalization error of the Lion optimizer via algorithmic stability based on the mathematical induction, and prove that it has a generalization error of $O(\frac{1}{N\tau^T})$ under the unconvex setting, where $N$ is the training sample size, and $\tau$ denotes the smallest absolute value of non-zero element in gradient estimator, and $T$ is the total iteration number. Meanwhile, we obtain an interesting byproduct that the SignSGD algorithm~\citep{bernstein2018signsgd} also has a generalization error of $O(\frac{1}{N\tau^T})$, since the Lion algorithm reduces to SignSGD algorithm when $\beta_1=\beta_2=\lambda=0$ in Algorithm~\ref{alg:1}.
	\item[2)] To improve generalization of the Lion, we propose a novel CLion optimizer by cautiously using sign function. Moreover, we prove that our CLion has a lower generalization error of $O(\frac{1}{N})$ than $O(\frac{1}{N\tau^T})$ of the Lion, since the value $\tau$ generally is very small.
	In particular, our CLion reaches the same generalization error of SGD~\citep{hardt2016train} and  momentum-based SGD (i.e., SGDM)~\citep{ramezani2024generalization} (Please see Table~\ref{tab:1}). 
	\item[3)] Meanwhile, we study the convergence property of our CLion optimizer, and prove that our CLion has a fast convergence rate of $O(\frac{\sqrt{d}}{T^{1/4}})$ under $\ell_1$-norm of gradient for nonconvex stochastic optimization, which is the same convergence rate of the Lion~\citep{dong2024convergence,jiang2025convergence}.
	\item[4)] We conduct some numerical experiments on training vision and language models to demonstrate efficiency of the CLion optimizer.
\end{itemize}
%From Table~\ref{tab:1},  not only has a smaller generalization error than the Adam and AdamW algorithms~\citep{zhou2024towards}, but also has a more efficient memory than Adam and AdamW as it only keeps track of the first-order momentum. Meanwhile, our CLion optimizer has the same generalization of the new
%HomeAdam algorithm~\citep{huang2026homeadam}, but still has a more efficient memory than HomeAdam as it only keeps track of the first-order momentum.
\section*{Notations}
$\sign(\cdot)$ denotes a sign function, i.e.,
$\sign(a)=1$ when $a>0$, and $\sign(a)=0$ when $a=0$, otherwise $\sign(a)=-1$.
$\I(\cdot)$ denotes an index function, i.e., $\I(a)=1$ when $a\neq0$, otherwise $\I(a)=0$ for $a=0$.
Let $[N]=\{1,2,\cdots,N\}$. $\R^+$ denotes non-negative real number set.
For vectors $x,y\in \R^{d}$, $\langle x,y\rangle$ denotes inner product.
$\|x\|$ and $\|x\|_1$ denote the $\ell_2$ and $\ell_1$ norms of vector $x$, respectively.
$a_t=O(b_t)$ denotes that $a_t \leq c b_t$ for some constant $c>0$.

\section{Preliminaries}
\subsection{Problem}
In the paper, we study the following
nonconvex stochastic optimization problem
\begin{align}\label{eq:p1}
	\min_{w \in \R^{d}} F(w) = \E_{\xi \sim \mathcal{D}}[f(w;\xi)],
\end{align}
where $f(w;\xi): \R^{d}\rightarrow \R^+$ denotes a loss function on a sample $\xi\sim \mathcal{D}$,
which is possibly nonconvex. Here $\xi$ is a random variable drawn
some fixed but unknown distribution $\mathcal{D}$. $F(w) = \E_{\xi\sim \mathcal{D}}[f(w;\xi)]$
denotes a population loss (risk) of machine learning tasks such as training deep learning models.
In general, we only access a finite set of
training data $S=\{\xi_1,\xi_2,\cdots,\xi_N\}$ drawn i.i.d. from $\mathcal{D}$, since the fixed distribution $\mathcal{D}$ is unknown. Thus we could use the empirical risk
\begin{align}
	F_S (w) = \frac{1}{N}\sum_{i=1}^N f(w;\xi_i)
\end{align}
to approximate the population risk $F(w)= \E_{\xi \sim \mathcal{D}}[f(w;\xi)]$.
\subsection{ Definition of Generalization Error }
Given a training dataset $S$, we run a
randomized algorithm $A$ to minimize the empirical risk to get a model $A(S)$.
In fact, it does not necessarily show that the output model $A(S)$ would have a
good performance on test examples, which is measured
by the population risk $F(w) =\E_{\xi \sim \mathcal{D}}[f(w;\xi)]$.
We are interested in the excess
population risk $F(A(S))-F(w^*)$, which measures the relative behavior of the output model as compared
to the best model
\begin{align}
	w^*=\mathop{\arg\min}_{w\in \R^{d}}F(w). \nonumber
\end{align}
Then we can decompose this excess population risk into the following formation
\begin{align}
	F(A(S))-F(w^*)= \underbrace{F(A(S)) - F_S(A(S))}_{(i)}  + \underbrace{F_S(A(S)) - F_S(w^*)}_{(ii)} + F_S(w^*) -F(w^*), \label{eq:ep}
\end{align}
where the term $(i)$ $F(A(S)) - F_S(A(S))$ is generalization error (generalization gap), which measures
the gap between training loss and population loss,
and the term $(ii)$ $F_S(A(S)) - F_S(w^*)$ is optimization error, which quantifies how well the algorithm minimizes the empirical risk. Taking expectation on this inequality~(\ref{eq:ep}) with random algorithm $A$ and
training dataset $S$, since $\E_{A,S}[F_S(w^*) -F(w^*)]=0$, i.e., $w^*$
is independent of $A$ and $S$, we have
\begin{align}
	\E_{A,S} [F(A(S)) -F(w^*) ]&= \E_{A,S} [ F(A(S)) - F_S(A(S))]  + \E_{A,S} [F_S(A(S)) - F_S(w^*)]. \nonumber
\end{align}
\subsection{ Definition of Algorithmic Stability }
In the paper, we study the generalization by using the algorithmic stability.
We first introduce the uniform stability, on-average stability
and generalization gap.
For notational simplicity, let $S=\{\xi_1,\xi_2,\cdots,\xi_N\}$ and $\tilde{S}=\{\tilde{\xi}_1,\tilde{\xi}_2,\cdots,\tilde{\xi}_N\}$ be two
independent datasets drawn from the same distribution $\mathcal{D}$.  Here we denote
the dataset $S^{(i)}=\{\xi_1,\xi_2,\cdots,\tilde{\xi}_i,\cdots,\xi_N\}$ by replacing the $i$-th example $\xi_i$ with an independent sample $\tilde{\xi}_i$ for any $i\in [N]$.

\begin{definition} (\textbf{Stability in Loss Function})
	Let $A$ be a random algorithm and $A(S)$ denote the output of the algorithm $A$ run on dataset $S$. If for any $S$ and $S^{(i)}$
	$$\sup_{\xi} \E_A [f(A(S),\xi)-f(A(S^{(i)}),\xi)] \leq \epsilon,$$
	the random algorithm $A$ is $\epsilon$-uniform stable.
\end{definition}
\begin{lemma} \label{lem:gs}
	(\textbf{Generalization via Stability})~\citep{shalev2010learnability,hardt2016train}. Let algorithm $A$ be $\epsilon$-uniformly stable in function values. Then we have
	$$ |\E_{A,S}[F(A(S)) - F_S(A(S))]|\leq \epsilon.$$
\end{lemma}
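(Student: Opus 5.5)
The plan is the standard symmetrization argument of \cite{shalev2010learnability,hardt2016train}. We write the population risk of $A(S)$ in terms of the held-out sample set $\tilde{S}$, and then use the exchange $\xi_i\leftrightarrow\tilde{\xi}_i$ to turn the generalization gap into an average of stability differences.

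\textbf{Step 1: rewrite the empirical risk.} By linearity,
\begin{align}
	\E_{A,S}[F_S(A(S))] = \frac{1}{N}\sum_{i=1}^N \E_{A,S}[f(A(S);\xi_i)]. \nonumber
\end{align}

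\textbf{Step 2: rewrite the population risk.} Since $\tilde{\xi}_i$ is independent of $S$ and of the algorithm's internal randomness, and $\tilde{\xi}_i\sim\mathcal{D}$, we have for every $i$
\begin{align}
	\E_{A,S}[F(A(S))] = \E_{A,S,\tilde{S}}[f(A(S);\tilde{\xi}_i)]. \nonumber
\end{align}

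\textbf{Step 3: the renaming step.} This is the only genuinely non-routine point. The pair $(S,\tilde{\xi}_i)$ has the same joint law as $(S^{(i)},\xi_i)$, because swapping the two i.i.d.\ coordinates $\xi_i$ and $\tilde{\xi}_i$ preserves the product measure. Hence
\begin{align}
	\E_{A,S,\tilde{S}}[f(A(S);\tilde{\xi}_i)] = \E_{A,S,\tilde{S}}[f(A(S^{(i)});\xi_i)]. \nonumber
\end{align}
The algorithm's internal randomness must be drawn independently of the data so that it is unaffected by the swap. Care is needed that the expectation $\E_A$ is taken with the same algorithmic randomness law on both datasets.

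\textbf{Step 4: combine.} Averaging Step 3 over $i$ and subtracting Step 1 gives
\begin{align}
	\E_{A,S}[F(A(S)) - F_S(A(S))] = \frac{1}{N}\sum_{i=1}^N \E_{S,\tilde{S}}\Big[\E_A\big[f(A(S^{(i)});\xi_i) - f(A(S);\xi_i)\big]\Big]. \nonumber
\end{align}

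\textbf{Step 5: apply uniform stability.} For fixed $S,\tilde{S}$, each inner term is bounded by $\sup_{\xi}\E_A[f(A(S^{(i)}),\xi)-f(A(S),\xi)]$. Applying the definition with the roles of $S$ and $S^{(i)}$ exchanged, which is legitimate because the definition holds for all neighbouring pairs, bounds this by $\epsilon$. Applying the definition directly bounds the negated term $\E_A[f(A(S);\xi_i)-f(A(S^{(i)});\xi_i)]$ by $\epsilon$ as well. Thus each summand lies in $[-\epsilon,\epsilon]$, and averaging over $i$ and taking the outer expectation yields $|\E_{A,S}[F(A(S)) - F_S(A(S))]|\le\epsilon$.
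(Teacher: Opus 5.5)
Your argument is correct and is the standard symmetrization proof (Theorem 2.2 of \cite{hardt2016train}); the paper gives no proof of its own and relies on that citation. Your point that the quantifier ``for any $S$ and $S^{(i)}$'' is symmetric under exchanging the two neighbouring datasets is what justifies the two-sided bound, and hence the absolute value in the statement.
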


\begin{figure}[ht]
	\centering
	\subfigure[]{\includegraphics[width=0.21\textwidth]{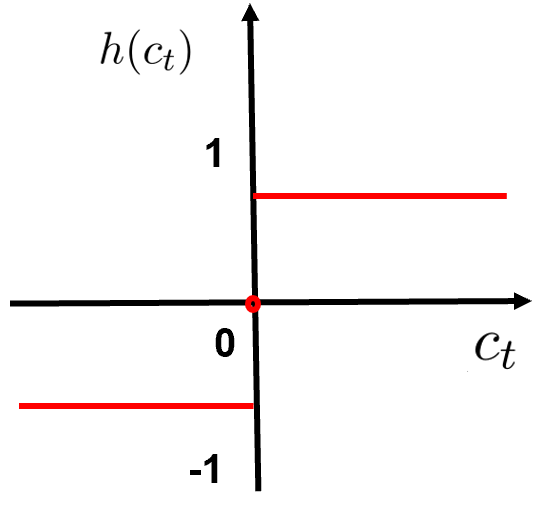}}
	\hfill
	\subfigure[]{\includegraphics[width=0.25\textwidth]{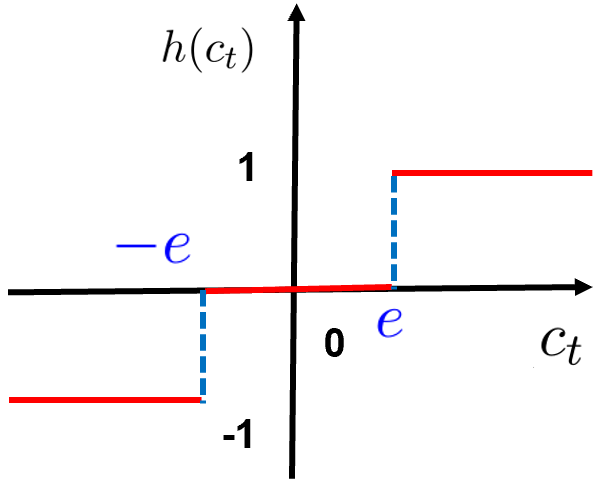}}
	\hfill
	\subfigure[]{\includegraphics[width=0.23\textwidth]{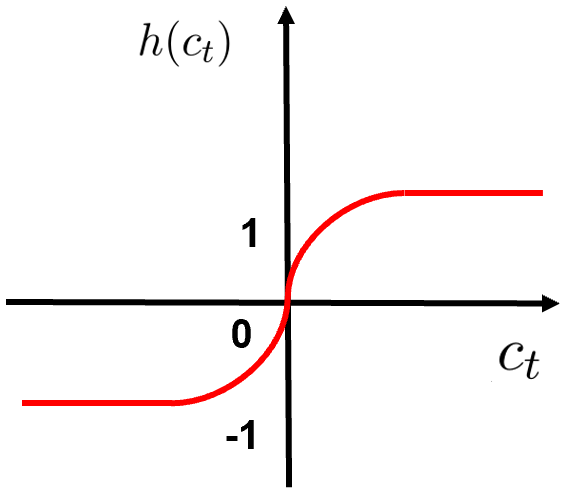}}
	\hfill
	\subfigure[]{\includegraphics[width=0.25\textwidth]{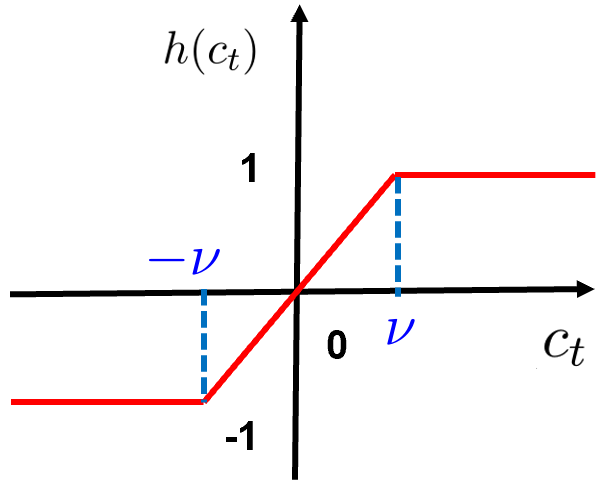}}
	\hfill
	\caption{ Illustration of different
		active functions: \textbf{(a)} Lion~\citep{chen2023symbolic} uses $h(c_t)=\sign(c_t)$; \textbf{(b)} Lion-$\K$~\citep{chen2023lion} uses $h(c_t)=\nabla\K(c_t)=\I(|c_t|>e)\sign(c_t)$, where $e>0$; \textbf{(c)} Lion-$\K$~\citep{chen2023lion} uses $h(c_t)=\nabla\K(c_t)=\mbox{tanh}(ac_t)$, where $a>0$, and RLion~\citep{rong2025refined} uses $h(c_t)=\frac{2}{\pi}\mbox{arctan}(\alpha c_t)$, where $\alpha>0$;
		\textbf{(d)} our CLion uses $h(c_t)=\sign(c_t)$ when $\min_{j\in S_t} |(c_t)_j| \geq \nu>0$, otherwise $h(c_t)=c_t$, where $S_t=\{j|(c_t)_j\neq 0, j=1,\cdots,d\}$.}
	\label{fig:1}
\end{figure}
\section{Cautious Lion Optimizer}
In this section, we propose an efficient cautious Lion (i.e., CLion) optimizer to solve
the above nonconvex stochastic problem~(\ref{eq:p1}) by cautiously using sign function based on the 
Lion optimizer~\citep{chen2023symbolic}. Algorithm~\ref{alg:1} shows an an algorithmic framework for 
the Lion optimizer~\citep{chen2023symbolic}. When $\beta_1=\beta_2=\lambda=0$ in Algorithm~\ref{alg:1}, 
it also reduces to the SignSGD algorithm~\citep{bernstein2018signsgd}.

\begin{algorithm}
	\caption{\textbf{Lion} Optimizer~\citep{chen2023symbolic} }
	\label{alg:1}
	\begin{algorithmic}[1]
		\STATE \textbf{Input}: $\eta>0$, $\beta_1,\beta_2\in [0,1)$ and $\lambda\geq 0$;
		\STATE \textbf{Initialize:} $w_0\in \R^{d}$
		and $m_0=0$;
		\FOR{$t = 1, 2, \ldots, T$}
		\STATE  Draw a sample $\xi_{t} \sim \mathcal{D}$;
		\STATE $g_t = \nabla f(w_{t-1};\xi_{t})$;
		\STATE $c_{t}= \beta_{1}m_{t-1} + (1-\beta_{1})g_t$;
		\STATE $w_{t} = w_{t-1} - \eta (\sign(c_t)+\lambda w_{t-1})$;
		\STATE $m_{t} = \beta_{2}m_{t-1} + (1-\beta_{2})g_t$.
		\ENDFOR
		\STATE \textbf{Output:} $w_{T}$.
	\end{algorithmic}
\end{algorithm}

Algorithm~\ref{alg:2} provides an algorithmic framework of our CLion optimizer.
At the lines 6 and 12 of Algorithm~\ref{alg:2}, our CLion optimizer uses the same gradient estimator $c_t$
as the standard Lion optimizer~\citep{chen2023symbolic} shown in Algorithm~\ref{alg:1}.
In Algorithm~\ref{alg:2}, when $\min_{ j\in S_t} |(c_t)_j| \geq \nu$ with $S_t=\{j|(c_t)_j\neq 0, j=1,\cdots,d\}$, i.e., the smallest absolute value of non-zero element in gradient estimator $c_t$
is larger than a threshold $\nu>0$,
we update the variable $w$ as the standard Lion algorithm, defined as 
\begin{align}
	w_{t} = w_{t-1} - \eta (\sign(c_t)+\lambda w_{t-1}).
\end{align}
Otherwise, we update the variable $w$ as follows:
\begin{align}
	w_{t} = w_{t-1} - \eta (c_t+\lambda w_{t-1}).
\end{align}
Here we define a unified framework of active function $h(\cdot)$ to update variable $w$ as follows:
\begin{align}
	w_{t} = w_{t-1} - \eta (\textcolor{red}{h(c_t)}+\lambda w_{t-1}).
\end{align}

Figure~\ref{fig:1} (a) shows that the Lion~\citep{chen2023symbolic} optimizer uses $h(c_t)=\sign(c_t)$.  Figure~\ref{fig:1} (b) shows that the Lion-$\K$~\citep{chen2023lion} uses $h(c_t)=\nabla\K(c_t)=\I(|c_t|>e)\sign(c_t)$, where $e>0$. Figure~\ref{fig:1} (c) shows that Lion-$\K$~\citep{chen2023lion} uses $h(c_t)=\nabla\K(c_t)=\mbox{tanh}(ac_t)$, where $a>0$, and it also could show that the RLion~\citep{rong2025refined} uses $h(c_t)=\frac{2}{\pi}\mbox{arctan}(\alpha c_t)$, where the curve parameter $\alpha>0$.
In Figure~\ref{fig:1} (d), our CLion uses $h(c_t)=\sign(c_t)$ when $\min_{j\in S_t} |(c_t)_j| \geq \nu>0$, otherwise $h(c_t)=c_t$, where $S_t=\{j|(c_t)_j\neq 0, j=1,\cdots,d\}$.
We could find that our active function not only has a good continuous property,
but also more approaches the discontinuous sign function than other active functions.

\begin{algorithm}
	\caption{ Cautious Lion (\textbf{CLion}) Optimizer}
	\label{alg:2}
	\begin{algorithmic}[1]
		\STATE \textbf{Input}: $\eta>0$, $\beta_1,\beta_2\in (0,1)$, $\lambda> 0$ and $\nu>0$;
		\STATE \textbf{Initialize:} $w_0\in \R^{d}$
		and $m_0=0$;
		\FOR{$t = 1, 2, \ldots, T$}
		\STATE  Draw a sample $\xi_{t} \sim \mathcal{D}$;
		\STATE $g_t = \nabla f(w_{t-1};\xi_{t})$;
		\STATE $c_{t}= \beta_{1}m_{t-1} + (1-\beta_{1})g_t$;
		\IF {$\min_{ j\in S_t} |(c_t)_j| \geq \nu$ with $S_t=\{j|(c_t)_j\neq 0, j=1,\cdots,d\}$}
		\STATE $w_{t} = w_{t-1} - \eta (\sign(c_t)+\lambda w_{t-1})$;
		\ELSE
		\STATE $w_{t} = w_{t-1} - \eta (c_t+\lambda w_{t-1})$;
		\ENDIF
		\STATE $m_{t} = \beta_{2}m_{t-1} + (1-\beta_{2})g_t$.
		\ENDFOR
		\STATE \textbf{Output:} $w_{T}$.
	\end{algorithmic}
\end{algorithm}

Compared to the Lion~\citep{chen2023symbolic}, our CLion optimizer has the following advantages: 1) Our CLion cautiously uses the identity function instead of
the sign function for some small absolute values of non-zero element in gradient estimator $c_t$ below the threshold $\nu>0$. Thus our CLion could relieve the gradient explosion case of Lion optimizer. 2) From the following generalization analysis, our CLion has a lower
generalization error than the Lion.

Compared to the RLion~\citep{rong2025refined}, our CLion has the following advantages: 1) Although our CLion and the RLion use continuous active functions instead of discontinuous sign function used in Lion (see Figure~\ref{fig:1}),
our active function more approaches the sign function,
so our CLion keep more good properties of the Lion. For example,
from the following convergence analysis, our CLion has a fast convergence rate as the Lion under the nonconvex setting.
%While the RLion only provides its non-asymptotic convergence
%under the convex setting.
Meanwhile, our CLion also shows better performances than the RLion in the following numerical experiments.
2) Our CLion could easily choose the threshold $\nu>0$ based on the smallest absolute value of non-zero element in gradient estimator $c_t$. While the curve parameter $\alpha$ in RLion algorithm can not easily control in training process, which is totally rely on manually set.
Meanwhile, by choosing a suitable threshold $\nu>0$,
our CLion also has a lower
generalization error than the Lion (see the following generalization analysis).

Note that although the Lion-$\mathcal{K}$~\citep{chen2023lion} optimizer could use a class of
active functions including $h(c_t)=\nabla\K(c_t)=\I(|c_t|>e)\sign(c_t)$ and $h(c_t)=\nabla\K(c_t)=\mbox{tanh}(ac_t)$ (see Figure~\ref{fig:1} (b) (c)), it only is suitable for
solving a class of bound-constrained optimization problems. While our Clion optimizer does not rely on the bound constraint in solving nonconvex optimization.

\section{Generalization Analysis}
In this section, we provide generalization analysis for the Lion and our CLion optimizers, respectively.
We first give some mild conditions for this generalization analysis.

\begin{assumption}[\textbf{Smoothness of Component Function}]\label{ass:s1}
	Each component function $f(w;\xi)$ is $L$-Lipschitz smooth, such that
	\begin{align}
		\|\nabla f(w_1;\xi)-\nabla f(w_2;\xi)\| \leq L\|w_1-w_2\|, \ w_1, w_2\in \R^{d}.
	\end{align}
\end{assumption}

\begin{assumption}[\textbf{Lipschitzness}]\label{ass:g}
	Each component function $f(w;\xi)$ for all $\xi\sim \mathcal{D}$ is Lipschitz continuous, such that
	\begin{align}
		\|f(w_1;\xi)-f(w_2;\xi)\| \leq G\|w_1-w_2\|, \ w_1, w_2\in \R^{d}, \ G>0.
	\end{align}
\end{assumption}

\begin{assumption}[\textbf{Bounded Variance}]\label{ass:v}
	$\nabla f(w;\xi)$ is an unbiased stochastic estimator of the true gradient $\nabla F(w)$ and has a bounded variance, i.e.,
	\begin{align}
		\E[\nabla f(w;\xi)]=\nabla F(w), \ \E\|\nabla f(w;\xi)-\nabla F(w)\|^2]\leq \sigma^2.
	\end{align}
\end{assumption}

Assumption~\ref{ass:s1} shows smoothness of each component function $f(w;\xi)$, which is widely used in generalization analysis~\citep{hardt2016train,lei2020fine,ramezani2024generalization}.
Assumption~\ref{ass:g} provides the Lipschitz continuous of objective function, which is widely used in
generalization analysis~\citep{hardt2016train,lei2020fine,lei2023stability,ramezani2024generalization}.
Assumption~\ref{ass:v} shows a standard bounded variance assumption used in stochastic optimization~\citep{bottou2018optimization,ghadimi2013stochastic}. According to Assumptions~\ref{ass:s1} and~\ref{ass:v}, we have $\|\nabla F(w_1)-\nabla F(w_2)\| = \|\E[\nabla f(w_1;\xi)-\nabla f(w_2;\xi)]\| \leq \E \|\nabla f(w_1;\xi)-\nabla f(w_2;\xi)\| \leq L\|w_1-w_2\|$. Thus, we could use smoothness of each component function $f(w;\xi)$ to obtain smoothness of population function $F(w)$.

\subsection{Generalization Error of Lion Optimizer}
In this subsection, we provide generalization analysis for the Lion optimizer.
All detailed proofs are provided in Appendix~\ref{ga:lion}.

\begin{lemma} \label{lem:1}
	Assume the sequences $\{c_t\}_{t=1}^T$ and $\{c_t^{(i)}\}_{t=1}^T$ are generated from Algorithm~\ref{alg:1} based on the dataset $S$ and
	$S^{(i)}$, respectively, we have
	\begin{align}
		\big\|\sign(c_t)-\sign(c_t^{(i)})\big\|  \leq \frac{2\sqrt{d}}{\tau}\|c_t-c_t^{(i)}\|,
	\end{align}
	where $\tau=\min_{t\geq 1}(\min_{j\in S_t}(|(c_t)_j|))>0$ with $S_t=\big\{j|\ |(c_t)_j|\neq 0, j=1,2\cdots,d\big\}$.
\end{lemma}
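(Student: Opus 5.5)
The plan is a case split on whether $\sign(c_t)$ and $\sign(c_t^{(i)})$ coincide. The argument is deterministic and uses only the definition of $\sign(\cdot)$ and of $\tau$, for a fixed $t$. Write $c=c_t$ and $c'=c_t^{(i)}$.

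If $\sign(c)=\sign(c')$, the left-hand side is $0$ and there is nothing to prove. Otherwise some coordinate $j$ has $\sign(c_j)\neq\sign(c'_j)$. For the left-hand side I use the crude bound: each coordinate of $\sign(c)-\sign(c')$ lies in $\{-2,-1,0,1,2\}$, so
\begin{align}
\big\|\sign(c)-\sign(c')\big\|\le 2\sqrt{d}. \nonumber
\end{align}
It then suffices to show $\|c-c'\|\ge |c_j-c'_j|\ge\tau$, because then $2\sqrt{d}\le \frac{2\sqrt{d}}{\tau}\|c-c'\|$.

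To get $|c_j-c'_j|\ge\tau$, take a coordinate $j$ with differing signs and split by which entry is nonzero.
\begin{itemize}
\item If $c_j\neq 0$, then $c'_j$ is either $0$ or of the opposite sign. In both cases $|c_j-c'_j|\ge |c_j|\ge \min_{k\in S_t}|(c_t)_k|\ge\tau$.
\item If $c_j=0$, then $c'_j\neq0$ and $|c_j-c'_j|=|c'_j|$.
\end{itemize}

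The second case is the main obstacle: as literally written, $\tau$ only controls the nonzero entries of the sequence built from $S$, not from $S^{(i)}$. I will handle it by reading $\tau$ as the minimum of the smallest nonzero magnitudes over both runs. The intended use in the stability argument treats the two runs symmetrically, so this is the natural convention. Under it, $|c'_j|\ge\tau$ as well. If one insists on the one-sided definition, the only alternative I see is an extra assumption that $c_j=0$ implies $c'_j=0$, which seems less natural; I would state the symmetric convention explicitly.

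One remark: the same coordinatewise reasoning gives $|\sign(c_j)-\sign(c'_j)|\le 2\le \frac{2}{\tau}|c_j-c'_j|$ on every differing coordinate, and $0$ elsewhere. Summing squares yields the sharper bound $\frac{2}{\tau}\|c-c'\|$ without the $\sqrt{d}$. The stated inequality is weaker and follows in either way, so I would present the simple case argument above and note the sharper bound in passing.
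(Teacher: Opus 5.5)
Your proof is correct, and its main line differs from the paper's in how the two sides are compared.

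The paper argues coordinate by coordinate. It shows $|\sign(c_j)-\sign(c'_j)|\le \frac{2}{\tau}|c_j-c'_j|$ for every $j$, using $|c_j-c'_j|=|c_j|+|c'_j|\ge\tau$ whenever the signs differ. It sums these into an $\ell_1$ inequality and then converts back via $\|x\|\le\|x\|_1\le\sqrt{d}\|x\|$; that conversion is the only source of its $\sqrt{d}$.

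You instead use a global dichotomy: the crude bound $\|\sign(c)-\sign(c')\|\le 2\sqrt{d}$, plus the lower bound $\|c-c'\|\ge\tau$ from a single flipped coordinate. Your route is shorter but has the $\sqrt{d}$ built in. The paper's per-coordinate inequality is the more informative ingredient.

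Your closing remark is right: applying that per-coordinate inequality directly in $\ell_2$ gives the dimension-free bound $\frac{2}{\tau}\|c-c'\|$. So the paper's $\ell_1$ detour loses a factor $\sqrt{d}$ for nothing. The sharper bound would remove $\sqrt{d}$ from the recursion for $\phi_t$ in Theorem~\ref{th:1}, though not the $\tau^{-T}$ growth.

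On $\tau$, you are right that the one-sided definition does not control $|c'_j|$ when $c_j=0$, and the paper needs exactly the same fact. It simply asserts that, because $c_t^{(i)}$ is also produced by Algorithm~\ref{alg:1}, the same $\tau$ lower-bounds its nonzero entries, and then uses $|(c_t)_j|+|(c_t^{(i)})_j|\ge\tau$. Your explicit symmetric convention is precisely what the paper's proof relies on, just stated openly rather than presented as if it followed.
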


\begin{theorem} \label{th:1}
	Assume the sequence $\{w_t,c_t\}_{t=1}^T$ is generated from Algorithm~\ref{alg:1} on dataset $S=\{\xi_1,\xi_2,\cdots,\xi_N\}$. Under the Assumptions~\ref{ass:s1},~\ref{ass:g},~\ref{ass:v}, let $\eta=O(\frac{1}{\sqrt{d}})$, $\lambda=O(1)$ with $\lambda\in [0, \frac{1}{\eta})$, $\beta_1=O(1)$ with $\beta_1\in [0,1)$, $\beta_2=O(1)$ with $\beta_2\in [0,1)$, $\sigma=O(1)$ and
	$L=O(1)$, we have
	\begin{align}
		|\E[F(w_T) - F_S(w_T)]| \leq O(\frac{1}{\tau^T N}),
	\end{align}
	where $\tau=\min_{t\geq 1}(\min_{j\in S_t}(|(c_t)_j|))>0$ with $S_t=\big\{j|\ |(c_t)_j|\neq 0, j=1,2\cdots,d\big\}$.
\end{theorem}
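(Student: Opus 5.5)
Following \cite{hardt2016train}, we run Algorithm~\ref{alg:1} on $S$ and on $S^{(i)}$ with the same sample-index sequence, giving iterates $\{w_t,m_t,c_t\}$ and $\{w_t^{(i)},m_t^{(i)},c_t^{(i)}\}$ with $w_0=w_0^{(i)}$ and $m_0=m_0^{(i)}=0$. Lemma~\ref{lem:gs} together with Assumption~\ref{ass:g} gives
\begin{align}
|\E[F(w_T)-F_S(w_T)]|\leq \sup_\xi \E|f(w_T;\xi)-f(w_T^{(i)};\xi)| \leq G\,\E\|w_T-w_T^{(i)}\|. \nonumber
\end{align}
So it suffices to bound $\delta_t=\E\|w_t-w_t^{(i)}\|$ and $\rho_t=\E\|m_t-m_t^{(i)}\|$ jointly by induction on $t$.

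\textbf{One-step recursion.} At step $t$ there are two cases.
\begin{itemize}
\item With probability $1-\frac1N$ the sampled point is common to both datasets. Assumption~\ref{ass:s1} then gives $\|g_t-g_t^{(i)}\|\leq L\|w_{t-1}-w_{t-1}^{(i)}\|$.
\item With probability $\frac1N$ the sampled point is the replaced one. Then $\|g_t-g_t^{(i)}\|\leq 2G$, because Assumption~\ref{ass:g} bounds gradient norms by $G$. (One could instead use Assumption~\ref{ass:v}, which gives an $O(\sigma)$ term.)
\end{itemize}
In both cases
\begin{align}
\|c_t-c_t^{(i)}\|\leq \beta_1\|m_{t-1}-m_{t-1}^{(i)}\|+(1-\beta_1)\|g_t-g_t^{(i)}\|, \nonumber
\end{align}
and $m_t$ satisfies the same kind of bound with $\beta_2$. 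For the parameter update, the triangle inequality and Lemma~\ref{lem:1} give
\begin{align}
\|w_t-w_t^{(i)}\|\leq (1-\eta\lambda)\|w_{t-1}-w_{t-1}^{(i)}\|+\frac{2\eta\sqrt d}{\tau}\|c_t-c_t^{(i)}\|. \nonumber
\end{align}
Here $\lambda<1/\eta$ ensures $1-\eta\lambda\ge0$. Taking expectations and combining the cases, we get a linear system in $u_t=(\delta_t,\rho_t)$:
\begin{align}
u_t\leq M_t u_{t-1}+\frac{b}{N}, \nonumber
\end{align}
where $b$ collects the $O(G/\tau)$ and $O(G)$ contributions from the replaced sample. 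The largest entry of $M_t$ is of order
\begin{align}
1+\frac{2\eta\sqrt d}{\tau}\bigl(\beta_1+(1-\beta_1)L\bigr). \nonumber
\end{align}
With $\eta=O(1/\sqrt d)$ and $L,\beta_1,\beta_2=O(1)$, and assuming $\tau\le1$ (the relevant regime), this entry is at most $C/\tau$ for an absolute constant $C$.

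\textbf{Unrolling.} Starting from $u_0=0$, induction gives
\begin{align}
\delta_T\leq \sum_{s=1}^{T}\Bigl(\frac{C}{\tau}\Bigr)^{T-s}\frac{b}{N}=O\Bigl(\frac{1}{\tau^{T}N}\Bigr), \nonumber
\end{align}
where constants (including $C^T$) are absorbed into $O(\cdot)$, as the theorem statement implicitly does. Multiplying by $G$ gives the claimed bound.

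\textbf{Main obstacle.} Lemma~\ref{lem:1} is stated for a fixed $\tau$ defined from the $S$-run only, yet we apply it at every step along coupled trajectories. We need $\tau$ to be a uniform lower bound on nonzero coordinates, valid at all $t\le T$ and, if necessary, for the $S^{(i)}$-run too. We take this as given from the definition of $\tau$ and from Lemma~\ref{lem:1} as stated.

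The remaining delicate point is handling the random index together with the expectation. Conditioning on the index drawn at step $t$ keeps the recursion linear in the expected quantities. We expect no independence issues here, because all the bounds are deterministic given the index.
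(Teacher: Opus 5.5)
Your proposal is correct at the paper's level of rigor and follows essentially the same route. It uses coupled runs with a shared index sequence, the Lemma~\ref{lem:1} recursion for $\|w_t-w_t^{(i)}\|$, the $1-\frac{1}{N}$ versus $\frac{1}{N}$ case split for $c_t$ and $m_t$, an induction in which each step amplifies by $O(1/\tau)$ with the compounding constants absorbed into $O(\cdot)$ as the paper does, and finally $G$-Lipschitzness plus Lemma~\ref{lem:gs}. The only deviation is that you bound the replaced-sample gradient gap by $2G$ rather than by the paper's $2\sigma+L\,\E\|w_t-w_t^{(i)}\|$ from bounded variance; this is slightly cleaner because it does not require $\xi_i$ to be independent of $w_t$.
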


\begin{proof}
	Here we provide a sketched proof. In this proof, we use the mathematical induction to obtain
	the above result.
	We first prove
	\begin{align}
		\E \|m_1 - m_1^{(i)}\|  \leq \frac{2(1-\beta_2)\sigma}{N}=\frac{\psi_1}{N},
	\end{align}
	where $\psi_1=2(1-\beta_2)\sigma$. We also prove
	\begin{align}
		\E \|w_1 - w_1^{(i)}\| \leq  \frac{2\eta\sqrt{d}}{\tau}\frac{2(1-\beta_1)\sigma}{N} = \frac{\phi_1}{\tau N},
	\end{align}
	where $\phi_1=4\eta\sqrt{d}(1-\beta_1)\sigma$.
	Let $\eta=O(\frac{1}{\sqrt{d}})$, $\beta_1=O(1)$ with $\beta_1\in [0,1)$ and $\sigma=O(1)$, we have
	$\phi_1=4\eta\sqrt{d}(1-\beta_1)\sigma=O(1)$ and $\psi_1=2(1-\beta_2)\sigma=O(1)$.
	Then we have
	\begin{align}
		\E \|w_1 - w_1^{(i)}\|  \leq O(\frac{1}{\tau N}).
	\end{align}
	
	Based on mathematical induction, we assume $\E \|w_t - w_t^{(i)}\| \leq \frac{\phi_t}{\tau N}$ with
	$\phi_t=O(\frac{1}{\tau^{t-1}})$, and $\E\|m_t-m_t^{(i)}\|\leq \frac{\psi_t}{N}$ with $\psi_t=O(\frac{1}{\tau^{t-1}})$. Then we prove
	\begin{align}
		\E\|m_{t+1} - m_{t+1}^{(i)}  \|
		& \leq  \frac{\beta_2\psi_t}{N}  + \frac{2(1-\beta_2)\sigma}{N} + \frac{(1-\beta_2)L\phi_t}{\tau N}  = \frac{\phi_{t+1}}{N},
	\end{align}
	where $\psi_{t+1}= \beta_2\psi_t + 2(1-\beta_2)\sigma + (1-\beta_2)L\frac{\phi_t}{\tau}$.
	Let $\phi_{t+1} = (1-\lambda\eta)\phi_t + 2\eta\sqrt{d}\big( \beta_1 \psi_t + 2(1-\beta_1)\sigma + (1-\beta_1)L\frac{\phi_t}{\tau}\big) $, we have
	\begin{align}
		\E \|w_{t+1} - w_{t+1}^{(i)}\| & \leq (1-\lambda\eta)\E \|w_{t} -w_{t}^{(i)}\| + \frac{2\eta\sqrt{d}}{\tau}\E \|c_{t+1}-c_{t+1}^{(i)}\|  \nonumber \\
		& \leq  (1-\lambda\eta)\frac{\phi_t}{\tau N} + \frac{2\eta\sqrt{d}}{\tau}\big(\frac{\beta_1 \psi_t}{N} +\frac{ 2(1-\beta_1)\sigma}{N} + \frac{(1-\beta_1)L\phi_t}{\tau N} \big)  = \frac{\phi_{t+1}}{\tau N}.
	\end{align}
	Let $\eta=O(\frac{1}{\sqrt{d}})$, $\lambda=O(1)$ with $\lambda\in [0, \frac{1}{\eta})$, $\beta_1=O(1)$ with $\beta_1\in [0,1)$,  $\beta_2=O(1)$ with $\beta_2\in [0,1)$, $\sigma=O(1)$ and
	$L=O(1)$. Since $\psi_t=O(\frac{1}{\tau^{t-1}})$ and $\phi_t=O(\frac{1}{\tau^{t-1}})$, we can obtain $\psi_{t+1}=O(\frac{1}{\tau^{t}})$ and $\phi_{t+1} =O(\frac{1}{\tau^{t}})$.
	Then we have
	\begin{align}
		\E \|w_{t+1} - w_{t+1}^{(i)}\|  \leq  \frac{\phi_{t+1}}{\tau N} =O(\frac{1}{\tau^{t+1} N}).
	\end{align}
	By using mathematical induction, then we have
	\begin{align} \label{eq:18}
		\E \|w_{T} - w_{T}^{(i)}\| \leq O(\frac{1}{\tau^T N}).
	\end{align}
	
	By using Assumption~\ref{ass:g}, i.e., the condition of $G$-Lipschitz $f(w;\xi)$ (i.e.,), we have
	for any $\xi\sim \mathcal{D}$
	\begin{align}  \label{eq:19}
		\E |f(w_T;\xi)-f(w_T^{(i)};\xi)| \leq G \E \|w_{T} - w_{T}^{(i)}\| \leq O(\frac{1}{\tau^T N}),
	\end{align}
	where the last inequality holds by the above inequality~(\ref{eq:18}) and $G=O(1)$.
	
	By using the lemma~\ref{lem:gs}, i.e., the uniform stability bound~\citep{shalev2010learnability,hardt2016train}, and taking expectations over $S$, $S^{(i)}$ and the algorithm's randomness on the above inequality~(\ref{eq:19}), we
	can obtain
	\begin{align}
		|\E [F(w_T) - F_S(w_T)]| \leq O(\frac{1}{\tau^T N}).
	\end{align}
	
\end{proof}
\begin{remark}
	In Theorem~\ref{th:1}, we provide a novel generalization analysis framework for the Lion optimizer
	based on the mathematical induction.
	Moreover, we discover a meaningful finding that
	the smallest absolute value of non-zero element in gradient estimators $\{c_t\}_{t\geq1}$
	affects generalization error of the Lion optimizer.
	
	When $\beta_1=\beta_2=\lambda=0$ in Algorithm~\ref{alg:1},
	the Lion algorithm will reduce to SignSGD algorithm. Thus, from Theorem~\ref{th:1},
	the SignSGD algorithm~\citep{bernstein2018signsgd} also has a generalization error of $O(\frac{1}{N\tau^T})$.
\end{remark}

\subsection{Generalization Error of our CLion Optimizer}
In this subsection, we provide a generalization error for our CLion optimizer.

\begin{theorem} \label{th:2}
	Assume the sequence $\{w_t\}_{t=1}^T$ is generated
	from Algorithm~\ref{alg:2} on dataset $S$. Under the Assumptions~\ref{ass:s1},~\ref{ass:g},~\ref{ass:v}, without loss of generality, let $\nu\geq 1$, $\lambda=O(1)$ with $0<\lambda\leq \frac{1}{\eta}$, $\beta_1=O(1)$ with $\beta_1\in (0,1)$, $\beta_2=O(1)$ with $\beta_2\in (0,1)$, $\sigma=O(1)$, $L=O(1)$ and $G=O(1)$.
	When the iteration number is relatively small (i.e., $T=O(1)$) set
	$\eta=\frac{1}{\sqrt{d}}$, otherwise set $\eta=\frac{1}{\sqrt{d}T}$, we have
	\begin{align}
		|\E [F(w_T) - F_S(w_T)]| \leq O(\frac{1}{N}).
	\end{align}
\end{theorem}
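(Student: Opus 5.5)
We follow the induction template of Theorem~\ref{th:1}, but replace Lemma~\ref{lem:1} with a Lipschitz-type bound for the cautious activation $h(\cdot)$ of Algorithm~\ref{alg:2}. The aim is a bound whose constant does not involve $1/\tau$. The induction then closes with $O(1)$ constants, and Lemma~\ref{lem:gs} together with Assumption~\ref{ass:g} converts parameter stability into the $O(\frac{1}{N})$ generalization bound.

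\textbf{Step 1 (key lemma).} Let $h$ be the activation of Algorithm~\ref{alg:2}. We want
\begin{align}
\|h(c_t)-h(c_t^{(i)})\| \leq \frac{2\sqrt{d}}{\nu}\|c_t-c_t^{(i)}\| \leq 2\sqrt{d}\,\|c_t-c_t^{(i)}\|, \nonumber
\end{align}
where the last inequality uses $\nu\geq 1$. There are four cases.
\begin{itemize}
\item[(a)] Both runs take the sign branch. Every nonzero coordinate has magnitude at least $\nu$, so a coordinatewise sign mismatch forces $|(c_t)_j-(c_t^{(i)})_j|\geq\nu$. The argument of Lemma~\ref{lem:1} then applies with $\tau$ replaced by $\nu$.
\item[(b)] Both runs take the identity branch. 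The map is $1$-Lipschitz.
\item[(c)] The runs take different branches. This mixed case is the main obstacle. Coordinates where both values have magnitude at least $\nu$ (or are zero) are handled as in (a). On coordinates with magnitude at most $\nu$, we compare $\sign$ with the identity and use that $|\sign(a)-b|$ is controlled by $|a-b|$ when $|a|\geq\nu\geq 1$ and $|b|\leq\nu$. I expect a constant-factor loss here and possibly a cruder bound such as $\|h(c)-h(c')\|\leq 2\sqrt{d}$. If the clean Lipschitz bound fails, the fallback is to accept the additive bound in this case. That bound is only useful for $T=O(1)$ with $\eta=\frac{1}{\sqrt d}$, or it must be weighted by a probability of order $O(\frac1N)$, since the runs differ only when sample $i$ is drawn.
\end{itemize}

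\textbf{Step 2 (recursions).} Exactly as in Theorem~\ref{th:1}, we bound the momentum difference by
\begin{align}
\E\|m_{t+1}-m_{t+1}^{(i)}\|\leq \beta_2\frac{\psi_t}{N}+\frac{2(1-\beta_2)\sigma}{N}+(1-\beta_2)L\frac{\phi_t}{N}, \nonumber
\end{align}
using Assumption~\ref{ass:s1} for the shared samples and Assumption~\ref{ass:v} for the replaced sample. Combining Step 1 with the update $w_t=(1-\lambda\eta)w_{t-1}-\eta h(c_t)$, we bound the iterate difference by
\begin{align}
\E\|w_{t+1}-w_{t+1}^{(i)}\|\leq (1-\lambda\eta)\frac{\phi_t}{N}+2\eta\sqrt d\Big(\beta_1\frac{\psi_t}{N}+\frac{2(1-\beta_1)\sigma}{N}+(1-\beta_1)L\frac{\phi_t}{N}\Big). \nonumber
\end{align}
This gives a linear system for $(\phi_t,\psi_t)$ with the $1/\tau$ factors removed.

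\textbf{Step 3 (choosing $\eta$).} The recursion matrix has entries $1-\lambda\eta+2\eta\sqrt d(1-\beta_1)L$, $2\eta\sqrt d\beta_1$, $(1-\beta_2)L$ and $\beta_2$, so its norm is at most $1+O(\eta\sqrt d)$.
\begin{itemize}
\item If $T=O(1)$ and $\eta=\frac1{\sqrt d}$, the growth factor is $O(1)$ per step, so $\phi_T,\psi_T=O(C^T)=O(1)$.
\item Otherwise $\eta=\frac1{\sqrt d T}$. The $w$-block grows by a factor $(1+O(1/T))$ per step and the $m$-block contracts by $\beta_2$. A Lyapunov combination $\phi_t+c\,\psi_t$ with a suitable $c=O(1)$ then gives $\Phi_{t+1}\leq(1+O(\frac1T))\Phi_t+O(\frac1T+1)$ per step. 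Handling the $\psi$ term needs care, since a naive sum gives $O(T)$. I would unroll the $m$-recursion so that $\psi_t\leq O(1)+O(1)\max_{s\leq t}\phi_s$ (a geometric series in $\beta_2$). Substituting this, each step of $\phi$ increases by $O(\eta\sqrt d)(1+\phi)$, and $(1+O(1/T))^T=O(1)$ yields $\phi_T=O(1)$.
\end{itemize}
Finally, Assumption~\ref{ass:g} with $G=O(1)$ gives $\E|f(w_T;\xi)-f(w_T^{(i)};\xi)|\leq O(\frac1N)$ for every $\xi$, and Lemma~\ref{lem:gs} concludes the proof.
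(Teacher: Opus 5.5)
\textbf{There is a genuine gap, at exactly the point you flagged: Step~1(c).}

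No bound of the form $\|h(c)-h(c')\|\le K\|c-c'\|$ can hold when the two runs fall in different branches. The test in Algorithm~\ref{alg:2} is a single test on the whole vector, and the identity branch passes \emph{every} coordinate through unchanged. For example, take $d=2$, $\nu=1$, $c=(5,1)$ and $c'=(5,1-\epsilon)$. The first vector takes the sign branch and the second takes the identity branch, so $\|h(c)-h(c')\|=\|(1,1)-(5,1-\epsilon)\|\ge 4$ while $\|c-c'\|=\epsilon$.

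Your coordinatewise reduction fails in two places:
\begin{itemize}
\item \emph{Large coordinates.} There the identity-branch run outputs $c'_j$ itself, not $\sign(c'_j)$. So $|\sign(c_j)-c'_j|$ can be of order $G$ even when $c_j=c'_j$.
\item \emph{Small coordinates.} The scalar inequality you invoke holds only for $\nu=1$. For $\nu>1$, taking $a=\nu$ and $b=\nu-\epsilon$ gives $|\sign(a)-b|\approx\nu-1$ against $|a-b|=\epsilon$.
\end{itemize}
So $h$ is discontinuous at the branch boundary.

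Your Steps~2--3 are the paper's recursion. Your unrolling of the $\psi$-recursion combined with $(1+O(1/T))^T=O(1)$ is a cleaner treatment of the $\eta=\frac{1}{\sqrt d T}$ regime than the paper's $O(t)$ induction. (Your matrix-norm remark is imprecise because of the $O(1)$ entry $(1-\beta_2)L$, but the unrolling does not need it.) Both steps, however, rest on an inequality that is false in the mixed case. The paper does not close this hole either: it splits only on the branch taken by $c_t$ and tacitly assumes $c_t^{(i)}$ takes the same one.

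\textbf{Your fallback does not rescue the large-$T$ regime.} The justification that ``the runs differ only when sample $i$ is drawn'' is wrong. Once $\xi_i$ has been drawn, $w_t\neq w_t^{(i)}$ and $m_t\neq m_t^{(i)}$ at every later step. A branch mismatch can therefore occur at any subsequent iteration, whichever sample is drawn.

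The correct crude argument runs as follows. The two runs coincide until $\xi_i$ is first drawn, which happens with probability at most $\frac1N$ per step. After that, since $\|h(c)\|\le\hat G=\max(\sqrt d,G)$, the gap grows by at most $2\eta\hat G$ per step. Hence $\E\|w_T-w_T^{(i)}\|\le\frac{\eta\hat G\,T(T+1)}{N}$.
\begin{itemize}
\item For $T=O(1)$ and $\eta=\frac{1}{\sqrt d}$, this is $O(\frac1N)$. That case therefore needs no Lipschitz property of $h$ at all.
\item For $\eta=\frac{1}{\sqrt d T}$, it only yields $O(\frac TN)$.
\end{itemize}

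Getting $O(\frac1N)$ uniformly in $T$ needs an ingredient that neither you nor the paper supplies. One option is a bound of order $\frac1N$, averaged over $t$, on the probability that $c_t$ and $c_t^{(i)}$ fall on opposite sides of the threshold. Another is to change the activation so that it is Lipschitz. For instance, a coordinatewise test with $\nu=1$ turns $h$ into clipping onto $[-1,1]^d$, which is $1$-Lipschitz, and then your Steps~2--3 go through as written.
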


\begin{proof}
	This proof basically follows proof of the Theorem~\ref{th:1}. All detailed proof is provided in Appendix~\ref{ga:clion}.
	% We first prove
	% \begin{align}
		% \|w_{t} - w_{t}^{(i)}\| \leq (1-\eta\lambda)\|w_{t-1} -w_{t-1}^{(i)}\|  + 2\eta\sqrt{d}\|c_t-c_t^{(i)}\|.
		% \end{align}
	%When the iteration number is relatively small, i.e., $T=O(1)$,
	%and set $\eta=\frac{1}{\sqrt{d}}$, $\lambda=O(1)$ with $0<\lambda\leq \frac{1}{\eta}$, $\beta_1=O(1)$ with $\beta_1\in (0,1)$, $\beta_2=O(1)$ with $\beta_2\in (0,1)$, $\sigma=O(1)$ and
	%$L=O(1)$. Following the above proof of Theorem~\ref{th:1}, we can obtain
	%\begin{align}
	%  \E \|w_{T} - w_{T}^{(i)}\| \leq O(\frac{1}{N}).
	%\end{align}
	%
	
\end{proof}

\begin{remark}
	From Theorem~\ref{th:2}, under the same conditions, our CLion optimizer has a lower generalization error of $O(\frac{1}{N})$ than $O(\frac{1}{\tau^T N})$ of the Lion optimizer, since the value $\tau$ generally is very small.
\end{remark}

\section{Convergence Analysis}
In this section, we provide the convergence properties of our CLion optimizer.
All detailed proofs of convergence analysis of
our CLion optimizer is provided in Appendix~\ref{ca:clion}.

\begin{assumption}[\textbf{Smoothness of Population Function}]\label{ass:s2}
	Population function $F(w)$ is $L$-Lipschitz smooth, if for any $w_1, w_2\in \R^{d}$, we have
	\begin{align}
		\|\nabla F(w_1)-\nabla F(w_2)\| \leq L\|w_1-w_2\|.
	\end{align}
\end{assumption}

\begin{assumption} \label{ass:f}
	The function $F(w)$ has a lower bounded, i.e.,  $F^* = \inf_{w\in \R^d} F(w)>-\infty$.
\end{assumption}

Assumption~\ref{ass:s2} shows smoothness of population function $F(w)$, which is widely used in nonconvex optimization~\citep{bottou2018optimization,ghadimi2013stochastic}. Assumption~\ref{ass:f} guarantees feasibility of the above problem~(\ref{eq:p1}), which also is widely used in optimization~\citep{bottou2018optimization,ghadimi2013stochastic}.

\begin{lemma} \label{lem:2}
	Assume the sequence $\{w_t\}_{t=1}^T$ is generated from Algorithm~\ref{alg:2}, let $\|w_{0}\|\leq \eta\hat{G}$ and $\lambda \leq \frac{1}{2\eta\hat{G} T^{\alpha}}$ with $\alpha>1$, we have
	\begin{align}
		\|w_t\|\leq (t+1)\eta\hat{G}, \quad  \|w_t - w_{t-1}\|\leq 2\eta \hat{G},
	\end{align}
	where $\hat{G}=\max(G,\sqrt{d})$.
\end{lemma}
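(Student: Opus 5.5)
We prove both bounds of Lemma~\ref{lem:2} together, by induction on $t$, after first bounding the update direction $h(c_t)$ in each branch of Algorithm~\ref{alg:2}. Write the update as $w_t=(1-\eta\lambda)w_{t-1}-\eta h(c_t)$, where $h(c_t)=\sign(c_t)$ or $h(c_t)=c_t$.

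\textbf{Step 1: bound $\|h(c_t)\|$.} In the sign branch, $\|\sign(c_t)\|\leq\sqrt{d}\leq\hat{G}$. In the identity branch, Assumption~\ref{ass:g} gives $\|\nabla f(w;\xi)\|\leq G$ for every $w$ and $\xi$. Since $m_0=0$ and each $m_t$ is a convex combination of $m_{t-1}$ and $g_t$, induction gives $\|m_t\|\leq G$. Then $c_t=\beta_1m_{t-1}+(1-\beta_1)g_t$ is also a convex combination, so $\|c_t\|\leq G\leq\hat{G}$. In both branches we therefore have $\|h(c_t)\|\leq\hat{G}$.

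\textbf{Step 2: bound $\|w_t\|$ by induction.} The base case $t=0$ is the hypothesis $\|w_0\|\leq\eta\hat{G}$. Suppose $\|w_{t-1}\|\leq t\eta\hat{G}$. Since $0\leq\eta\lambda\leq 1$,
\begin{align}
\|w_t\|\leq(1-\eta\lambda)\|w_{t-1}\|+\eta\|h(c_t)\|\leq t\eta\hat{G}+\eta\hat{G}=(t+1)\eta\hat{G}. \nonumber
\end{align}
The condition $\eta\lambda\leq 1$ follows from $\lambda\leq\frac{1}{2\eta\hat{G}T^{\alpha}}$ provided $\hat{G}T^{\alpha}\geq\frac12$, which holds since $\hat{G}\geq\sqrt{d}\geq1$ and $T\geq1$. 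In fact we can simply drop the factor $(1-\eta\lambda)$, since it is at most $1$.

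\textbf{Step 3: bound the increment.} We have $w_t-w_{t-1}=-\eta\big(h(c_t)+\lambda w_{t-1}\big)$, so
\begin{align}
\|w_t-w_{t-1}\|\leq\eta\hat{G}+\eta\lambda\, t\eta\hat{G}. \nonumber
\end{align}
Using $\lambda\leq\frac{1}{2\eta\hat{G}T^{\alpha}}$ and $t\leq T$, the second term satisfies
\begin{align}
\eta\lambda t\eta\hat{G}\leq\frac{\eta t}{2T^{\alpha}}\leq\frac{\eta}{2}. \nonumber
\end{align}
The last inequality uses $\alpha>1$ and $T\geq1$. We want $\frac{\eta}{2}\leq\eta\hat{G}$, which holds since $\hat{G}\geq1$. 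Hence $\|w_t-w_{t-1}\|\leq 2\eta\hat{G}$.

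The main obstacle is Step~1's uniform gradient bound. It requires reading Assumption~\ref{ass:g} as implying $\|\nabla f\|\leq G$, which holds because $f$ is differentiable by Assumption~\ref{ass:s1}. The remaining steps are routine bookkeeping, with care that the constants stay below $2\eta\hat{G}$.
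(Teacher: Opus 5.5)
Your proof is correct and follows essentially the same route as the paper: bound $\|m_t\|,\|c_t\|\leq G$ via the Lipschitz assumption, propagate $\|w_t\|\leq (t+1)\eta\hat{G}$ through the factor $(1-\eta\lambda)$, and then control the increment using $\lambda\leq\frac{1}{2\eta\hat{G}T^{\alpha}}$. Your single uniform bound $\|h(c_t)\|\leq\hat{G}$ is slightly cleaner than the paper's branch-by-branch unrolling, which tacitly treats every earlier step as taken in the same branch, while the triangle inequality in place of the paper's squared Young bound for the increment is an inessential difference.
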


\begin{lemma} \label{lem:3}
	Assume the sequence $\{c_t\}_{t=1}^T$ is generated from Algorithm~\ref{alg:2}, let $\|w_{0}\|\leq \eta\hat{G}$ and $\lambda \leq \frac{1}{2\eta\hat{G} T^{\alpha}}$ with $\alpha>1$, we have
	\begin{align}
		\frac{1}{T} \sum_{t=1}^T\E\|c_{t+1}-\nabla F(w_{t})\| \leq \frac{\sqrt{2(\sigma^2+G^2)}}{\sqrt{(1-\beta_2)T}}+ \frac{2\sqrt{2}L \hat{G}\eta}{1-\beta_2}  + \frac{\sqrt{2}|\beta_1-\beta_2|}{\sqrt{1-\beta_2}}\sigma+\frac{1-\beta_1}{\sqrt{1-\beta_2}}\sigma,
	\end{align}
	where $\hat{G}=\max(G,\sqrt{d})$.
\end{lemma}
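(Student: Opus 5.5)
The plan is to split $c_{t+1}-\nabla F(w_t)$ into a momentum-tracking error and a fresh-noise term. Then control the tracking error with the standard exponential-moving-average recursion, using Lemma~\ref{lem:2} to bound the drift $\|w_t-w_{t-1}\|\le 2\eta\hat{G}$. Set $e_t := m_t-\nabla F(w_t)$ and $\zeta_{t+1}:=g_{t+1}-\nabla F(w_t)$, where $g_{t+1}=\nabla f(w_t;\xi_{t+1})$.

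\textbf{Step 1 (decomposition).} From lines 6 and 12 of Algorithm~\ref{alg:2},
$$c_{t+1}-m_{t+1}=(\beta_1-\beta_2)(m_t-g_{t+1}).$$
Using $m_{t+1}=\beta_2 m_t+(1-\beta_2)g_{t+1}$, I will rewrite
$$c_{t+1}-\nabla F(w_t)=\beta_2 e_t+(1-\beta_2)\zeta_{t+1}+(\beta_1-\beta_2)\big(e_t-\zeta_{t+1}\big),$$
which is the same as $\beta_1 e_t+(1-\beta_1)\zeta_{t+1}$. I will keep both forms available. The first form isolates a $|\beta_1-\beta_2|$ piece and a $(1-\beta_1)\zeta$ piece, matching the last two terms of the target bound. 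Under Assumption~\ref{ass:v} we have $\E\|\zeta_{t+1}\|\le\sigma$, so the pieces multiplying $\zeta$ produce the $\sigma$-terms. The remaining work is to bound $\frac1T\sum_t\E\|e_t\|$.

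\textbf{Step 2 (recursion for $e_t$).} Write
$$e_t=\beta_2 e_{t-1}+\beta_2\big(\nabla F(w_{t-1})-\nabla F(w_t)\big)+(1-\beta_2)\zeta_t.$$
Unrolling gives three contributions. The first is the initial error $\beta_2^t e_0$, with $\|e_0\|=\|\nabla F(w_0)\|\le G$ by Assumption~\ref{ass:g} since $m_0=0$. The second is a drift sum $\sum_{s\le t}\beta_2^{t-s+1}(\nabla F(w_{s-1})-\nabla F(w_s))$. By $L$-smoothness and Lemma~\ref{lem:2}, each drift term is at most $2L\eta\hat{G}$, so the drift sum is at most $2L\eta\hat{G}/(1-\beta_2)$. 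The third is a martingale noise sum $(1-\beta_2)\sum_{s\le t}\beta_2^{t-s}\zeta_s$. Its expected squared norm is at most $(1-\beta_2)^2\sigma^2/(1-\beta_2^2)\le(1-\beta_2)\sigma^2$, because the $\zeta_s$ are conditionally mean-zero and the cross terms vanish.

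\textbf{Step 3 (averaging).} I will sum the squared-norm version over $t$, using $(a+b)^2\le 2a^2+2b^2$, which is where the $\sqrt2$ factors come from. I expect
$$\sum_t\E\|e_t\|^2\lesssim \frac{G^2+\sigma^2}{1-\beta_2}+T\Big(\frac{2L\eta\hat{G}}{1-\beta_2}\Big)^2+\dots$$
Then Jensen/Cauchy--Schwarz, $\frac1T\sum\E\|e_t\|\le(\frac1T\sum\E\|e_t\|^2)^{1/2}$, turns the initialization and the accumulated noise into the $\sqrt{2(\sigma^2+G^2)}/\sqrt{(1-\beta_2)T}$ term, and the drift into the $2\sqrt2L\hat{G}\eta/(1-\beta_2)$ term. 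Combining with Step 1 and the triangle inequality gives the claim. The noise mixed into $e_t$ and the $|\beta_1-\beta_2|$ coefficient yield the $\sqrt2|\beta_1-\beta_2|\sigma/\sqrt{1-\beta_2}$ term, and the $(1-\beta_1)\sigma/\sqrt{1-\beta_2}$ term bounds the fresh noise loosely.

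\textbf{Main obstacle.} The hard part is the bookkeeping in Step 3 that reproduces exactly the stated constants and the $\sqrt{1-\beta_2}$ denominators. In particular, it must be decided whether the stationary noise level $\sqrt{1-\beta_2}\,\sigma$ inside $e_t$ is absorbed into the $|\beta_1-\beta_2|$ term or into the $T^{-1/2}$ term. I will first try bounding the squared error with the $2a^2+2b^2$ split (separating drift from noise-plus-initialization) and taking square roots only at the end. If the constants do not match, I will fall back on the looser form, which still has the same order.
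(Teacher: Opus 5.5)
\textbf{Route.} Your Steps 1--2 take a different route from the paper, and a sounder one. The paper recurses directly from $c_t-\nabla F(w_{t-1})$ to $c_{t+1}-\nabla F(w_t)$. In its step (i) it drops the cross term with $(\beta_1-\beta_2)(g_t-\nabla F(w_{t-1}))$, but that term is not zero, because $c_t$ itself contains $(1-\beta_1)g_t$. Your identity $c_{t+1}-\nabla F(w_t)=\beta_1e_t+(1-\beta_1)\zeta_{t+1}$ avoids this: $e_t$ is a function of $\xi_1,\dots,\xi_t$ and $\zeta_{t+1}$ is a fresh martingale difference. One slip: $\zeta_t$ is centered at $\nabla F(w_{t-1})$, so the recursion is $e_t=\beta_2e_{t-1}+(1-\beta_2)\zeta_t+\nabla F(w_{t-1})-\nabla F(w_t)$, with drift coefficient $1$ rather than $\beta_2$. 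Your bound $2L\eta\hat{G}/(1-\beta_2)$ on the unrolled drift $D_t$ still holds.

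\textbf{The gap is Step 3.} Your displayed guess $\sum_t\E\|e_t\|^2\lesssim\frac{G^2+\sigma^2}{1-\beta_2}+\dots$ puts the noise in the wrong place. Since $m_0=0$, $e_0=-\nabla F(w_0)$ carries no $\sigma$. The martingale part $M_t=(1-\beta_2)\sum_{s\le t}\beta_2^{t-s}\zeta_s$ does not average out: $\E\|M_t\|^2$ stays of size $\frac{1-\beta_2}{1+\beta_2}\sigma^2$ for all large $t$. So it contributes order $\sqrt{1-\beta_2}\,\sigma$, not order $T^{-1/2}$. It also cannot go into the $|\beta_1-\beta_2|$ term, which vanishes at $\beta_1=\beta_2$.

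Worse, the bookkeeping you describe cannot reach the stated constants. You bound $\frac1T\sum_t\E\|e_t\|$ through second moments and then add the fresh noise by the triangle inequality. That gives a $\sigma$-coefficient of at least $\beta_1\sqrt{(1-\beta_2)/(1+\beta_2)}+(1-\beta_1)$, about $0.79$ at $\beta_1=\beta_2=\tfrac12$. With $L=0$ and $T\to\infty$, the stated right-hand side is only $\sqrt{1-\beta_2}\,\sigma\approx0.71\,\sigma$. So the plan does not prove the inequality as stated, and a ``same order'' fallback proves a different statement.

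\textbf{Fix.} Pull out only the drift by the triangle inequality, and keep all the noise under one square root using orthogonality. Unrolling gives $c_{t+1}-\nabla F(w_t)=\beta_1\beta_2^te_0+\beta_1D_t+\beta_1M_t+(1-\beta_1)\zeta_{t+1}$. Here $\beta_2^te_0$ is deterministic, $M_t$ and $\zeta_{t+1}$ have mean zero, and they are orthogonal since $\zeta_{t+1}$ is a fresh martingale difference. Hence
\begin{align*}
\E\|c_{t+1}-\nabla F(w_t)\|\le\frac{2L\eta\hat{G}}{1-\beta_2}+\sqrt{\beta_2^{2t}G^2+\Big(\beta_1^2\tfrac{1-\beta_2}{1+\beta_2}+(1-\beta_1)^2\Big)\sigma^2}.
\end{align*}
Average with Jensen and use $\sum_t\beta_2^{2t}\le\frac{1}{1-\beta_2}$. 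What remains is the elementary inequality $\beta_1^2\frac{1-\beta_2}{1+\beta_2}+(1-\beta_1)^2\le\frac{\max(1-\beta_1,1-\beta_2)^2}{1-\beta_2}$ (check the cases $\beta_1\ge\beta_2$ and $\beta_1<\beta_2$), together with $\max(1-\beta_1,1-\beta_2)\le(1-\beta_1)+|\beta_1-\beta_2|$.

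In other words, the stationary noise $\sqrt{1-\beta_2}\,\sigma$ is paid for by the \emph{sum} of the last two terms of the lemma. This gives the statement with room to spare: $G$ in place of $\sqrt{2(\sigma^2+G^2)}$, and $2$ in place of $2\sqrt2$.
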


\begin{theorem} \label{th:3}
	Assume the sequence $\{w_t\}_{t=1}^T$ is generated
	from Algorithm~\ref{alg:2}. Under the Assumptions~\ref{ass:s2},~\ref{ass:g},~\ref{ass:v},~\ref{ass:f}, and let $0<\lambda \leq \frac{1}{2\eta\hat{G} T^{\alpha}}$, $\eta=O(\frac{1}{T^{3/4}})$, $\beta_1=1-O(\frac{1}{\sqrt{T}})$, $\beta_2=1-O(\frac{1}{\sqrt{T}})$, $|\beta_1-\beta_2|=O(\frac{1}{\sqrt{T}})$ and $0<\nu_0 \leq \nu$,
	and further set $\alpha=\frac{5}{4}$ and $\nu_0 \geq O(\frac{1}{\sqrt{d}})$, we can obtain
	\begin{align}
		\frac{1}{T}\sum_{t=1}^T\E\|\nabla F(w_{t})\|_1 \leq O(\frac{\sqrt{d}}{T^{1/4}}).
	\end{align}
\end{theorem}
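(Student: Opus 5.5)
We would follow the standard descent-lemma route for sign-type methods. The argument controls the $\ell_1$-norm of $\nabla F(w_{t-1})$ through the inner product $\langle \nabla F(w_{t-1}), h(c_t)\rangle$, and pays for the mismatch between $c_t$ and the true gradient through Lemma~\ref{lem:3}.

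\textbf{Step 1: one-step descent.} By Assumption~\ref{ass:s2},
\begin{align}
F(w_t)\leq F(w_{t-1}) - \eta\langle \nabla F(w_{t-1}), h(c_t)+\lambda w_{t-1}\rangle + \frac{L}{2}\|w_t-w_{t-1}\|^2 .
\end{align}
The last term is at most $2L\eta^2\hat{G}^2$ by Lemma~\ref{lem:2}. For the weight-decay term we use $\|\nabla F\|\leq G$ (Assumption~\ref{ass:g}) and $\|w_{t-1}\|\leq t\eta\hat{G}$ from Lemma~\ref{lem:2}. Together with $\lambda\leq \frac{1}{2\eta\hat{G}T^{\alpha}}$, this gives
\begin{align}
\eta\lambda|\langle\nabla F,w_{t-1}\rangle|\leq \frac{G\eta}{2T^{\alpha-1}},
\end{align}
which is negligible once summed, since $\alpha=5/4$.

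\textbf{Step 2: the two branches of $h$.} We treat the two cases of Algorithm~\ref{alg:2} separately.

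In the \emph{sign branch} $h(c_t)=\sign(c_t)$, the usual coordinatewise argument gives
\begin{align}
-\langle \nabla F(w_{t-1}),\sign(c_t)\rangle \leq -\|\nabla F(w_{t-1})\|_1 + 2\|c_t-\nabla F(w_{t-1})\|_1 .
\end{align}
It holds coordinatewise: if the signs agree the term equals $-|\nabla_j F|$; otherwise $|\nabla_j F|\leq |(c_t)_j-\nabla_j F|$. A zero coordinate of $c_t$ also satisfies it.

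In the \emph{identity branch} some nonzero coordinate has $|(c_t)_j|<\nu$. We write
\begin{align}
-\langle \nabla F, c_t\rangle = -\|\nabla F\|^2 - \langle \nabla F, c_t-\nabla F\rangle .
\end{align}
We then need a lower bound of the form $\|\nabla F\|^2\geq \nu_0'\|\nabla F\|_1 - (\text{error})$. This is where $\nu_0$ enters. We intend to use $\|\nabla F\|_1\leq \sqrt{d}\|\nabla F\|$ together with $\|\nabla F\|^2\geq \nu_0\|\nabla F\|-\nu_0^2/4$ (or a case split on whether $\|\nabla F\|\geq \nu_0$). This yields $-\langle\nabla F,c_t\rangle\leq -\frac{\nu_0}{\sqrt d}\|\nabla F\|_1 + O(\nu_0^2)+ G\|c_t-\nabla F\|$.

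Choosing $\nu_0\geq O(1/\sqrt d)$ makes the coefficient $\nu_0/\sqrt d$ comparable, up to a constant, with the scale used elsewhere. The leftover $\nu_0^2$ term is the delicate piece. We expect to absorb it by noting that the branch is entered only when $c_t$ is small, so that $\|\nabla F\|$ is itself controlled by $\sqrt d\,\nu+\|c_t-\nabla F\|$. Making this case analysis produce a clean inequality uniform in both branches, with constants that do not destroy the $T^{-1/4}$ rate, is the main obstacle.

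\textbf{Step 3: telescope and tune.} In both branches we get
\begin{align}
\eta\, C\,\|\nabla F(w_{t-1})\|_1\leq F(w_{t-1})-F(w_t) + 2\eta\sqrt d\,\|c_t-\nabla F(w_{t-1})\| + O(\eta^2 d) + O(\eta T^{1-\alpha}),
\end{align}
using $\|\cdot\|_1\leq\sqrt d\|\cdot\|$. Summing over $t$, dividing by $\eta T$, and invoking Assumption~\ref{ass:f} gives
\begin{align}
\frac1T\sum_t\E\|\nabla F(w_t)\|_1\lesssim \frac{F(w_0)-F^*}{\eta T}+\sqrt d\cdot\frac1T\sum_t\E\|c_{t+1}-\nabla F(w_t)\| + \eta d+T^{1-\alpha}.
\end{align}
The index shift is handled exactly as in Lemma~\ref{lem:3}.

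We plug in Lemma~\ref{lem:3} with $1-\beta_2=\Theta(T^{-1/2})$, $1-\beta_1=O(T^{-1/2})$ and $|\beta_1-\beta_2|=O(T^{-1/2})$. Its four terms are then $O(T^{-1/4})$, $O(\sqrt d\,\eta T^{1/2})=O(\sqrt d\,T^{-1/4})$, $O(T^{-1/4})$ and $O(T^{-1/4})$. With $\eta=\Theta(T^{-3/4})$ we have $\frac{1}{\eta T}=T^{-1/4}$, and $T^{1-\alpha}=T^{-1/4}$. Collecting terms, with the $d$-dependence of the middle term absorbed under the stated choices, gives the claimed $O(\sqrt d/T^{1/4})$.
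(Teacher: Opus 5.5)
\textbf{There is a genuine gap: the identity branch.} Your sign branch is fine, and so is your handling of the weight-decay and smoothness terms. The coordinatewise bound plus $\|\cdot\|_1\le\sqrt d\|\cdot\|$ gives exactly the $2\sqrt d\,\eta\|c_t-\nabla F(w_{t-1})\|$ that the paper gets from Lemma~\ref{lem:s1}. The identity branch, however, is the step you yourself flag as the main obstacle, and your route cannot close it.

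Lower-bounding $\|\nabla F\|^2\ge\nu_0\|\nabla F\|-\nu_0^2/4$ leaves an additive $\eta\nu_0^2/4$ in every identity step. After telescoping and dividing by $\eta T\nu_0/\sqrt d$, this becomes the $T$-independent term $\sqrt d\,\nu_0/4$, which is at least $1/4$ when $\nu_0\ge 1/\sqrt d$. Step~3 simply drops it. Treating $\nu_0$ in that elementary inequality as a free parameter and balancing does not rescue the rate: with $\eta=\Theta(T^{-3/4})$ you only reach $T^{-1/8}$. That is just the $\ell_2^2$ descent of momentum SGD converted to $\ell_1$.

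Your proposed fix also fails. The identity branch is entered when the \emph{smallest nonzero} coordinate of $c_t$ is below $\nu$, not when $c_t$ is small. So nothing bounds $\|\nabla F(w_{t-1})\|$ by $\sqrt d\,\nu+\|c_t-\nabla F(w_{t-1})\|$, and even if something did, $\sqrt d\,\nu$ does not decay in $T$. Separately, routing through $\|\nabla F\|_1\le\sqrt d\|\nabla F\|$ makes the coefficient $\nu_0/\sqrt d\approx 1/d$, not a constant, which costs an extra $\sqrt d$.

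\textbf{The missing idea.} Apply the $\ell_2^2$-versus-$\ell_1$ comparison to $c_t$ rather than to $\nabla F$, where it holds with no additive error:
\[
\|c_t\|^2=\sum_{j\in S_t}|(c_t)_j|^2\ge\big(\min_{j\in S_t}|(c_t)_j|\big)\|c_t\|_1 .
\]
The hypothesis on $\nu_0$ in the statement is precisely a uniform lower bound $\nu_0\|c_t\|_1\le\|c_t\|^2$ on identity-branch steps. Then split with Young's inequality:
\[
-\eta\langle\nabla F(w_{t-1}),c_t\rangle\le-\tfrac{\eta}{2}\|c_t\|^2+\tfrac{\eta}{2}\|c_t-\nabla F(w_{t-1})\|^2 .
\]
The paper reaches the analogous $-\frac{\eta}{8}\|c_t\|^2$ via the optimality condition of the proximal form of the step. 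Next use
\[
-\|c_t\|^2\le-\nu_0\|c_t\|_1\le-\nu_0\|\nabla F(w_{t-1})\|_1+\nu_0\sqrt d\,\|c_t-\nabla F(w_{t-1})\| .
\]

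This gives descent that is linear in $\|\nabla F\|_1$, with coefficient $\eta\nu_0/2$, and every error term vanishes with $T$. The new term, $\frac{1}{\nu_0}\|c_t-\nabla F(w_{t-1})\|^2$ after division, is controlled by the bound on $\frac1T\sum_t\E\|c_{t+1}-\nabla F(w_t)\|^2$ proved inside Lemma~\ref{lem:3}. Under your parameter choices that bound is $O(T^{-1/2})$ up to $\hat G^2$. Finally, dividing $\frac{F(w_0)-F^*}{\eta T}$ by $\nu_0\ge 1/\sqrt d$ is what produces the $\sqrt d\,T^{-1/4}$.
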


\begin{remark}
	From the above Theorem~\ref{th:3}, our CLion has a fast convergence rate of $O(\frac{\sqrt{d}}{T^{1/4}})$ on $\ell_1$-norm of gradient, which
	has the same convergence rate as the Lion optimizer~\citep{dong2024convergence,jiang2025convergence}.
\end{remark}

\begin{figure}[ht]
	\centering
	\subfigure[Train Loss]{\includegraphics[width=0.24\textwidth]{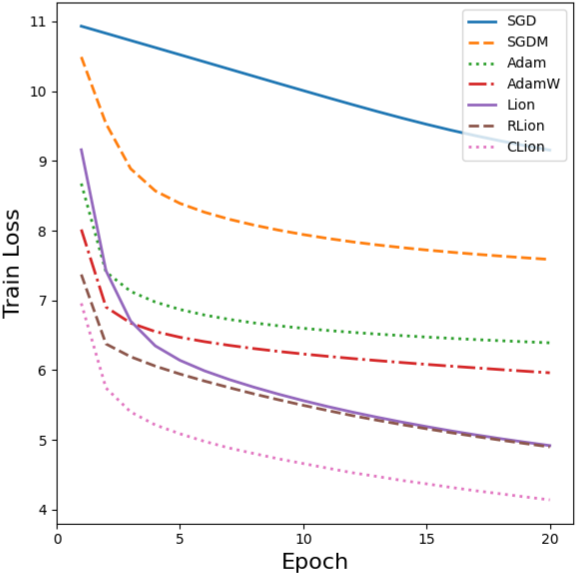}}
	\hfill
	\subfigure[Train Perplexity]{\includegraphics[width=0.24\textwidth]{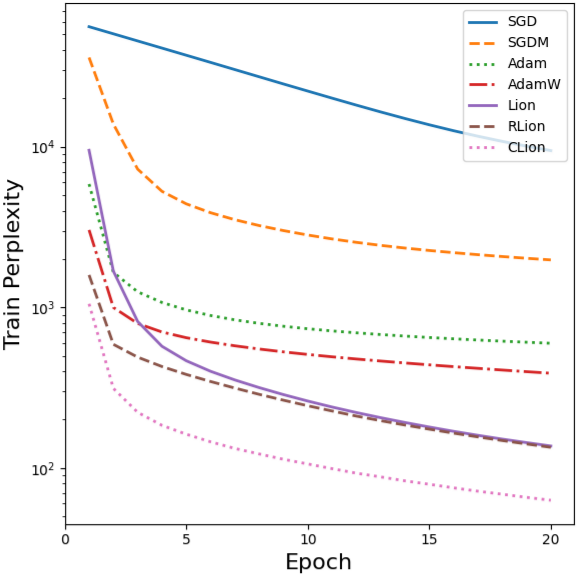}}
	\hfill
	\subfigure[Test Loss]{\includegraphics[width=0.24\textwidth]{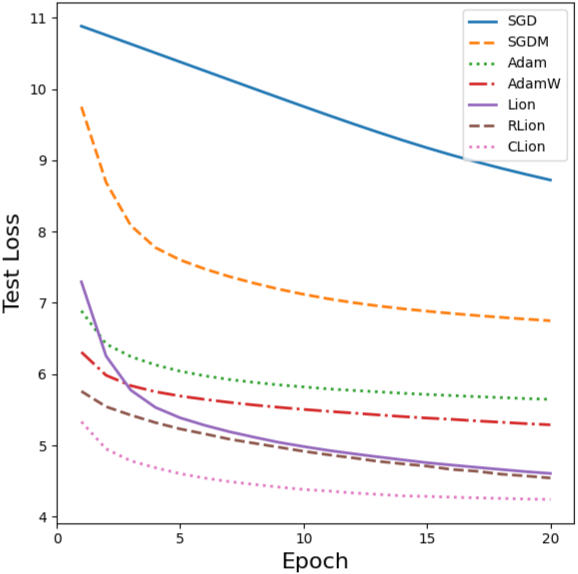}}
	\hfill
	\subfigure[Test Perplexity]{\includegraphics[width=0.24\textwidth]{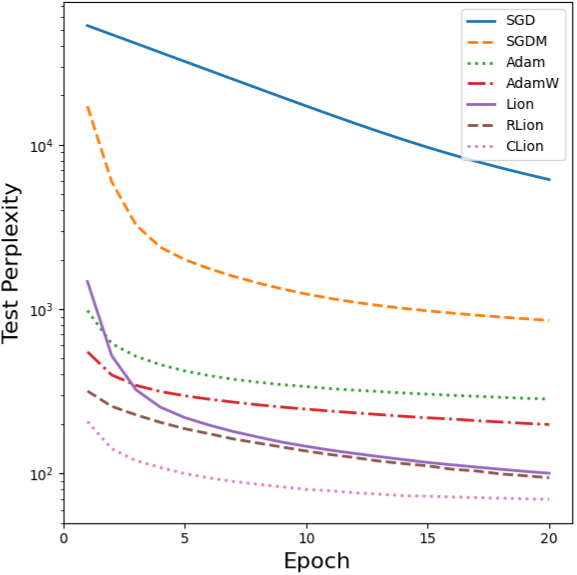}}
	\hfill
	\caption{Language modeling at \emph{Wikitext-2} dataset.}
	\label{fig:2}
\end{figure}

\begin{figure}[ht]
	\centering
	\subfigure[Train Loss]{\includegraphics[width=0.24\textwidth]{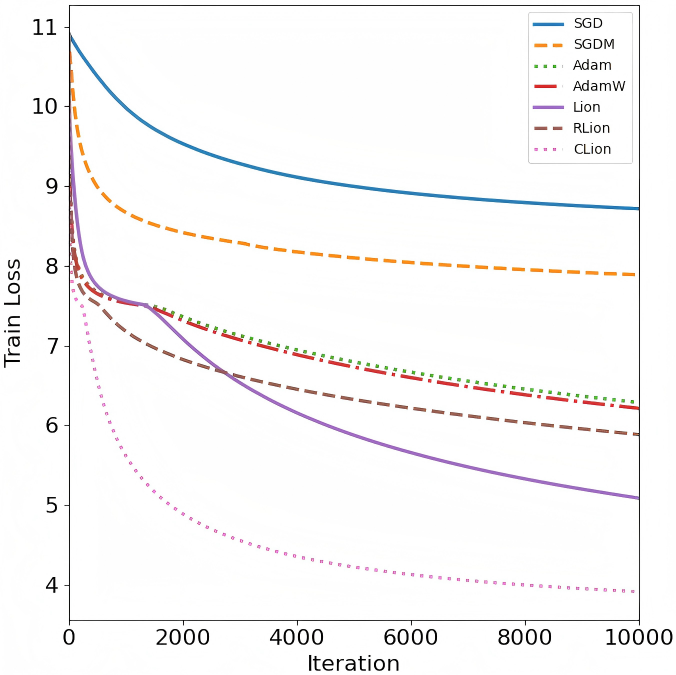}}
	\hfill
	\subfigure[Train Perplexity]{\includegraphics[width=0.24\textwidth]{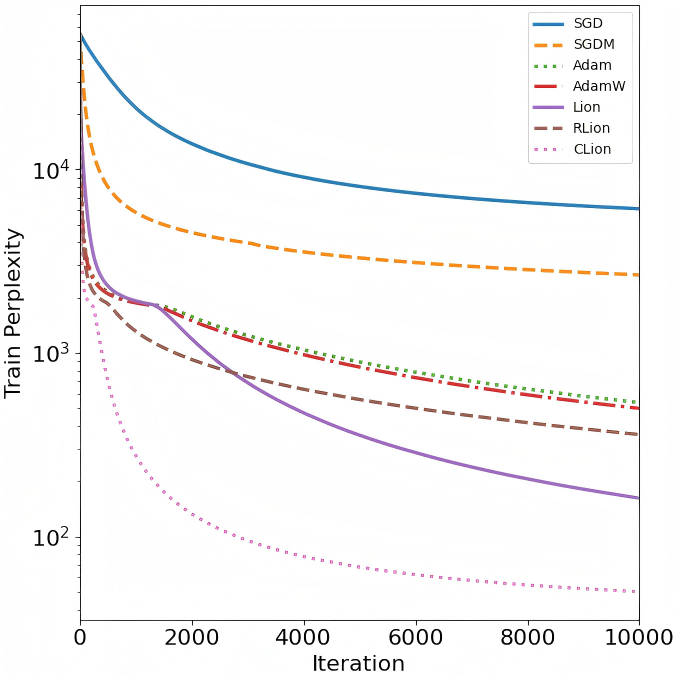}}
	\hfill
	\subfigure[Test Loss]{\includegraphics[width=0.24\textwidth]{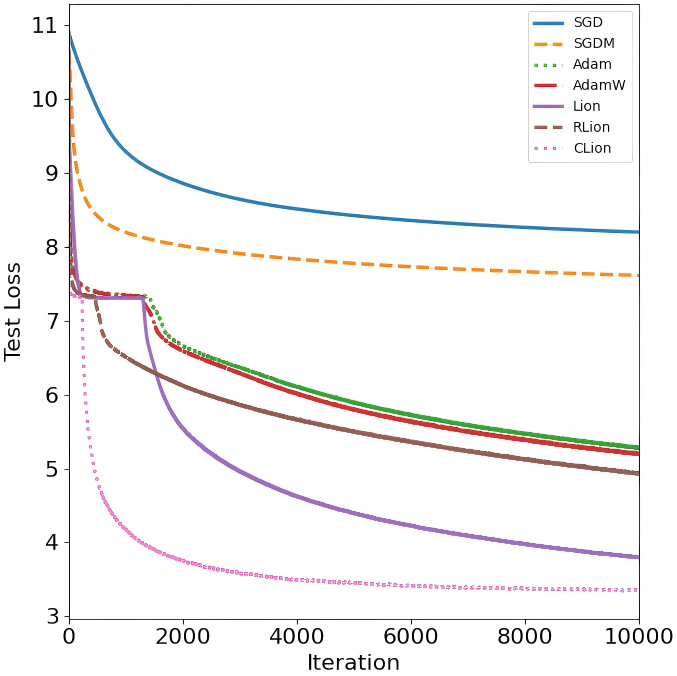}}
	\hfill
	\subfigure[Test Perplexity]{\includegraphics[width=0.24\textwidth]{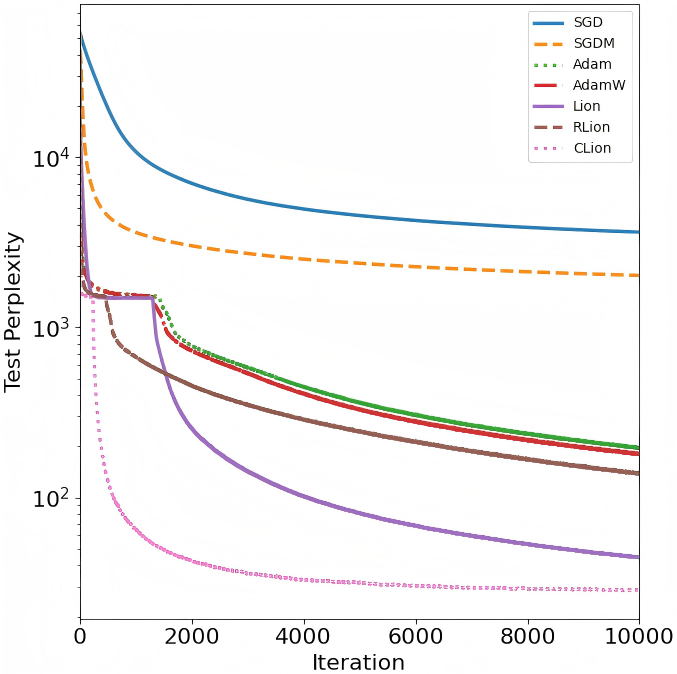}}
	\hfill
	\caption{Language modeling at \emph{Wikitext-103} dataset.}
	\label{fig:3}
\end{figure}

\section{Numerical Experiments}
In the section, we conduct some numerical experiments to demonstrate efficiency of our CLion optimizer
on image classification and language modeling tasks. In the experiment, we compare our CLion optimizer with
some representative optimizers including the SGD, SGDM, Adam~\citep{kingma2014adam},  AdamW~\citep{loshchilov2017decoupled}, Lion~\citep{chen2023symbolic} and RLion~\citep{rong2025refined}.

\subsection{Language Modeling }
In this experiment, given some training samples $\{z^i\}_{i=1}^N$, we conduct language modeling task by solving the following nonconvex problem
\begin{align}
	\min_{w \in \mathbb{R}^d} -\frac{1}{N} \sum_{i=1}^{N} \sum_{t=1}^{m_i} \log \big(p(z^i_{t} | z^i_{1:t-1}; w)\big),
\end{align}
where each sample $z^i$ includes $m_i$ tokens,
and $p(z^i_{t} | z^i_{1:t-1}; w)$ denotes a probability function
of token $z^i_{t}$ given the tokens $z^i_{1:t-1}$, and $w \in \mathbb{R}^d$ denotes parameters of
the language model.

In the experiment, we first evaluate the language modeling task
on the WikiText-2~\citep{merity2016pointer} dataset.
This language model is modeled as a 7-layer Transformer~\citep{vaswani2017attention} encoder with 768-dimensional embeddings and 8 attention heads per layer, which employs a feed-forward network dimension of 1024 and uses sinusoidal positional encodings. Meanwhile, it uses a dropout rate of 0.1 throughout the network. The final output layer projects the representations back to vocabulary size for token prediction.
Then we evaluate the language modeling task
on the WikiText-103~\citep{merity2016pointer} dataset.
This language model is modeled as a 21-layer Transformer encoder with 768-dimensional embeddings and 8 attention heads per layer, which employs a feed-forward network dimension of 2048 and uses sinusoidal positional encodings. Meanwhile, it uses a dropout rate of 0.15 throughout the network.

 In the experiment, for all hyper-parameters, we do grid search and report the best one for each optimizer. 
When training 7-layer Transformer model at WikiText2 dataset, we set batch size be 10 for all algorithms.
We set the learning rate $2\times10^{-5}$ for SGD and SGDM,
and set momentum parameter $\beta=0.9$ for SGDM. Adam and AdamW use
the basic learning rate $2\times10^{-5}$, $\varepsilon=10^{-8}$,
the first-order momentum parameter $\beta_1=0.9$, and
the second-order momentum parameter $\beta_2=0.999$. Meanwhile, AdamW uses the weight decay parameter
$\lambda=10^{-8}$.
We set the learning rate $2\times 10^{-6}$, $\beta_1=0.9$, $\beta_2=0.99$ and $\lambda=10^{-8}$ for the Lion.
RLion uses the learning rate $2\times 10^{-5}$, $\beta_1=0.9$, $\beta_2=0.99$, $\lambda=10^{-8}$ and $\alpha=10^4$.
Our CLion uses the learning rate $2\times 10^{-5}$, $\beta_1=0.9$,
$\beta_2=0.99$, $\lambda=10^{-8}$ and the threshold $\nu=10^{-13}$.

When training 21-layer Transformer model at WikiText103 dataset,
we set the batch size be 10 for all algorithms.
We set the learning rate $2\times10^{-5}$ for SGD and SGDM,
and set momentum parameter $\beta=0.9$ for SGDM. Adam and AdamW use
the basic learning rate $2\times10^{-5}$, $\varepsilon=10^{-8}$,
the first-order momentum parameter $\beta_1=0.9$, and
the second-order momentum parameter $\beta_2=0.99$. Meanwhile, AdamW uses the weight decay parameter
$\lambda=10^{-6}$.
We set the learning rate $2\times 10^{-5}$, $\beta_1=0.9$, $\beta_2=0.99$ and $\lambda=10^{-9}$ for the Lion.
RLion uses the learning rate $2\times 10^{-5}$, $\beta_1=0.9$, $\beta_2=0.99$, $\lambda=10^{-9}$ and $\alpha=10^4$.
Our CLion uses the learning rate $2\times 10^{-5}$, $\beta_1=0.9$,
$\beta_2=0.99$, $\lambda=10^{-4}$ and the threshold $\nu=10^{-15}$.

Figures~\ref{fig:2} and~\ref{fig:3} show that our CLion optimizer outperforms other optimizers such as Lion and RLion on training and test errors, which demonstrate efficiency of our CLion optimizer.
Meanwhile, these results also verify that
our CLion optimizer has better generalization than the Lion.

\begin{figure}[ht]
	\centering
	\subfigure[Train Loss]{\includegraphics[width=0.24\textwidth]{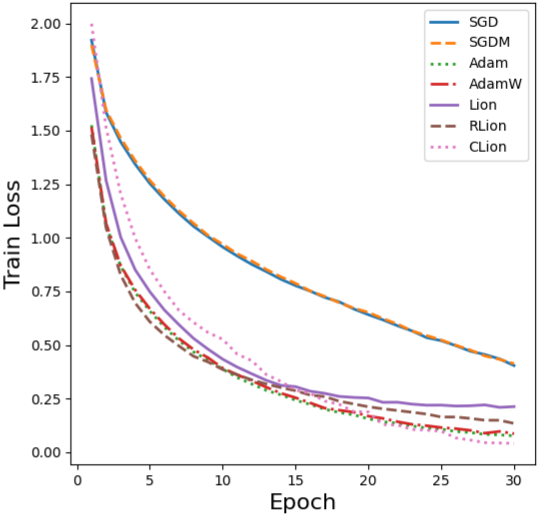}}
	\hfill
	\subfigure[Train Accuracy]{\includegraphics[width=0.24\textwidth]{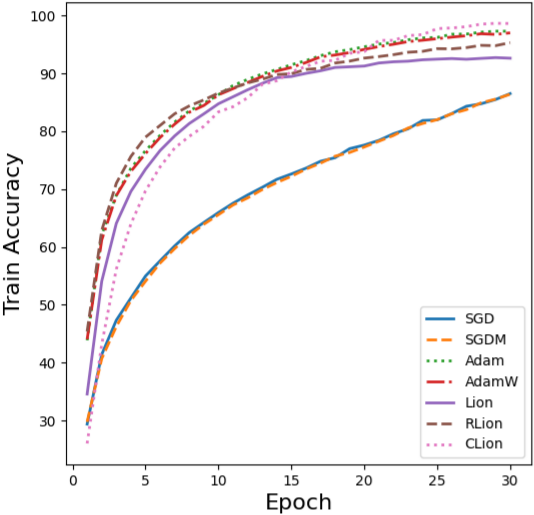}}
	\hfill
	\subfigure[Test Loss]{\includegraphics[width=0.24\textwidth]{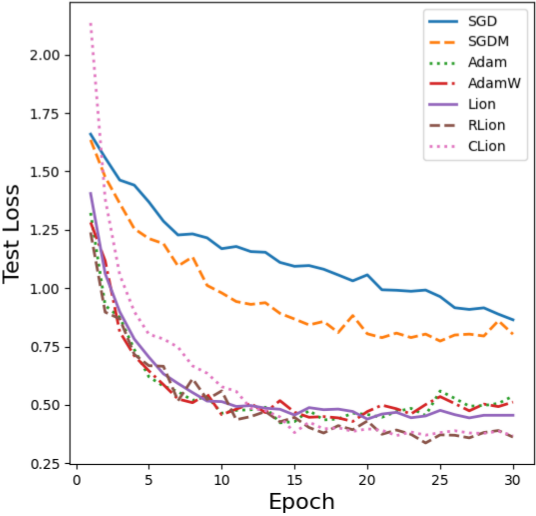}}
	\hfill
	\subfigure[Test Accuracy]{\includegraphics[width=0.24\textwidth]{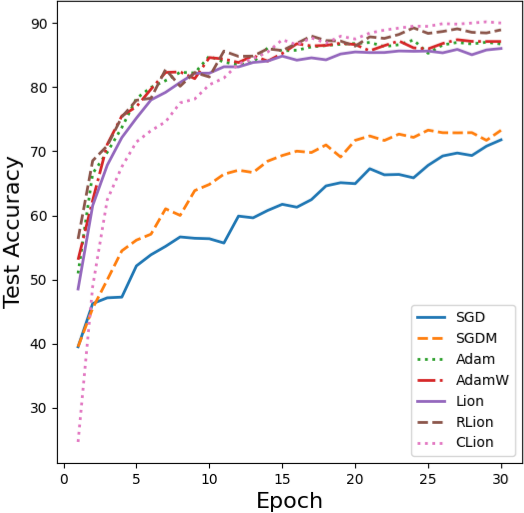}}
	\hfill
	\caption{Image classification at \emph{Cifar-10} dataset.}
	\label{fig:4}
\end{figure}

\begin{figure}[ht]
	\centering
	\subfigure[Train Loss]{\includegraphics[width=0.24\textwidth]{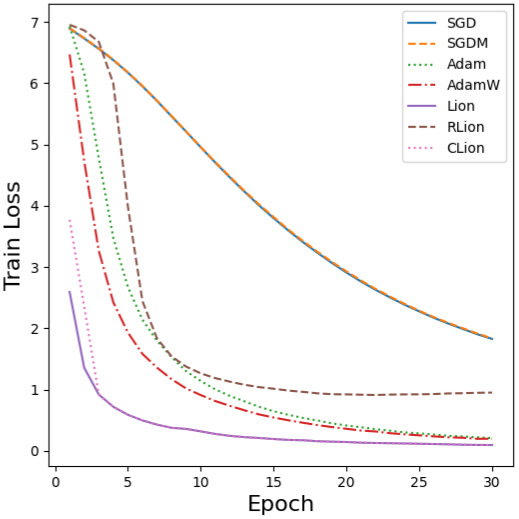}}
	\hfill
	\subfigure[Train Accuracy]{\includegraphics[width=0.24\textwidth]{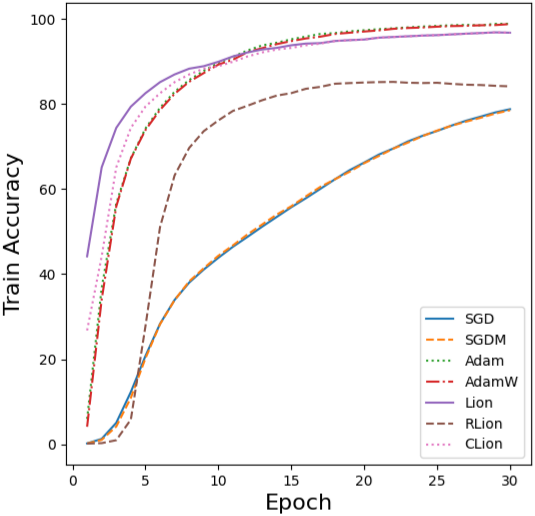}}
	\hfill
	\subfigure[Test Loss]{\includegraphics[width=0.24\textwidth]{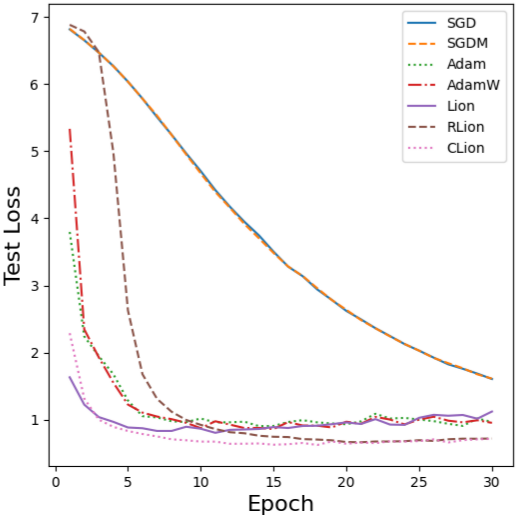}}
	\hfill
	\subfigure[Test Accuracy]{\includegraphics[width=0.24\textwidth]{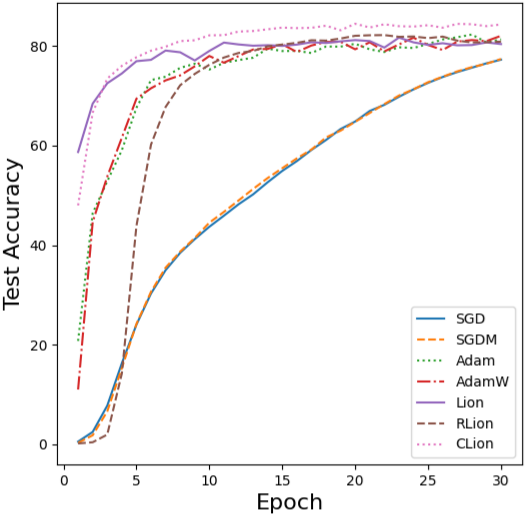}}
	\hfill
	\caption{Image classification at \emph{tiny-ImageNet} dataset.}
	\label{fig:5}
\end{figure}

\subsection{Image Classification}
In this experiment, we train two deep learning models
to image classification.
Given training samples $\{x_i,y_i\}_{i=1}^N$, where $x_i$ denotes features and
$y_i$ denotes label, we train deep learning model by solving the following problem
\begin{align}
	\min_{w\in \mathbb{R}^d} \frac{1}{N}\sum_{i=1}^N \ell \big(\chi(x_i;w),y_i\big),
\end{align}
where $\chi(\cdot;w)$ represents deep learning model, and
$\ell(\cdot,\cdot)$ denotes a cross-entropy loss function.

In the experiment, we first evaluate ResNet18~\citep{he2016deep}
on the Cifar-10~\citep{krizhevsky2009learning} dataset, where
the training and test datesets contain 50000 and 10000 samples, respectively.
Then we evaluate the ResNet34~\citep{he2016deep} on tiny-ImageNet~\citep{le2015tiny} dataset, where
the training and test datesets contain 80000 and 20000 samples, respectively.

In the experiment, for all hyper-parameters, we do grid search and report the best one for each optimizer. 
When training Resnet18 at CIFAR-10 dataset, we set the batch size be 64 for all algorithms.
We set the learning rate $3\times10^{-4}$ for SGD, and set
the learning rate $10^{-4}$ and momentum parameter $\beta=0.9$ for SGDM.
Adam and AdamW
use the basic learning rate $10^{-4}$, the tuning parameter $\varepsilon=10^{-8}$, the first-order momentum parameter $\beta_1=0.9$, and
the second-order momentum parameter $\beta_2=0.99$. Meanwhile, AdamW uses the weight decay parameter
$\lambda=10^{-6}$.
We set the learning rate $10^{-4}$, $\beta_1=0.9$, $\beta_2=0.99$ and $\lambda=10^{-2}$ for the Lion.
RLion uses the learning rate $10^{-4}$, $\beta_1=0.9$, $\beta_2=0.99$, $\lambda=10^{-2}$ and $\alpha=10$.
Our CLion uses the learning rate $10^{-4}$, $\beta_1=0.9$,
$\beta_2=0.99$, $\lambda=10^{-2}$ and the threshold $\nu=10^{-6}$.

When training ResNet34 at Tiny-ImageNet dataset,
we set the batch size be 64 for all algorithms.
We set the learning rate $10^{-3}$ for SGD, and set
the learning rate $10^{-3}$ and momentum parameter $\beta=0.9$ for SGDM.
Adam and AdamW
use the basic learning rate $10^{-3}$, the tuning parameter $\varepsilon=10^{-8}$, the first-order momentum parameter $\beta_1=0.9$, and
the second-order momentum parameter $\beta_2=0.99$. Meanwhile, AdamW uses the weight decay parameter
$\lambda=10^{-6}$.
We set the learning rate $10^{-4}$, $\beta_1=0.9$, $\beta_2=0.99$ and $\lambda=10^{-2}$ for the Lion.
RLion uses the learning rate $10^{-3}$, $\beta_1=0.9$, $\beta_2=0.99$, $\lambda=10^{-2}$ and $\alpha=10$.
Our CLion uses the learning rate $10^{-3}$, $\beta_1=0.9$,
$\beta_2=0.99$, $\lambda=10^{-6}$ and the threshold $\nu=10^{-8}$.

From Figures~\ref{fig:4} and~\ref{fig:5}, we could find that our CLion optimizer basically outperforms other optimizers such as Lion and RLion on test accuracy. These results also verify that
our CLion optimizer has a good generalization.

\section{Conclusions}
In this paper, we first studied generalization property of the Lion optimizer via algorithmic stability, and  discovered a useful finding that
the smallest absolute value of non-zero element in gradient estimators affects generalization error of the Lion optimizer. To improve generalization of the Lion, we proposed a novel efficient cautious Lion (i.e., CLion) optimizer by cautiously using sign function. Moreover, we proved that our CLion optimizer has a lower generalization error $O(\frac{1}{N})$ than $O(\frac{1}{N\tau^T})$ of the Lion, since the value $\tau$ generally is very small. Meanwhile, we proved that our CLion has a fast convergence rate as the Lion optimizer.
In addition, we obtain an interesting byproduct that the SignSGD algorithm has a generalization error of $O(\frac{1}{N\tau^T})$ under the nonconvex setting.

\small

\bibliographystyle{plainnat}

\bibliography{CLion}

\begin{thebibliography}{35}
\providecommand{\natexlab}[1]{#1}
\providecommand{\url}[1]{\texttt{#1}}
\expandafter\ifx\csname urlstyle\endcsname\relax
  \providecommand{\doi}[1]{doi: #1}\else
  \providecommand{\doi}{doi: \begingroup \urlstyle{rm}\Url}\fi

\bibitem[Bernstein et~al.(2018)Bernstein, Wang, Azizzadenesheli, and
  Anandkumar]{bernstein2018signsgd}
Jeremy Bernstein, Yu-Xiang Wang, Kamyar Azizzadenesheli, and Animashree
  Anandkumar.
\newblock signsgd: Compressed optimisation for non-convex problems.
\newblock In \emph{International conference on machine learning}, pages
  560--569. PMLR, 2018.

\bibitem[Bottou et~al.(2018)Bottou, Curtis, and
  Nocedal]{bottou2018optimization}
L{\'e}on Bottou, Frank~E Curtis, and Jorge Nocedal.
\newblock Optimization methods for large-scale machine learning.
\newblock \emph{SIAM review}, 60\penalty0 (2):\penalty0 223--311, 2018.

\bibitem[Chen et~al.(2023{\natexlab{a}})Chen, Liu, Liang, and
  Liu]{chen2023lion}
Lizhang Chen, Bo~Liu, Kaizhao Liang, and Qiang Liu.
\newblock Lion secretly solves constrained optimization: As lyapunov predicts.
\newblock \emph{arXiv preprint arXiv:2310.05898}, 2023{\natexlab{a}}.

\bibitem[Chen et~al.(2023{\natexlab{b}})Chen, Liang, Huang, Real, Wang, Pham,
  Dong, Luong, Hsieh, Lu, et~al.]{chen2023symbolic}
Xiangning Chen, Chen Liang, Da~Huang, Esteban Real, Kaiyuan Wang, Hieu Pham,
  Xuanyi Dong, Thang Luong, Cho-Jui Hsieh, Yifeng Lu, et~al.
\newblock Symbolic discovery of optimization algorithms.
\newblock \emph{Advances in neural information processing systems},
  36:\penalty0 49205--49233, 2023{\natexlab{b}}.

\bibitem[Chen et~al.(2022)Chen, Chen, Cheng, Chen, Awadallah, and
  Wang]{chen2022scalable}
Xuxi Chen, Tianlong Chen, Yu~Cheng, Weizhu Chen, Ahmed Awadallah, and Zhangyang
  Wang.
\newblock Scalable learning to optimize: A learned optimizer can train big
  models.
\newblock In \emph{European Conference on Computer Vision}, pages 389--405.
  Springer, 2022.

\bibitem[Cutkosky and Orabona(2019)]{cutkosky2019momentum}
Ashok Cutkosky and Francesco Orabona.
\newblock Momentum-based variance reduction in non-convex sgd.
\newblock \emph{Advances in neural information processing systems}, 32, 2019.

\bibitem[Dong et~al.(2024)Dong, Li, and Lin]{dong2024convergence}
Yiming Dong, Huan Li, and Zhouchen Lin.
\newblock Convergence rate analysis of lion.
\newblock \emph{arXiv preprint arXiv:2411.07724}, 2024.

\bibitem[Frank et~al.(1956)Frank, Wolfe, et~al.]{frank1956algorithm}
Marguerite Frank, Philip Wolfe, et~al.
\newblock An algorithm for quadratic programming.
\newblock \emph{Naval research logistics quarterly}, 3\penalty0 (1-2):\penalty0
  95--110, 1956.

\bibitem[Ghadimi and Lan(2013)]{ghadimi2013stochastic}
Saeed Ghadimi and Guanghui Lan.
\newblock Stochastic first-and zeroth-order methods for nonconvex stochastic
  programming.
\newblock \emph{SIAM journal on optimization}, 23\penalty0 (4):\penalty0
  2341--2368, 2013.

\bibitem[Hardt et~al.(2016)Hardt, Recht, and Singer]{hardt2016train}
Moritz Hardt, Ben Recht, and Yoram Singer.
\newblock Train faster, generalize better: Stability of stochastic gradient
  descent.
\newblock In \emph{International conference on machine learning}, pages
  1225--1234. PMLR, 2016.

\bibitem[Harrison et~al.(2022)Harrison, Metz, and
  Sohl-Dickstein]{harrison2022closer}
James Harrison, Luke Metz, and Jascha Sohl-Dickstein.
\newblock A closer look at learned optimization: Stability, robustness, and
  inductive biases.
\newblock \emph{Advances in neural information processing systems},
  35:\penalty0 3758--3773, 2022.

\bibitem[He et~al.(2016)He, Zhang, Ren, and Sun]{he2016deep}
Kaiming He, Xiangyu Zhang, Shaoqing Ren, and Jian Sun.
\newblock Deep residual learning for image recognition.
\newblock In \emph{Proceedings of the IEEE conference on computer vision and
  pattern recognition}, pages 770--778, 2016.

\bibitem[Jiang and Zhang(2025)]{jiang2025convergence}
Wei Jiang and Lijun Zhang.
\newblock Convergence analysis of the lion optimizer in centralized and
  distributed settings.
\newblock \emph{arXiv preprint arXiv:2508.12327}, 2025.

\bibitem[Kingma and Ba(2014)]{kingma2014adam}
Diederik~P Kingma and Jimmy Ba.
\newblock Adam: A method for stochastic optimization.
\newblock \emph{arXiv preprint arXiv:1412.6980}, 2014.

\bibitem[Krizhevsky et~al.(2009)Krizhevsky, Hinton,
  et~al.]{krizhevsky2009learning}
Alex Krizhevsky, Geoffrey Hinton, et~al.
\newblock Learning multiple layers of features from tiny images.
\newblock 2009.

\bibitem[Le and Yang(2015)]{le2015tiny}
Ya~Le and Xuan Yang.
\newblock Tiny imagenet visual recognition challenge.
\newblock \emph{CS 231N}, 7\penalty0 (7):\penalty0 3, 2015.

\bibitem[Lei(2023)]{lei2023stability}
Yunwen Lei.
\newblock Stability and generalization of stochastic optimization with
  nonconvex and nonsmooth problems.
\newblock In \emph{The Thirty Sixth Annual Conference on Learning Theory},
  pages 191--227. PMLR, 2023.

\bibitem[Lei and Ying(2020)]{lei2020fine}
Yunwen Lei and Yiming Ying.
\newblock Fine-grained analysis of stability and generalization for stochastic
  gradient descent.
\newblock In \emph{International Conference on Machine Learning}, pages
  5809--5819. PMLR, 2020.

\bibitem[Liu et~al.(2024)Liu, Wu, Chen, Liang, Zhu, Liang, Krishnamoorthi, and
  Liu]{liu2024communication}
Bo~Liu, Lemeng Wu, Lizhang Chen, Kaizhao Liang, Jiaxu Zhu, Chen Liang,
  Raghuraman Krishnamoorthi, and Qiang Liu.
\newblock Communication efficient distributed training with distributed lion.
\newblock \emph{Advances in Neural Information Processing Systems},
  37:\penalty0 18388--18415, 2024.

\bibitem[Loshchilov and Hutter(2017)]{loshchilov2017decoupled}
Ilya Loshchilov and Frank Hutter.
\newblock Decoupled weight decay regularization.
\newblock \emph{arXiv preprint arXiv:1711.05101}, 2017.

\bibitem[Merity et~al.(2016)Merity, Xiong, Bradbury, and
  Socher]{merity2016pointer}
Stephen Merity, Caiming Xiong, James Bradbury, and Richard Socher.
\newblock Pointer sentinel mixture models.
\newblock \emph{arXiv preprint arXiv:1609.07843}, 2016.

\bibitem[Nesterov et~al.(2018)]{nesterov2018lectures}
Yurii Nesterov et~al.
\newblock \emph{Lectures on convex optimization}, volume 137.
\newblock Springer, 2018.

\bibitem[Ramezani-Kebrya et~al.(2024)Ramezani-Kebrya, Antonakopoulos, Cevher,
  Khisti, and Liang]{ramezani2024generalization}
Ali Ramezani-Kebrya, Kimon Antonakopoulos, Volkan Cevher, Ashish Khisti, and
  Ben Liang.
\newblock On the generalization of stochastic gradient descent with momentum.
\newblock \emph{Journal of Machine Learning Research}, 25\penalty0
  (22):\penalty0 1--56, 2024.

\bibitem[Reddi et~al.(2016)Reddi, Sra, P{\'o}czos, and
  Smola]{reddi2016stochastic}
Sashank~J Reddi, Suvrit Sra, Barnab{\'a}s P{\'o}czos, and Alex Smola.
\newblock Stochastic frank-wolfe methods for nonconvex optimization.
\newblock In \emph{2016 54th annual Allerton conference on communication,
  control, and computing (Allerton)}, pages 1244--1251. IEEE, 2016.

\bibitem[Robbins and Monro(1951)]{robbins1951stochastic}
Herbert Robbins and Sutton Monro.
\newblock A stochastic approximation method.
\newblock \emph{The annals of mathematical statistics}, pages 400--407, 1951.

\bibitem[Rong et~al.(2025)Rong, Ma, Zhang, Cao, and Kou]{rong2025refined}
Jian Rong, Chenhao Ma, Qinghui Zhang, Yong Cao, and Weili Kou.
\newblock A refined lion optimizer for deep learning.
\newblock \emph{Scientific Reports}, 15\penalty0 (1):\penalty0 23082, 2025.

\bibitem[Sfyraki and Wang(2025)]{sfyraki2025lions}
Maria-Eleni Sfyraki and Jun-Kun Wang.
\newblock Lions and muons: Optimization via stochastic frank-wolfe.
\newblock \emph{arXiv preprint arXiv:2506.04192}, 2025.

\bibitem[Shalev-Shwartz et~al.(2010)Shalev-Shwartz, Shamir, Srebro, and
  Sridharan]{shalev2010learnability}
Shai Shalev-Shwartz, Ohad Shamir, Nathan Srebro, and Karthik Sridharan.
\newblock Learnability, stability and uniform convergence.
\newblock \emph{The Journal of Machine Learning Research}, 11:\penalty0
  2635--2670, 2010.

\bibitem[Sutskever et~al.(2013)Sutskever, Martens, Dahl, and
  Hinton]{sutskever2013importance}
Ilya Sutskever, James Martens, George Dahl, and Geoffrey Hinton.
\newblock On the importance of initialization and momentum in deep learning.
\newblock In \emph{International conference on machine learning}, pages
  1139--1147. pmlr, 2013.

\bibitem[Tran-Dinh et~al.(2022)Tran-Dinh, Pham, Phan, and
  Nguyen]{tran2022hybrid}
Quoc Tran-Dinh, Nhan~H Pham, Dzung~T Phan, and Lam~M Nguyen.
\newblock A hybrid stochastic optimization framework for composite nonconvex
  optimization.
\newblock \emph{Mathematical Programming}, 191\penalty0 (2):\penalty0
  1005--1071, 2022.

\bibitem[Vaswani et~al.(2017)Vaswani, Shazeer, Parmar, Uszkoreit, Jones, Gomez,
  Kaiser, and Polosukhin]{vaswani2017attention}
Ashish Vaswani, Noam Shazeer, Niki Parmar, Jakob Uszkoreit, Llion Jones,
  Aidan~N Gomez, {\L}ukasz Kaiser, and Illia Polosukhin.
\newblock Attention is all you need.
\newblock \emph{Advances in neural information processing systems}, 30, 2017.

\bibitem[Wu et~al.(2020)Wu, Xu, Dai, Wan, Zhang, Yan, Tomizuka, Gonzalez,
  Keutzer, and Vajda]{wu2020visual}
Bichen Wu, Chenfeng Xu, Xiaoliang Dai, Alvin Wan, Peizhao Zhang, Zhicheng Yan,
  Masayoshi Tomizuka, Joseph Gonzalez, Kurt Keutzer, and Peter Vajda.
\newblock Visual transformers: Token-based image representation and processing
  for computer vision, 2020.

\bibitem[Yu et~al.(2026)Yu, Tao, Wan, Luo, and Zhang]{yu2026sign}
Dingzhi Yu, Hongyi Tao, Yuanyu Wan, Luo Luo, and Lijun Zhang.
\newblock Sign-based optimizers are effective under heavy-tailed noise.
\newblock \emph{arXiv preprint arXiv:2602.07425}, 2026.

\bibitem[Yuan et~al.(2024)Yuan, Liu, Wu, Zhou, and Gu]{yuan2024mars}
Huizhuo Yuan, Yifeng Liu, Shuang Wu, Xun Zhou, and Quanquan Gu.
\newblock Mars: Unleashing the power of variance reduction for training large
  models.
\newblock \emph{arXiv preprint arXiv:2411.10438}, 2024.

\bibitem[Zhang(2023)]{zhang2023mathematical}
Tong Zhang.
\newblock \emph{Mathematical analysis of machine learning algorithms}.
\newblock Cambridge University Press, 2023.

\end{thebibliography}

\newpage

\appendix

\section{Generalization Analysis of Lion Optimizer}
\label{ga:lion}

\begin{lemma} (Restatement of Lemma~\ref{lem:1})
	Assume the sequences $\{c_t\}_{t=1}^T$ and $\{c_t^{(i)}\}_{t=1}^T$ are generated from Algorithm~\ref{alg:1}
	based on the dataset $S$ and
	$S^{(i)}$, respectively, we have
	\begin{align}
		\big\|\sign(c_t)-\sign(c_t^{(i)})\big\|  \leq \frac{2\sqrt{d}}{\tau}\|c_t-c_t^{(i)}\|,
	\end{align}
	where $\tau=\min_{t\geq 1}(\min_{j\in S_t}(|(c_t)_j|))>0$ with $S_t=\{j|\ |(c_t)_j|\neq 0, j=1,2\cdots,d\}$.
\end{lemma}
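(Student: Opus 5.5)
The plan is to reduce the vector inequality to a scalar, coordinatewise one and then sum over coordinates. Fix $t$ and write $a_j=(c_t)_j$ and $b_j=(c^{(i)}_t)_j$. I will show that for every $j\in[d]$,
\begin{align}
|\sign(a_j)-\sign(b_j)| \leq \frac{2}{\tau}|a_j-b_j|. \nonumber
\end{align}
Squaring and summing over $j$ then gives $\|\sign(c_t)-\sign(c_t^{(i)})\|\leq \frac{2}{\tau}\|c_t-c_t^{(i)}\|$. Since $d\geq 1$, this implies the stated bound with the extra (harmless) factor $\sqrt{d}$.

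A second, cruder route reaches the $\sqrt{d}$ form directly, and I note it as a check. Each coordinate of $\sign(c_t)-\sign(c_t^{(i)})$ has absolute value at most $2$, so $\|\sign(c_t)-\sign(c_t^{(i)})\|\leq 2\sqrt{d}$ always. If the two sign vectors coincide there is nothing to prove. Otherwise some coordinate has differing signs, and the scalar analysis below shows $\|c_t-c_t^{(i)}\|\geq |a_j-b_j|\geq \tau$ for that coordinate. Hence $2\sqrt{d}\leq \frac{2\sqrt{d}}{\tau}\|c_t-c_t^{(i)}\|$, which is exactly the stated inequality.

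The scalar claim is a case analysis on the signs.
\begin{itemize}
\item If $\sign(a_j)=\sign(b_j)$, the left side is $0$.
\item If $a_j\neq 0$ and the signs differ, then $b_j$ is zero or of opposite sign. In that case $|a_j-b_j|\geq |a_j|\geq \tau$, because $j\in S_t$ and $\tau$ lower bounds all nonzero entries. Since the left side is at most $2$, the claim follows.
\item If $a_j=0$ and $b_j\neq 0$, the left side equals $1$ and $|a_j-b_j|=|b_j|$, so I need $|b_j|\geq \tau/2$.
\end{itemize}

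The last case is the main obstacle. As literally written, $\tau$ only controls the nonzero entries of the sequence $\{c_t\}$, not those of $\{c^{(i)}_t\}$. To handle it, I will interpret $\tau$ symmetrically, as the minimum nonzero absolute entry over both runs. This is the natural reading in the stability argument: $S$ and $S^{(i)}$ play exchangeable roles, and the minimum over $t\geq 1$ is meant uniformly over the datasets considered. With that reading, $|b_j|\geq\tau$ and the case closes with room to spare.

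If one insists on $\tau$ from $\{c_t\}$ alone, the inequality can fail, for example with $a_j=0$ and $b_j$ arbitrarily small but nonzero. So I would state this convention explicitly at the start of the proof rather than try to avoid it.
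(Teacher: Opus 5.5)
Your proposal is correct and is essentially the paper's proof. It uses the same coordinatewise case split, in which differing signs force the nonzero one of $(c_t)_j,(c_t^{(i)})_j$ to have modulus at least $\tau$, and it relies on exactly the symmetric reading of $\tau$ that you correctly flag as necessary; the paper simply asserts this reading, on the grounds that $c_t^{(i)}$ is generated by the same algorithm. The only difference is the aggregation step: the paper sums the scalar bounds in $\ell_1$ and then applies $\|\cdot\|\le\|\cdot\|_1\le\sqrt{d}\|\cdot\|$, which is where its $\sqrt{d}$ comes from, whereas your squaring-and-summing gives the sharper bound $\frac{2}{\tau}\|c_t-c_t^{(i)}\|$ directly.
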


\begin{proof}
	When $(c_t)_j$ and $(c_t^{(i)})_j$ have the same sign, i.e., $\sign((c_t)_j)=\sign((c_t^{(i)})_j)$, we have
	\begin{align}
		|(c_t)_j-(c_t^{(i)})_j| \geq 0=|\sign(c_t)_j-\sign(c_t^{(i)})_j|.
	\end{align}
	
	Let $\tau=\min_t(\min_{j\in S_t}(|(c_t)_j|))>0$ with $S_t=\{j|\ |(c_t)_j|\neq 0, j=1,2\cdots,d\}$, where
	$c_t$ is generated from Algorithm~\ref{alg:1} for any $t\geq 1$. Since $c^{(i)}_t$ also is generated from Algorithm~\ref{alg:1}, we have $\tau=\min_t(\min_{j\in S_t}(|(c^{(i)}_t)_j|))>0$ with $S_t=\{j|\ |(c^{(i)}_t)_j|\neq 0, j=1,2\cdots,d\}$.
	
	When $(c_t)_j$ and $(c_t^{(i)})_j$ have different sign, i.e., $|\sign((c_t)_j)-\sign((c_t^{(i)})_j)|=2$ or
	$|\sign((c_t)_j)-\sign((c_t^{(i)})_j)|=1$,
	\begin{align}
		\frac{2}{\tau}|(c_t)_j-(c_t^{(i)})_j| = \frac{2}{\tau}(|(c_t)_j|+|(c_t^{(i)})_j|) \mathop{\geq}^{(i)} 2 \geq |\sign(c_t)_j-\sign(c_t^{(i)})_j|,
	\end{align}
	where the above inequality $(i)$ is due to $|(c_t)_j|+|(c_t^{(i)})_j|\geq \tau$.
	
	Thus, we have for all $j\in [d]$
	\begin{align}
		\frac{2}{\tau}|(c_t)_j-(c_t^{(i)})_j| \geq |\sign(c_t)_j-\sign(c_t^{(i)})_j|.
	\end{align}
	Then we can obtain
	\begin{align}
		\frac{2}{\tau} \|c_t-c_t^{(i)}\|_1 \geq \big\|\sign(c_t)-\sign(c_t^{(i)})\big\|_1.
	\end{align}
	
	Since $\|\cdot\| \leq \|\cdot\|_1 \leq \sqrt{d}\|\cdot\|$, we have
	\begin{align}
		\big\|\sign(c_t)-\sign(c_t^{(i)})\big\| & \leq \big\|\sign(c_t)-\sign(c_t^{(i)})\|_1 \nonumber \\
		&\leq
		\frac{2}{\tau}\|c_t-c_t^{(i)}\|_1 \nonumber  \leq \frac{2\sqrt{d}}{\tau}\|c_t-c_t^{(i)}\|.
	\end{align}
	
\end{proof}

\begin{theorem} (Restatement of Theorem~\ref{th:1})
	Assume the sequence $\{w_t,c_t\}_{t=1}^T$ is generated from Algorithm~\ref{alg:1} on dataset $S=\{\xi_1,\xi_2,\cdots,\xi_N\}$. Under the Assumptions~\ref{ass:s1},~\ref{ass:g},~\ref{ass:v}, let $\eta=O(\frac{1}{\sqrt{d}})$, $\lambda=O(1)$ with $\lambda\in [0,\frac{1}{\eta})$, $\beta_1=O(1)$ with $\beta_1\in [0,1)$,  $\beta_2=O(1)$ with $\beta_2\in [0,1)$, $\sigma=O(1)$ and
	$L=O(1)$, we have
	\begin{align}
		|\E [F(w_T) - F_S(w_T)]| \leq O(\frac{1}{\tau^T N}),
	\end{align}
	where $\tau=\min_{t\geq 1}(\min_{j\in S_t}(|(c_t)_j|))>0$ with $S_t=\{j|\ |(c_t)_j|\neq 0, j=1,2\cdots,d\}$.
\end{theorem}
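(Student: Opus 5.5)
The plan is a uniform-stability argument. I will run Algorithm~\ref{alg:1} on $S$ and on $S^{(i)}$ with the same sampling randomness, track the coupled differences
\[
\delta_t=\E\|w_t-w_t^{(i)}\|, \qquad \mu_t=\E\|m_t-m_t^{(i)}\|
\]
by induction on $t$, and then convert the bound on $\delta_T$ into a bound on the generalization gap. That conversion uses Assumption~\ref{ass:g} and Lemma~\ref{lem:gs}. Both runs start from the same $w_0$ and $m_0=0$, so $\delta_0=\mu_0=0$.

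\textbf{One-step bounds.} At step $t$ the sampled index hits the replaced example $i$ with probability $1/N$. On that event the two gradients are taken at different samples. I bound their difference by $\|\nabla f(w_{t-1}^{(i)};\xi)-\nabla f(w_{t-1}^{(i)};\tilde\xi)\|\le 2\sigma$ in expectation, via Assumption~\ref{ass:v}, plus an $L\|w_{t-1}-w_{t-1}^{(i)}\|$ term. Otherwise the samples coincide and Assumption~\ref{ass:s1} gives $\|g_t-g_t^{(i)}\|\le L\|w_{t-1}-w_{t-1}^{(i)}\|$. Together this gives
\[
\E\|g_t-g_t^{(i)}\|\le L\delta_{t-1}+\frac{2\sigma}{N},
\]
where I absorb the factor $(1-1/N)$ into the constant. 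Substituting into the linear recursions for $c_t$ and $m_t$ yields
\[
\E\|c_t-c_t^{(i)}\|\le \beta_1\mu_{t-1}+(1-\beta_1)\Big(L\delta_{t-1}+\frac{2\sigma}{N}\Big),
\qquad
\mu_t\le \beta_2\mu_{t-1}+(1-\beta_2)\Big(L\delta_{t-1}+\frac{2\sigma}{N}\Big).
\]
For the parameter update I use $\lambda\eta<1$ and Lemma~\ref{lem:1}:
\[
\delta_t\le (1-\lambda\eta)\,\delta_{t-1}+\frac{2\eta\sqrt d}{\tau}\,\E\|c_t-c_t^{(i)}\|.
\]

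\textbf{Induction.} The hypothesis is $\delta_t\le \phi_t/(\tau N)$ and $\mu_t\le \psi_t/N$ with $\phi_t,\psi_t=O(\tau^{-(t-1)})$. The base case $t=1$ gives $\phi_1=4\eta\sqrt d(1-\beta_1)\sigma$ and $\psi_1=2(1-\beta_2)\sigma$, both $O(1)$ once $\eta\sqrt d=O(1)$. For the step, plugging the hypothesis into the recursions gives
\[
\psi_{t+1}=\beta_2\psi_t+2(1-\beta_2)\sigma+(1-\beta_2)L\,\frac{\phi_t}{\tau},
\qquad
\phi_{t+1}=(1-\lambda\eta)\phi_t+2\eta\sqrt d\Big(\beta_1\psi_t+2(1-\beta_1)\sigma+(1-\beta_1)L\,\frac{\phi_t}{\tau}\Big).
\]
Since $\eta\sqrt d$, $L$, $\sigma$, $\beta_1$, $\beta_2$ and $\lambda\eta$ are all $O(1)$, each recursion gains at most one factor $1/\tau$. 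This keeps $\phi_{t+1},\psi_{t+1}=O(\tau^{-t})$, so $\delta_{t+1}=O(1/(\tau^{t+1}N))$. At $t=T$ this gives $\delta_T=O(1/(\tau^TN))$.

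\textbf{Conclusion.} By Assumption~\ref{ass:g}, for every $\xi$,
\[
\E|f(w_T;\xi)-f(w_T^{(i)};\xi)|\le G\,\delta_T=O\Big(\frac{1}{\tau^TN}\Big),
\]
so the algorithm is uniformly stable at this level, and Lemma~\ref{lem:gs} yields the claim.

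\textbf{Main obstacle.} The hardest step is the legitimacy of Lemma~\ref{lem:1} with a common $\tau$ for both trajectories. The lemma needs the nonzero entries of both $c_t$ and $c_t^{(i)}$ to be bounded below by $\tau$. I will simply adopt the paper's convention that $\tau$ lower-bounds the nonzero magnitudes of all estimators generated by Algorithm~\ref{alg:1}, on either dataset. A secondary point is keeping the constant in front of $\tau^{-T}$ from depending on $T$. Strictly it grows like $C^T$, so I treat it as $O(\cdot)$ in the same loose sense the statement does, absorbing it under the stated $O(1)$ parameter choices.
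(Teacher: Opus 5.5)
Your proposal is correct at the paper's level of rigor and follows essentially the same route: the same coupled runs on $S$ and $S^{(i)}$, the same one-step bound via Lemma~\ref{lem:1}, the same base constants $\phi_1=4\eta\sqrt d(1-\beta_1)\sigma$ and $\psi_1=2(1-\beta_2)\sigma$, and the same recursions for $\phi_{t+1},\psi_{t+1}$, closed with Assumption~\ref{ass:g} and Lemma~\ref{lem:gs}. The two caveats you flag, a common $\tau$ for both trajectories and the hidden $C^T$ growth of the constant, are also present in the paper's proof, which likewise absorbs them into the $O(\cdot)$ notation by convention.
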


\begin{proof}
	Implementing Algorithm~\ref{alg:1} on datasets $S$ and $S^{(i)}$ with
	the same random index sequence $\{j_t\}_{t=1}^T$, and let $\{w_t\}_{t=1}^T$ and $\{w_t^{(i)}\}_{t=1}^T$ be generated from Algorithm~\ref{alg:1} with $S$ and $S^{(i)}$, respectively.
	
	From Algorithm~\ref{alg:1}, since $w_{t} = w_{t-1} - \eta (\sign(c_t)+\lambda w_{t-1})$ and $w_{t}^{(i)} = w_{t-1}^{(i)} - \eta (\sign(c_t^{(i)}) + \lambda w_{t-1}^{(i)})$, we have
	\begin{align}
		w_{t} - w_{t}^{(i)}  = (1-\eta\lambda)(w_{t-1} -w_{t-1}^{(i)}) - \eta \big(\sign(c_t) -
		\sign(c_t^{(i)})\big).
	\end{align}
	Then we have
	\begin{align}
		\|w_{t} - w_{t}^{(i)}\| & =\|(1-\eta\lambda)(w_{t-1} -w_{t-1}^{(i)}) - \eta \big(\sign(c_t) -
		\sign(c_t^{(i)})\big)\| \nonumber \\
		& \leq (1-\eta\lambda)\|w_{t-1} -w_{t-1}^{(i)}\| + \eta \|\sign(c_t) -
		\sign(c_t^{(i)})\| \nonumber \\
		& \leq (1-\eta\lambda)\|w_{t-1} -w_{t-1}^{(i)}\|  + \frac{2\eta\sqrt{d}}{\tau}\|c_t-c_t^{(i)}\|,
	\end{align}
	where $\lambda \in [0,\frac{1}{\eta})$.
	
	If $j_1\neq i$ with probability $1-\frac{1}{N}$, since $c_1=(1-\beta_1)\nabla f(w_0;\xi_{j_1})$, $c_1^{(i)}=(1-\beta_1)\nabla f(w_0^{(i)};\xi_{j_1})$, $m_1=(1-\beta_2)\nabla f(w_0;\xi_{j_1})$, $m_1^{(i)}=(1-\beta_2)\nabla f(w_0^{(i)};\xi_{j_1})$ and $w_0=w_0^{(i)}$, we have $c_1=c_1^{(i)}$ and $m_1=m_1^{(i)}$.
	
	If $j_1= i$ with probability $\frac{1}{N}$, we have
	\begin{align}
		& \E \|c_1 - c_1^{(i)}\| \nonumber \\
		& = \frac{1}{N}\E\|(1-\beta_1)\nabla f(w_0;\xi_i)-(1-\beta_1)\nabla f(w_0^{(i)};\tilde{\xi}_i)\|
		\nonumber \\
		& =\frac{(1-\beta_1)}{N}\E\|\nabla f(w_0;\xi_i)-\nabla F(w_0) + \nabla F(w_0) - \nabla F(w_0^{(i)})+\nabla F(w_0^{(i)})-\nabla f(w_0^{(i)};\tilde{\xi}_i)\| \nonumber \\
		& \leq \frac{2(1-\beta_1)\sigma}{N} + \frac{(1-\beta_1)}{N}\E\| \nabla F(w_0) - \nabla F(w_0^{(i)})\| \nonumber \\
		& = \frac{2(1-\beta_1)\sigma}{N},
	\end{align}
	where the last equality is due to $w_0=w_0^{(i)}$.
	At the same time, we have
	\begin{align}
		& \E \|m_1 - m_1^{(i)}\| \nonumber \\
		& = \frac{1}{N}\E\|(1-\beta_2)\nabla f(w_0;\xi_i)-(1-\beta_2)\nabla f(w_0^{(i)};\tilde{\xi}_i)\|
		\nonumber \\
		& =\frac{(1-\beta_2)}{N}\E\|\nabla f(w_0;\xi_i)-\nabla F(w_0) + \nabla F(w_0) - \nabla F(w_0^{(i)})+\nabla F(w_0^{(i)})-\nabla f(w_0^{(i)};\tilde{\xi}_i)\| \nonumber \\
		& \leq \frac{2(1-\beta_2)\sigma}{N} + \frac{(1-\beta_2)}{N}\E\| \nabla F(w_0) - \nabla F(w_0^{(i)})\| \nonumber \\
		& = \frac{2(1-\beta_2)\sigma}{N}=\frac{\psi_1}{N},
	\end{align}
	where the last inequality holds by $\psi_1=2(1-\beta_2)\sigma$.
	
	Since $w_{0} =w_{0}^{(i)}$, we have
	\begin{align} \label{eq:w1}
		\E \|w_1 - w_1^{(i)}\| & \leq (1-\eta\lambda)\E\|w_{0} -w_{0}^{(i)}\|  +  \frac{2\eta\sqrt{d}}{\tau}\|c_1-c_1^{(i)}\| \nonumber \\
		& \leq  \frac{2\eta\sqrt{d}}{\tau}\frac{2(1-\beta_1)\sigma}{N} = \frac{\phi_1}{\tau N},
	\end{align}
	where $\phi_1=4\eta\sqrt{d}(1-\beta_1)\sigma$.
	
	Let $\eta=O(\frac{1}{\sqrt{d}})$, $\beta_1=O(1)$ with $\beta_1\in [0,1)$ and $\sigma=O(1)$, we have
	$\phi_1=4\eta\sqrt{d}(1-\beta_1)\sigma=O(1)$ and $\psi_1=2(1-\beta_2)\sigma=O(1)$.
	Then we have
	\begin{align}
		\E \|w_1 - w_1^{(i)}\| \leq O(\frac{1}{\tau N}).
	\end{align}
	
	If $j_2\neq i$ with probability $1-\frac{1}{N}$, since $m_2=\beta_2m_1 + (1-\beta_2)\nabla f(w_1;\xi_{j_2})$ and $m_2^{(i)}=\beta_2 m_1^{(i)} + (1-\beta_2)\nabla f(w_1^{(i)};\xi_{j_2})$, we have
	\begin{align}
		\E\|m_2 - m_2^{(i)}  \|
		& = (1-\frac{1}{N})\E \|\beta_2(m_1-m_1^{(i)} )  + (1-\beta_2)(\nabla f(w_1;\xi_{j_2}) -\nabla f(w_1^{(i)};z_{j_2}) )\| \nonumber \\
		& \leq (1-\frac{1}{N})\big(\beta_2 \E\|m_1-m_1^{(i)}\| + (1-\beta_2) \E \|\nabla f(w_1;\xi_{j_2}) -\nabla f(w_1^{(i)};\xi_{j_2})\| \big) \nonumber \\
		& \mathop{\leq}^{(i)} (1-\frac{1}{N})\beta_2\frac{\psi_1}{N} + (1-\frac{1}{N})(1-\beta_2) L \E \|w_1-w_1^{(i)}\| \nonumber \\
		& \leq (1-\frac{1}{N})\beta_2\frac{\psi_1}{N} + (1-\frac{1}{N})(1-\beta_2) L\frac{\phi_1}{\tau N},
	\end{align}
	where the inequality $(i)$ holds by Assumption~\ref{ass:s1}, and the last inequality holds by the above inequality~(\ref{eq:w1}).
	
	Since $c_2=\beta_1m_1 + (1-\beta_1)\nabla f(w_1;\xi_{j_2})$ and $c_2^{(i)}=\beta_1 m_1^{(i)} + (1-\beta_1)\nabla f(w_1^{(i)};\xi_{j_2})$, we can obtain
	\begin{align}
		\E\|c_2 - c_2^{(i)}  \|
		& = (1-\frac{1}{N})\E \big\|\beta_1(m_1-m_1^{(i)} )  + (1-\beta_1)(\nabla f(w_1;\xi_{j_2}) -\nabla f(w_1^{(i)};z_{j_2}) )^2\big\| \nonumber \\
		& \leq (1-\frac{1}{N})\big(\beta_1 \E\|m_1-m_1^{(i)}\| + (1-\beta_1) \E \|\nabla f(w_1;\xi_{j_2}) -\nabla f(w_1^{(i)};\xi_{j_2})\| \big) \nonumber \\
		& \leq (1-\frac{1}{N})\beta_1\frac{\psi_1}{N} + (1-\frac{1}{N})(1-\beta_1) L \E \|w_1-w_1^{(i)}\| \nonumber \\
		& \leq (1-\frac{1}{N})\beta_1\frac{\psi_1}{N} + (1-\frac{1}{N})(1-\beta_1) L \frac{\phi_1}{\tau N}.
	\end{align}
	
	If $j_2= i$ with probability $\frac{1}{N}$, we have
	\begin{align}
		\E\|m_2 - m_2^{(i)} \|
		& = \frac{1}{N}\E \big\| \beta_2(m_1-m_1^{(i)} )  + (1-\beta_2)(\nabla f(w_1;\xi_{i}) -\nabla f(w_1^{(i)};\tilde{\xi}_{i}) )\big\| \nonumber \\
		& \leq \frac{1}{N}\beta_2 \E\|m_1-m_1^{(i)}\| + \frac{1}{N}(1-\beta_2) \E \|\nabla f(w_1;\xi_{i}) -\nabla f(w_1^{(i)};\tilde{\xi}_{i})\| \nonumber \\
		& \leq \frac{1}{N}\beta_2\frac{\psi_1}{N} + \frac{1-\beta_2}{N} \E \big\|\nabla f(w_1;\xi_{i}) - \nabla F(w_1) + \nabla F(w_1) - \nabla F(w_1^{(i)}) \nonumber \\
		& \quad + \nabla F(w_1^{(i)})-\nabla f(w_1^{(i)};\tilde{\xi}_{i})\big\| \nonumber \\
		& \leq  \frac{1}{N}\beta_2\frac{\psi_1}{N} + \frac{1-\beta_2}{N} \big( \E \|\nabla f(w_1;\xi_{i}) - \nabla F(w_1)\| + \E\|\nabla F(w_1) - \nabla F(w_1^{(i)})\| \nonumber \\
		&\quad + \E\|\nabla F(w_1^{(i)})-\nabla f(w_1^{(i)};\tilde{\xi}_{i})\| \big) \nonumber \\
		& \mathop{\leq}^{(i)}  \frac{1}{N}\frac{\beta_2\psi_1}{N}  + \frac{2(1-\beta_2)\sigma}{N} + \frac{(1-\beta_2)L}{N} \E \|w_1-w_1^{(i)}\| \nonumber \\
		& \leq \frac{1}{N}\frac{\beta_2\psi_1}{N}  + \frac{2(1-\beta_2)\sigma}{N} + \frac{(1-\beta_2)L}{N} \frac{\phi_1}{\tau N},
	\end{align}
	where the above inequality~$(i)$ holds by Assumptions~\ref{ass:s1} and~\ref{ass:v}.
	At the same time, we also have
	\begin{align}
		\E\|c_2 - c_2^{(i)}  \|
		& = \frac{1}{N}\E \big\|\beta_1(m_1-m_1^{(i)} )  + (1-\beta_1)(\nabla f(w_1;\xi_{i}) -\nabla f(w_1^{(i)};\tilde{\xi}_{i}) \big\| \nonumber \\
		& \leq \frac{1}{N}\big(\beta_1 \E\|m_1-m_1^{(i)}\| + (1-\beta_1) \E \|\nabla f(w_1;\xi_{i}) -\nabla f(w_1^{(i)};\tilde{\xi}_{i})\| \big) \nonumber \\
		& \leq \frac{1}{N}\beta_1\frac{\psi_1}{N} + \frac{1}{N}(1-\beta_1) \E \|\nabla f(w_1;\xi_{i}) - \nabla F(w_1) +\nabla F(w_1) -\nabla F(w_1^{(i)}) \nonumber \\
		& \quad + \nabla F(w_1^{(i)}) -\nabla f(w_1^{(i)};\tilde{\xi}_{i})\| \nonumber \\
		& \leq \frac{1}{N}\beta_1\frac{\psi_1}{N} + \frac{1}{N}(1-\beta_1) \big(2\sigma+ L \E\|w_1-w_1^{(i)}\| \big) \nonumber \\
		& \leq  \frac{1}{N}\beta_1\frac{\psi_1}{N} + \frac{2\sigma}{N}(1-\beta_1) + \frac{L(1-\beta_1)}{N}\frac{\phi_1}{\tau N}.
	\end{align}
	
	Thus, we can obtain
	\begin{align}
		\E\|m_2 - m_2^{(i)}  \|
		& \leq (1-\frac{1}{N})\beta_2\frac{\psi_1}{N} + (1-\frac{1}{N})(1-\beta_2) L\frac{\phi_1}{\tau N} \nonumber \\
		& \quad + \frac{1}{N}\frac{\beta_2\psi_1}{N}  + \frac{2(1-\beta_2)\sigma}{N} + \frac{(1-\beta_2)L}{ N} \frac{\phi_1}{\tau N} \nonumber \\
		& = \beta_2\frac{\psi_1}{N}  + \frac{2(1-\beta_2)\sigma}{N} +  (1-\beta_2)L\frac{\phi_1}{\tau N},
	\end{align}
	and
	\begin{align}
		\E\|c_2 - c_2^{(i)}  \|
		& \leq (1-\frac{1}{N})\beta_1\frac{\psi_1}{N} + (1-\frac{1}{N})(1-\beta_1) L \frac{\phi_1}{\tau N} \nonumber \\
		& \quad + \frac{1}{N}\beta_1\frac{\psi_1}{N} + \frac{2\sigma}{N}(1-\beta_1) + \frac{L(1-\beta_1)}{N}\frac{\phi_1}{\tau N} \nonumber \\
		& \leq \beta_1\frac{\psi_1}{N} + (1-\beta_1)\frac{2\sigma}{N} + (1-\beta_1)L\frac{\phi_1}{\tau N}.
	\end{align}
	Then we have
	\begin{align}
		\E \|w_2 - w_2^{(i)}\| & \leq (1-\eta\lambda)\E\|w_{1} -w_{1}^{(i)}\|  +  \frac{2\eta\sqrt{d}}{\tau}\|c_2-c_2^{(i)}\| \nonumber \\
		& \leq  (1-\eta\lambda)\frac{\phi_1}{\tau N} +  \frac{2\eta\sqrt{d}}{\tau}\big(\beta_1\frac{\psi_1}{N} + (1-\beta_1)\frac{2\sigma}{N} + (1-\beta_1)L\frac{\phi_1}{\tau N}\big).
	\end{align}
	
	Let $\psi_2 = \beta_2\psi_1  + 2(1-\beta_2)\sigma +  (1-\beta_2)L\phi_1$ and
	$\phi_2= (1-\eta\lambda)\phi_1 +  2\eta\sqrt{d}\big(\beta_1\psi_1 + 2(1-\beta_1)\sigma + (1-\beta_1)L\phi_1\big)$.
	
	Let $\eta=O(\frac{1}{\sqrt{d}})$, $\lambda=O(1)$ with $\lambda\in [0,\frac{1}{\eta})$, $\beta_1=O(1)$ with $\beta_1\in [0,1)$,  $\beta_2=O(1)$ with $\beta_2\in [0,1)$, $\sigma=O(1)$ and
	$L=O(1)$. Since $\psi_1=O(1)$ and $\phi_1=O(1)$, we have
	\begin{align}
		& \psi_2 = \beta_2\psi_1  + 2(1-\beta_2)\sigma +  (1-\beta_2)L\frac{\phi_1}{\tau} =O(\frac{1}{\tau}) \nonumber \\
		& \phi_2= (1-\eta\lambda)\phi_1 +  2\eta\sqrt{d}\big(\beta_1\psi_1 + 2(1-\beta_1)\sigma + (1-\beta_1)L\frac{\phi_1}{\tau}\big) =O(\frac{1}{\tau}).
	\end{align}
	Thus, we have
	\begin{align}
		\E\|m_2 - m_2^{(i)}  \|
		& \leq  \beta_2\frac{\psi_1}{N}  + \frac{2(1-\beta_2)\sigma}{N} +  (1-\beta_2)L\frac{\phi_1}{\tau N}=\frac{\psi_2}{ N}=O(\frac{1}{\tau N}),
	\end{align}
	and
	\begin{align}
		\E \|w_2 - w_2^{(i)}\|
		& \leq  (1-\eta\lambda)\frac{\phi_1}{\tau N} +  \frac{2\eta\sqrt{d}}{\tau}\big(\beta_1\frac{\psi_1}{N} + (1-\beta_1)\frac{2\sigma}{N} + (1-\beta_1)L\frac{\phi_1}{\tau N}\big) \nonumber \\
		& =\frac{\phi_2}{\tau N}=O(\frac{1}{\tau^2 N}).
	\end{align}
	
	Based on mathematical induction, we assume $\E \|w_t - w_t^{(i)}\| \leq \frac{\phi_t}{\tau N}$ with
	$\phi_t=O(\frac{1}{\tau^{t-1}})$, and $\E\|m_t-m_t^{(i)}\|\leq \frac{\psi_t}{N}$ with $\psi_t=O(\frac{1}{\tau^{t-1}})$.
	
	If $j_{t+1}\neq i$ with probability $1-\frac{1}{N}$, since $m_{t+1}=\beta_2m_t + (1-\beta_2)\nabla f(w_t;\xi_{j_{t+1}})$ and $m_{t+1}^{(i)}=\beta_2 m_t^{(i)} + (1-\beta_2)\nabla f(w_{t}^{(i)};\xi_{j_{t+1}})$, we have
	\begin{align}
		\E\|m_{t+1} - m_{t+1}^{(i)}  \|
		& = (1-\frac{1}{N})\E \|\beta_2(m_{t}-m_{t}^{(i)} )  + (1-\beta_2)(\nabla f(w_t;\xi_{j_{t+1}}) -\nabla f(w_t^{(i)};\xi_{j_{t+1}}) )\| \nonumber \\
		& \leq (1-\frac{1}{N})\big(\beta_2 \E\|m_t-m_t^{(i)}\| + (1-\beta_2) \E \|\nabla f(w_t;\xi_{j_{t+1}}) -\nabla f(w_t^{(i)};\xi_{j_{t+1}})\| \big) \nonumber \\
		& \leq (1-\frac{1}{N})\beta_2\frac{\psi_t}{N} + (1-\frac{1}{N})(1-\beta_2) L \E \|w_t-w_t^{(i)}\| \nonumber \\
		& \leq (1-\frac{1}{N})\beta_2\frac{\psi_t}{N} + (1-\frac{1}{N})(1-\beta_2) L\frac{\phi_t}{\tau N},
	\end{align}
	where the second last inequality holds by Assumption~\ref{ass:s1}.
	Since $c_{t+1}=\beta_1m_t + (1-\beta_1)\nabla f(w_t;\xi_{j_{t+1}})$ and $c_{t+1}^{(i)}=\beta_1 m_t^{(i)} + (1-\beta_1)\nabla f(w_t^{(i)};\xi_{j_{t+1}})$, then we have
	\begin{align}
		\E\|c_{t+1} - c_{t+1}^{(i)}  \|
		& = (1-\frac{1}{N})\E \big\|\beta_1(m_t-m_t^{(i)} )  + (1-\beta_1)(\nabla f(w_t;\xi_{j_{t+1}}) -\nabla f(w_t^{(i)};\xi_{j_{t+1}}) \big\| \nonumber \\
		& \leq (1-\frac{1}{N})\big(\beta_1 \E\|m_t-m_t^{(i)}\| + (1-\beta_1) \E \|\nabla f(w_t;\xi_{j_{t+1}}) -\nabla f(w_t^{(i)};\xi_{j_{t+1}})\| \big) \nonumber \\
		& \leq (1-\frac{1}{N})\beta_1\frac{\psi_t}{N} + (1-\frac{1}{N})(1-\beta_1) \E \|\nabla f(w_t;\xi_{j_{t+1}})-\nabla f(w_t^{(i)};\xi_{j_{t+1}})\| \nonumber \\
		& \leq (1-\frac{1}{N})\beta_1\frac{\psi_t}{N} + (1-\frac{1}{N})(1-\beta_1)L\E \|w_t-w_t^{(i)}\| \nonumber \\
		& \leq (1-\frac{1}{N})\beta_1\frac{\psi_t}{N} + (1-\frac{1}{N})(1-\beta_1) L\frac{\phi_t}{\tau N}.
	\end{align}
	
	If $j_{t+1}= i$ with probability $\frac{1}{N}$, we can obtain
	\begin{align}
		\E\|m_{t+1} - m_{t+1}^{(i)} \|
		& = \frac{1}{N}\E \big\| \beta_2(m_t-m_t^{(i)} )  + (1-\beta_2)(\nabla f(w_t;\xi_{i}) -\nabla f(w_t^{(i)};\tilde{\xi}_{i}) )\big\| \nonumber \\
		& \leq \frac{1}{N}\beta_2 \E\|m_t-m_t^{(i)}\| + \frac{1}{N}(1-\beta_2) \E \|\nabla f(w_t;\xi_{i}) -\nabla f(w_t^{(i)};\tilde{\xi}_{i})\| \nonumber \\
		& \leq \frac{1}{N}\beta_2\frac{\psi_t}{N} + \frac{1-\beta_2}{N} \E \big\|\nabla f(w_t;\xi_{i}) - \nabla F(w_t) + \nabla F(w_t) - \nabla F(w_t^{(i)}) \nonumber \\
		& \quad + \nabla F(w_t^{(i)})-\nabla f(w_t^{(i)};\tilde{\xi}_{i})\big\| \nonumber \\
		& \leq  \frac{1}{N}\beta_2\frac{\psi_t}{N} + \frac{1-\beta_2}{N} \big( \E \|\nabla f(w_t;\xi_{i}) - \nabla F(w_t)\| + \E\|\nabla F(w_t) - \nabla F(w_t^{(i)})\| \nonumber \\
		&\quad + \E\|\nabla F(w_t^{(i)})-\nabla f(w_t^{(i)};\tilde{\xi}_{i})\| \big) \nonumber \\
		& \leq  \frac{1}{N}\beta_2\frac{\psi_t}{N}  + \frac{2(1-\beta_2)\sigma}{N} + \frac{(1-\beta_2)L}{N} \E \|w_t-w_t^{(i)}\| \nonumber \\
		& \leq \frac{1}{N}\beta_1\frac{\psi_t}{N}  + \frac{2(1-\beta_2)\sigma}{N} + \frac{(1-\beta_2)L}{N} \frac{\phi_t}{\tau N},
	\end{align}
	and
	\begin{align}
		\E\|c_{t+1} - c_{t+1}^{(i)}  \|
		& = \frac{1}{N}\E \big\|\beta_1(m_t-m_t^{(i)} )  + (1-\beta_1)(\nabla f(w_t;\xi_{i}) -\nabla f(w_t^{(i)};\tilde{\xi}_{i}) \big\| \nonumber \\
		& \leq \frac{1}{N}\big(\beta_1 \E\|m_t-m_t^{(i)}\| + (1-\beta_1) \E \|\nabla f(w_t;\xi_{i}) -\nabla f(\theta_t^{(i)};\tilde{z}_{i})\| \big) \nonumber \\
		& \leq \frac{1}{N}\beta_1\frac{\psi_t}{N} + \frac{1}{N}(1-\beta_1) \E \|\nabla f(w_t;\xi_{i})-\nabla f(w_t^{(i)};\tilde{\xi}_{i})\| \nonumber \\
		& \leq  \frac{1}{N}\beta_1\frac{\psi_t}{N} + \frac{1}{N}(1-\beta_1) \E \|\nabla f(w_t;\xi_{i})-\nabla F(w_t) + \nabla F(w_t) - \nabla F(w_t^{(i)}) \nonumber \\
		& \qquad + \nabla F(w_t^{(i)})-\nabla f(w_t^{(i)};\tilde{\xi}_{i})\| \nonumber \\
		& \leq \frac{1}{N}\beta_1\frac{\psi_t}{N} + \frac{1}{N}(1-\beta_1) \big(2\sigma+ L \E\|w_t-w_t^{(i)}\| \big) \nonumber \\
		& \leq  \frac{1}{N}\beta_1\frac{\psi_t}{N} + \frac{1}{N}(1-\beta_1) 2\sigma + \frac{1}{N}(1-\beta_1) L \frac{\phi_t}{\tau N}.
	\end{align}
	
	Let $\psi_{t+1}= \beta_2\psi_t + 2(1-\beta_2)\sigma + (1-\beta_2)L\frac{\phi_t}{\tau}$, we have
	\begin{align} \label{eq:m}
		\E\|m_{t+1} - m_{t+1}^{(i)}  \|
		& \leq (1-\frac{1}{N})\beta_2\frac{\psi_t}{N} + (1-\frac{1}{N})(1-\beta_2) L\frac{\phi_t}{\tau N} \nonumber \\
		&\quad + \frac{1}{N}\beta_1\frac{\psi_t}{N}  + \frac{2(1-\beta_2)\sigma}{N} + \frac{(1-\beta_2)L}{N} \frac{\phi_t}{\tau N} \nonumber \\
		& = \frac{\beta_2\psi_t}{N}  + \frac{2(1-\beta_2)\sigma}{N} + \frac{(1-\beta_2)L\phi_t}{\tau N} \nonumber \\
		& = \frac{\psi_{t+1}}{N}.
	\end{align}
	At the same time, we also have
	\begin{align} \label{eq:v}
		\E\|c_{t+1} - c_{t+1}^{(i)}  \|
		& \leq (1-\frac{1}{N})\beta_1\frac{\psi_t}{N} + (1-\frac{1}{N})(1-\beta_1) L\frac{\phi_t}{\tau N} \nonumber \\
		&\quad +\frac{1}{N}\beta_1\frac{\psi_t}{N} + \frac{1}{N}(1-\beta_1) 2\sigma + \frac{1}{N}(1-\beta_1) L \frac{\phi_t}{\tau N} \nonumber \\
		& = \frac{\beta_1 \psi_t}{N} +\frac{ 2(1-\beta_1)\sigma}{N} + \frac{(1-\beta_1)L\phi_t}{\tau N}.
	\end{align}
	Let $\phi_{t+1} = (1-\lambda\eta)\phi_t + 2\eta\sqrt{d}\big( \beta_1 \psi_t + 2(1-\beta_1)\sigma + (1-\beta_1)L\frac{\phi_t}{\tau}\big) $, we have
	\begin{align} \label{eq:w2}
		\E \|w_{t+1} - w_{t+1}^{(i)}\| & \leq (1-\lambda\eta)\E \|w_{t} -w_{t}^{(i)}\| + \frac{2\eta\sqrt{d}}{\tau}\E \|c_{t+1}-c_{t+1}^{(i)}\|  \nonumber \\
		& \leq  (1-\lambda\eta)\frac{\phi_t}{\tau N} + \frac{2\eta\sqrt{d}}{\tau}\big(\frac{\beta_1 \psi_t}{N} +\frac{ 2(1-\beta_1)\sigma}{N} + \frac{(1-\beta_1)L\phi_t}{\tau N} \big) \nonumber \\
		& = \frac{\phi_{t+1}}{\tau N}.
	\end{align}
	Let $\eta=O(\frac{1}{\sqrt{d}})$, $\lambda=O(1)$ with $\lambda\in [0, \frac{1}{\eta})$, $\beta_1=O(1)$ with $\beta_1\in [0,1)$,  $\beta_2=O(1)$ with $\beta_2\in [0,1)$, $\sigma=O(1)$ and
	$L=O(1)$. Since $\psi_t=O(\frac{1}{\tau^{t-1}})$ and $\phi_t=O(\frac{1}{\tau^{t-1}})$, we have
	\begin{align} \label{eq:v}
		& \psi_{t+1}= \beta_2\psi_t + 2(1-\beta_2)\sigma + (1-\beta_2)L\frac{\phi_t}{\tau}=O(\frac{1}{\tau^{t}}), \nonumber \\
		& \phi_{t+1} = (1-\lambda\eta)\phi_t + 2\eta\sqrt{d}\big( \beta_1 \psi_t + 2(1-\beta_1)\sigma + (1-\beta_1)L\frac{\phi_t}{\tau}\big) =O(\frac{1}{\tau^{t}}).
	\end{align}
	Then we have
	\begin{align}
		\E \|w_{t+1} - w_{t+1}^{(i)}\|  \leq  \frac{\phi_{t+1}}{\tau N} =O(\frac{1}{\tau^{t+1} N}).
	\end{align}
	
	By using mathematical induction, then we have
	\begin{align} \label{eq:59}
		\E \|w_{T} - w_{T}^{(i)}\| \leq O(\frac{1}{\tau^T N}).
	\end{align}
	
	By using Assumption~\ref{ass:g}, i.e., the condition of $G$-Lipschitz $f(w;\xi)$ (i.e.,), we have
	for any $\xi\sim \mathcal{D}$
	\begin{align}  \label{eq:60}
		\E |f(w_T;\xi)-f(w_T^{(i)};\xi)| \leq G \E \|w_{T} - w_{T}^{(i)}\| \leq O(\frac{1}{\tau^T N}),
	\end{align}
	where the last inequality holds by the above inequality~(\ref{eq:59}) and $G=O(1)$.
	
	By using the lemma~\ref{lem:gs}, i.e., the uniform stability bound~\citep{shalev2010learnability,hardt2016train}, and taking expectations over $S$, $S^{(i)}$ and the algorithm's randomness on the above inequality~(\ref{eq:60}), we can obtain
	\begin{align}
		|\E [F(w_T) - F_S(w_T)]| \leq O(\frac{1}{\tau^T N}).
	\end{align}

\end{proof}

\section{Generalization Analysis of our CLion Optimizer}
\label{ga:clion}

\begin{theorem} (Restatement of Theorem~\ref{th:2})
	Assume the sequence $\{w_t\}_{t=1}^T$ is generated
	from Algorithm~\ref{alg:2} on dataset $S=\{\xi_1,\xi_2,\cdots,\xi_N\}$. Under the Assumptions~\ref{ass:s1},~\ref{ass:g},~\ref{ass:v}, without loss of generality, let $\nu\geq 1$, $\lambda=O(1)$ with $0<\lambda\leq \frac{1}{\eta}$, $\beta_1=O(1)$ with $\beta_1\in (0,1)$, $\beta_2=O(1)$ with $\beta_2\in (0,1)$, $\sigma=O(1)$, $L=O(1)$ and $G=O(1)$.
	When the iteration number is small (i.e., $T=O(1)$) set
	$\eta=\frac{1}{\sqrt{d}}$, otherwise set $\eta=\frac{1}{\sqrt{d}T}$, we have
	\begin{align}
		|\E [F(w_T) - F_S(w_T)]| \leq O(\frac{1}{N}).
	\end{align}
\end{theorem}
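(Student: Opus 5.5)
We follow the coupling-and-induction template of the proof of Theorem~\ref{th:1}. Run Algorithm~\ref{alg:2} on $S$ and $S^{(i)}$ with the same random index sequence $\{j_t\}_{t=1}^T$, and track $\E\|w_t-w_t^{(i)}\|$ and $\E\|m_t-m_t^{(i)}\|$ jointly.

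The only new ingredient is a replacement for Lemma~\ref{lem:1}: a Lipschitz-type bound for the CLion activation $h$ that does not involve $\tau$. The claim we aim for is
\begin{align}
\|h(c_t)-h(c_t^{(i)})\| \leq 2\sqrt{d}\,\|c_t-c_t^{(i)}\|. \nonumber
\end{align}
We split into cases according to which branch each run takes.
\begin{itemize}
\item \emph{Both runs take the identity branch.} Then the bound holds trivially with constant $1$.
\item \emph{Both runs take the sign branch.} All nonzero entries of $c_t$ and $c_t^{(i)}$ have modulus at least $\nu\geq 1$. The coordinatewise argument of Lemma~\ref{lem:1} then goes through with $\tau$ replaced by $\nu$, giving the constant $2\sqrt{d}/\nu\leq 2\sqrt{d}$.
\item \emph{Mixed branches,} say sign for $c_t$ and identity for $c_t^{(i)}$. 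We write
\begin{align}
h(c_t)-h(c_t^{(i)})=\big(\sign(c_t)-c_t\big)+\big(c_t-c_t^{(i)}\big). \nonumber
\end{align}
The second term is harmless. The first term must be controlled coordinatewise using that every nonzero entry of $c_t$ satisfies $|(c_t)_j|\ge\nu$ while some nonzero entry of $c_t^{(i)}$ lies below $\nu$.
\end{itemize}
The mixed-branch case is the main obstacle: $\sign(c_t)-c_t$ is not small in general. A rigorous treatment would either bound $\|\sign(c_t)-c_t\|$ by the distance $\|c_t-c_t^{(i)}\|$ needed to cross the threshold (using $\nu\ge1$ so that $|1-|(c_t)_j||$ is controlled), or, failing that, absorb it as an additive $O(\sqrt d)$ term weighted by the probability of a branch switch. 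In the plan I take the first route, so the per-step recursion reads
\begin{align}
\|w_t-w_t^{(i)}\|\le (1-\eta\lambda)\|w_{t-1}-w_{t-1}^{(i)}\|+2\eta\sqrt d\,\|c_t-c_t^{(i)}\|. \nonumber
\end{align}
Here $\lambda\le 1/\eta$ guarantees $1-\eta\lambda\ge0$.

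With this contraction-plus-perturbation inequality, I would repeat verbatim the two-case analysis ($j_t\ne i$ with probability $1-\frac1N$, $j_t=i$ with probability $\frac1N$) from the proof of Theorem~\ref{th:1}. Using Assumptions~\ref{ass:s1} and \ref{ass:v}, this yields
\begin{align}
\E\|c_{t+1}-c_{t+1}^{(i)}\| &\le \frac{\beta_1\psi_t+2(1-\beta_1)\sigma+(1-\beta_1)L\phi_t}{N},\nonumber\\
\E\|m_{t+1}-m_{t+1}^{(i)}\| &\le \frac{\psi_{t+1}}{N},\nonumber\\
\E\|w_{t+1}-w_{t+1}^{(i)}\| &\le \frac{\phi_{t+1}}{N},\nonumber
\end{align}
with $\tau$ now absent. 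The recursions are
\begin{align}
\psi_{t+1}&=\beta_2\psi_t+2(1-\beta_2)\sigma+(1-\beta_2)L\phi_t,\nonumber\\
\phi_{t+1}&=(1-\eta\lambda)\phi_t+2\eta\sqrt d\,\big(\beta_1\psi_t+2(1-\beta_1)\sigma+(1-\beta_1)L\phi_t\big),\nonumber
\end{align}
starting from $\phi_1=4\eta\sqrt d(1-\beta_1)\sigma$ and $\psi_1=2(1-\beta_2)\sigma$.

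It then remains to show that $\phi_T=O(1)$, which splits by the two step-size regimes.
\begin{itemize}
\item \emph{$T=O(1)$, $\eta=1/\sqrt d$.} The vector $(\phi_t,\psi_t)$ evolves by a fixed $2\times2$ matrix with $O(1)$ entries (since $\eta\sqrt d=1$) plus an $O(1)$ forcing term. After $T=O(1)$ steps its entries are therefore $O(1)$.
\item \emph{General $T$, $\eta=1/(\sqrt d T)$.} The $\phi$-recursion becomes $\phi_{t+1}\le(1+\frac{c}{T})\max(\phi_t,\psi_t)+\frac{c}{T}$. The $\psi$-recursion is a convex combination, $\psi_{t+1}\le\max(\psi_t,\,2\sigma+L\phi_t)$. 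Setting $M_t=\max(\phi_t,\psi_t/(2\sigma+L+1))$ or a similar weighted potential, I expect $M_{t+1}\le(1+\frac cT)M_t+\frac cT$, so that $M_T\le e^{c}(M_1+1)=O(1)$.
\end{itemize}
This potential step is where I would be careful, since $\psi$ is fed by $L\phi_t$ without an $\eta$ factor. If the naive max fails, I would instead bound $\psi_t\le 2\sigma+L\max_{s<t}\phi_s+\psi_1$ directly and substitute this into the $\phi$-recursion.

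Finally, Assumption~\ref{ass:g} gives
\begin{align}
\sup_\xi\E|f(w_T;\xi)-f(w_T^{(i)};\xi)|\le G\phi_T/N=O(1/N), \nonumber
\end{align}
and Lemma~\ref{lem:gs} converts this uniform stability bound into $|\E[F(w_T)-F_S(w_T)]|\le O(\frac1N)$.
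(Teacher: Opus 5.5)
\textbf{The gap is the mixed-branch step.} Your per-step inequality needs $\|h(c_t)-h(c_t^{(i)})\|\le 2\sqrt d\,\|c_t-c_t^{(i)}\|$ in all cases, and in the mixed case it is false, because $h$ is discontinuous. Take $d=2$, $c_t=(a,0)$ and $c_t^{(i)}=(a,\epsilon)$ with $\max(1,\nu)<a$ and $0<\epsilon<\nu$. Then $S_t=\{1\}$ and $|a|\ge\nu$, so $h(c_t)=\sign(c_t)=(1,0)$. But $c_t^{(i)}$ has the nonzero entry $\epsilon<\nu$, so $h(c_t^{(i)})=(a,\epsilon)$. Hence $\|h(c_t)-h(c_t^{(i)})\|\ge a-1$ while $\|c_t-c_t^{(i)}\|=\epsilon\to0$; the pair $(a,\nu)$, $(a,\nu-\epsilon)$ behaves the same way.

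So your first route cannot work. The branch is decided by the \emph{smallest} nonzero coordinate, whereas $\|\sign(c_t)-c_t\|$ is driven by the \emph{largest} coordinates, which the threshold-crossing perturbation does not touch. The assumption $\nu\ge1$ only gives the lower bound $|(c_t)_j|-1\ge\nu-1$ on the support, not an upper bound in terms of $\|c_t-c_t^{(i)}\|$.

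Your second route needs the probability, over the index sequence, that the coupled runs take different branches at step $t$ to be $O(1/N)$. At $t=1$ this holds, but for $t\ge2$ a split can occur on the event $j_t\neq i$, which has probability $1-\frac1N$. It happens merely because $w_{t-1}\neq w_{t-1}^{(i)}$ moves one coordinate of $c_t$ across $0$ or $\pm\nu$, and Assumptions~\ref{ass:s1}, \ref{ass:g}, \ref{ass:v} say nothing about that. A margin hypothesis would work: require the coordinates of $c_t$ to stay at distance at least $\mu>0$ from $0$ and $\pm\nu$. But it produces an effective Lipschitz constant of order $(\sqrt d+G)/\mu$, which is a $\tau$-type constant again.

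\textbf{The paper does not close this gap either.} It splits cases according to whether $c_t$ alone passes the threshold test, then writes $h(c_t^{(i)})=\sign(c_t^{(i)})$, resp.\ $h(c_t^{(i)})=c_t^{(i)}$. That tacitly assumes the coupled runs always take the same branch, so your first two cases are exactly its argument. A valid proof needs one of two things:
\begin{itemize}
\item the no-split property stated as an explicit hypothesis; or
\item a globally Lipschitz activation, such as coordinatewise clipping $h(c)_j=\max(-1,\min(1,c_j))$. This is the per-coordinate version of the CLion rule with $\nu=1$, and it satisfies $\|h(c)-h(c')\|\le\|c-c'\|$.
\end{itemize}

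\textbf{Everything after the per-step inequality is sound.} Your recursions coincide with the paper's with $\tau$ removed. Set $\Phi_t=\max_{s\le t}\phi_s$; then $\psi_t\le\max(\psi_1,2\sigma+L\Phi_{t-1})$ for $t\ge2$, which gives $\Phi_{t+1}\le(1+\frac{c}{T})\Phi_t+\frac{c'}{T}$ and hence $\phi_T=O(1)$ for $\eta=1/(\sqrt d\,T)$. This is more careful than the paper, which claims $\phi_t=O(t)$ for $\eta=O(1/\sqrt d)$ (the recursion can grow geometrically there) and then says ``similarly''.

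One small repair is needed in the case $j_{t+1}=i$. Assumption~\ref{ass:v} does not apply to $\nabla f(w_t;\xi_i)$, because $w_t$ may depend on $\xi_i$. Instead bound $\|\nabla f(w_t;\xi_i)-\nabla f(w_t^{(i)};\tilde\xi_i)\|\le 2G$ via Assumption~\ref{ass:g}. This also keeps the estimate valid for every fixed pair $S,S^{(i)}$, as uniform stability requires.
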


\begin{proof}
	Let $c_t$ be generated from Algorithm~\ref{alg:2} for any $t\geq 1$. Meanwhile, we combine the line 8 with
	the line 10 of Algorithm~\ref{alg:2} by the following formation,
	\begin{align}
		w_{t} = w_{t-1} - \eta \big( h(c_t)+\lambda w_{t-1} \big),
	\end{align}
	where $h(c_t) =\sign(c_t)$ when $\min_{ j\in S_t} |(c_t)_j| \geq \nu$ with $S_t=\{j|(c_t)_j\neq 0, j=1,\cdots,d\}$, otherwise $h(c_t)=c_t$.
	
	When $\min_{ j\in S_t} |(c_t)_j| \geq \nu$ with $S_t=\{j|(c_t)_j\neq 0, j=1,\cdots,d\}$, based on the above proof of Lemma~\ref{lem:1}, we can obtain
	\begin{align}
		\big\|h(c_t)-h(c_t^{(i)})\big\|=\big\|\sign(c_t)-\sign(c_t^{(i)})\big\| \leq
		\frac{2\sqrt{d}}{\nu}\|c_t-c_t^{(i)}\|.
	\end{align}
	Without loss of generality, let $\nu\geq 1$, we have
	\begin{align}
		\big\|h(c_t)-h(c_t^{(i)})\big\|=\big\|\sign(c_t)-\sign(c_t^{(i)})\big\| \leq
		\frac{2\sqrt{d}}{\nu}\|c_t-c_t^{(i)}\| \leq 2\sqrt{d}\|c_t-c_t^{(i)}\|.
	\end{align}
	When $\min_{ j\in S_t} |(c_t)_j| < \nu$, we have
	\begin{align}
		\big\|h(c_t)-h(c_t^{(i)})\big\|=\big\|c_t-c_t^{(i)}\big\| \leq 2\sqrt{d}\|c_t-c_t^{(i)}\|.
	\end{align}
	Thus we have for all $t\geq 0$
	\begin{align}
		\big\|h(c_t)-h(c_t^{(i)})\big\| \leq 2\sqrt{d}\|c_t-c_t^{(i)}\|.
	\end{align}
	
	Implementing Algorithm~\ref{alg:2} on datasets $S$ and $S^{(i)}$ with
	the same random index sequence $\{j_t\}_{t=1}^T$, and let $\{w_t\}_{t=1}^T$ and $\{w_t^{(i)}\}_{t=1}^T$ be generated from Algorithm~\ref{alg:2} with $S$ and $S^{(i)}$, respectively.
	
	From Algorithm~\ref{alg:2}, $w_{t} = w_{t-1} - \eta \big( h(c_t)+\lambda w_{t-1} \big)$ and $w_{t}^{(i)} = w_{t-1}^{(i)} - \eta \big( h(c_t^{(i)}) + \lambda w_{t-1}^{(i)}\big)$, we have
	\begin{align}
		w_{t} - w_{t}^{(i)}  = (1-\eta\lambda)(w_{t-1} -w_{t-1}^{(i)}) - \eta \big(h(c_t)-h(c_t^{(i)})\big).
	\end{align}
	Then we have
	\begin{align} \label{eq:w3}
		\|w_{t} - w_{t}^{(i)}\| & =\|(1-\eta\lambda)(w_{t-1} -w_{t-1}^{(i)}) - \eta \big(h(c_t)-h(c_t^{(i)})\big)\| \nonumber \\
		& \leq (1-\eta\lambda)\|w_{t-1} -w_{t-1}^{(i)}\| + \eta \|h(c_t)-h(c_t^{(i)})\| \nonumber \\
		& \leq (1-\eta\lambda)\|w_{t-1} -w_{t-1}^{(i)}\|  + 2\eta\sqrt{d}\|c_t-c_t^{(i)}\|,
	\end{align}
	where $0<\lambda \leq \frac{1}{\eta}$.
	
	Following the above proof of Theorem~\ref{th:1}, by using the above inequalities~(\ref{eq:w2}) and~(\ref{eq:w3}),
	we have
	\begin{align} \label{eq:w4}
		\E \|w_{t+1} - w_{t+1}^{(i)}\| & \leq (1-\lambda\eta)\E \|w_{t} -w_{t}^{(i)}\| + 2\eta\sqrt{d}\E \|c_{t+1}-c_{t+1}^{(i)}\|  \nonumber \\
		& \leq  (1-\lambda\eta)\frac{\phi_t}{ N} + 2\eta\sqrt{d}\big(\frac{\beta_1 \psi_t}{N} +\frac{ 2(1-\beta_1)\sigma}{N} + \frac{(1-\beta_1)L\phi_t}{ N} \big) = \frac{\phi_{t+1}}{N},
	\end{align}
	where $\phi_{t+1} = (1-\lambda\eta)\phi_t + 2\eta\sqrt{d}\big( \beta_1 \psi_t + 2(1-\beta_1)\sigma + (1-\beta_1)L\phi_t\big) $. Similarly, by using the above inequalities~(\ref{eq:m}) and~(\ref{eq:w3}),
	we have
	\begin{align} \label{eq:m1}
		\E\|m_{t+1} - m_{t+1}^{(i)}  \|
		\leq \frac{\beta_2\psi_t}{N}  + \frac{2(1-\beta_2)\sigma}{N} + \frac{(1-\beta_2)L\phi_t}{N}  = \frac{\psi_{t+1}}{N},
	\end{align}
	where $\psi_{t+1}= \beta_2\psi_t + 2(1-\beta_2)\sigma + (1-\beta_2)L\phi_t$.

	\textbf{When} the iteration number is small (i.e., $T=O(1)$),
	following the above proof of Theorem~\ref{th:1}, let $\eta=\frac{1}{\sqrt{d}}$, $\lambda=O(1)$ with $0<\lambda\leq \frac{1}{\eta}$, $\beta_1=O(1)$ with $\beta_1\in (0,1)$, $\beta_2=O(1)$ with $\beta_2\in (0,1)$, $\sigma=O(1)$ and $L=O(1)$.
	Assume $\E \|w_t - w_t^{(i)}\| \leq \frac{\phi_t}{N}$ with
	$\phi_t=O(1)$, and $\E\|m_t-m_t^{(i)}\|\leq \frac{\psi_t}{N}$ with $\psi_t=O(1)$, we can obtain $\phi_{t+1} =O(1)$ and $\psi_{t+1}=O(1)$.
	By using the above inequality~(\ref{eq:w4}), then we have
	\begin{align}
		\E \|w_{t+1} - w_{t+1}^{(i)}\| \leq \frac{\phi_{t+1}}{N} =O(\frac{1}{N}).
	\end{align}
	Based on the mathematical induction, we have
	\begin{align}
		\E \|w_{T} - w_{T}^{(i)}\| &  \leq  \frac{\phi_T}{N} = O(\frac{1}{N}).
	\end{align}

	\textbf{When} the totally iteration number $T$ is large, we consider the iteration number $t\geq1$
	in the following generalization analysis. By using mathematical induction, due to recursion of the above inequality~(\ref{eq:w4}), we assume $\E \|w_t - w_t^{(i)}\| \leq \frac{\phi_t}{N}$ with
	$\phi_t=O(t)$, and $\E\|m_t-m_t^{(i)}\|\leq \frac{\psi_t}{N}$ with $\psi_t=O(t)$.
	Following the above proof of Theorem~\ref{th:1}, let $\eta=O(\frac{1}{\sqrt{d}})$, $\lambda=O(1)$ with $0<\lambda\leq \frac{1}{\eta}$, $\beta_1=O(1)$ with $\beta_1\in (0,1)$, $\beta_2=O(1)$ with $\beta_2\in (0,1)$, $\sigma=O(1)$ and
	$L=O(1)$, we can obtain $\phi_{t+1} =O(t+1)$ and $\psi_{t+1}=O(t+1)$.
	By using the above inequality~(\ref{eq:w4}), then we have
	\begin{align}
		\E \|w_{t+1} - w_{t+1}^{(i)}\| \leq \frac{\phi_{t+1}}{N} =O(\frac{t+1}{N}).
	\end{align}
	Based on the mathematical induction, we have
	\begin{align}
		\E \|w_{T} - w_{T}^{(i)}\| &  \leq  \frac{\phi_T}{N} = O(\frac{T}{N}).
	\end{align}
	
	Similarly, following the above proof of Theorem~\ref{th:1}, let $\eta=O( \textcolor{red}{\frac{1}{T\sqrt{d}}})$, $\lambda=O(1)$ with $0<\lambda\leq \frac{1}{\eta}$, $\beta_1=O(1)$ with $\beta_1\in (0,1)$, $\beta_2=O(1)$ with $\beta_2\in (0,1)$, $\sigma=O(1)$ and
	$L=O(1)$, we can obtain
	\begin{align} \label{eq:76}
		\E \|w_{T} - w_{T}^{(i)}\| &  \leq  \frac{\phi_T}{N} = O(\frac{1}{N}).
	\end{align}
	
	By using Assumption~\ref{ass:g}, i.e., the condition of $G$-Lipschitz $f(w;\xi)$ (i.e.,), we have
	for any $\xi\sim \mathcal{D}$
	\begin{align}  \label{eq:77}
		\E |f(w_T;\xi)-f(w_T^{(i)};\xi)| \leq G \E \|w_{T} - w_{T}^{(i)}\| \leq O(\frac{1}{N}),
	\end{align}
	where the last inequality holds by the above inequality~(\ref{eq:76}) and $G=O(1)$.
	
	By using the lemma~\ref{lem:gs}, i.e., the uniform stability bound~\citep{shalev2010learnability,hardt2016train}, and taking expectations over $S$, $S^{(i)}$ and the algorithm's randomness on the above inequality~(\ref{eq:77}), we can obtain
	\begin{align}
		|\E [F(w_T) - F_S(w_T)]| \leq O(\frac{1}{N}).
	\end{align}

\end{proof}

\section{Convergence Analysis of our CLion Optimizer}
\label{ca:clion}

\begin{lemma} (Lemma A.7. of \citep{liu2024communication}) \label{lem:s1}
	Let $(X,Y)$ is a joint random variable on $\R^d \times \R^d$. For any constant $a\in (0,+\infty)$, we have
	\begin{align}
		\E [\langle X, \sign(X)-\sign(Y)\rangle  ] \leq 2\alpha \sqrt{d}\E \|X/\alpha -Y\|.
	\end{align}
\end{lemma}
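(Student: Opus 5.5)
The plan is to prove the inequality pointwise, for every realization of $(X,Y)$, and then take expectations. Since the inner product splits over coordinates, it suffices to establish a scalar inequality for each coordinate $j\in[d]$. Throughout, I read the constant $a$ in the statement as the same $\alpha>0$ that appears in the bound. The key observation is that $\sign(x/\alpha)=\sign(x)$ for $\alpha>0$.

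\textbf{Scalar claim.} For real numbers $x,y$,
\begin{align}
x\big(\sign(x)-\sign(y)\big) \leq 2\alpha\,|x/\alpha - y|. \nonumber
\end{align}
I will check this by cases on the signs.
\begin{itemize}
\item If $x=0$, the left side is $0$.
\item If $x\neq 0$ and $\sign(y)=\sign(x)$, the left side is again $0$.
\item If $x\neq0$ and $y=0$, the left side is $x\,\sign(x)=|x|$. The right side is $2\alpha|x|/\alpha=2|x|$, so the bound holds.
\item If $x\neq 0$ and $y$ has the opposite sign, the left side is $2|x|$. Because $x/\alpha$ and $y$ have opposite signs, $|x/\alpha-y| = |x|/\alpha+|y| \geq |x|/\alpha$. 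Hence $2|x|\leq 2\alpha|x/\alpha-y|$.
\end{itemize}
In every case the right side is nonnegative, which covers the cases where the left side vanishes.

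\textbf{Summing and comparing norms.} Summing the scalar claim over $j=1,\dots,d$ with $x=X_j$, $y=Y_j$ gives
\begin{align}
\langle X,\sign(X)-\sign(Y)\rangle \leq 2\alpha\|X/\alpha-Y\|_1. \nonumber
\end{align}
I then use the norm inequality $\|\cdot\|_1\leq \sqrt{d}\|\cdot\|$, already used in the proof of Lemma~\ref{lem:1}. This yields the pointwise bound $2\alpha\sqrt{d}\|X/\alpha-Y\|$.

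\textbf{Expectation.} Taking expectations over the joint law of $(X,Y)$ preserves the inequality and finishes the proof. No integrability issues arise beyond those implicit in the statement: the left side is dominated by $2\|X\|_1$, and the right side is assumed finite.

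\textbf{Main obstacle.} The only step needing care is the case analysis with the paper's convention $\sign(0)=0$. The case $y=0$, $x\neq0$ gives only $|x|$ rather than $2|x|$, so the factor $2$ is needed precisely for the opposite-sign case. I expect no other difficulty.
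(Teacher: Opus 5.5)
Your proof is correct. The paper gives no argument of its own for this lemma; it imports it verbatim from Lemma A.7 of \citet{liu2024communication}. Your coordinatewise case analysis, followed by $\|\cdot\|_1\le\sqrt{d}\,\|\cdot\|$, therefore supplies a self-contained justification the paper lacks.

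As far as I can tell, it is also the natural route for such bounds. Equivalently, you can rescale via $\langle X,\sign(X)-\sign(Y)\rangle=\alpha\langle X/\alpha,\sign(X/\alpha)-\sign(Y)\rangle$ to reduce to $\alpha=1$, and then use $|x|\,\I(\sign(x)\neq\sign(y))\le|x-y|$. You were also right to read the constant $a$ in the statement as $\alpha$.

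Two small points in your favor. First, your explicit treatment of the case $y=0$, $x\neq 0$ confirms that the lemma holds under this paper's convention $\sign(0)=0$. That check matters because the paper imports the lemma without re-deriving it under its own convention.

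Second, you can drop the integrability discussion entirely. Since $x(\sign(x)-\sign(y))\ge 0$ for all real $x,y$, the left-hand expectation is always well defined in $[0,\infty]$. The result then follows from monotonicity of expectation, trivially so if the right-hand side is infinite.
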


\begin{lemma} \label{lem:s2}
	(\cite{nesterov2018lectures})
	Assume that $f(w)$ is a differentiable convex function and $\mathcal{W}$ is a convex set.
	$w^* \in \mathcal{W}$ is the solution of the
	constrained problem $\min_{w\in \mathcal{W}}f(w)$, if
	$$ \langle \nabla f(w^*), w-w^*\rangle \geq 0, \ \forall w\in \mathcal{W}. $$
\end{lemma}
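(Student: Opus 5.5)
The plan is to prove the stated direction (sufficiency of the variational inequality) directly from the first-order characterization of convexity, and then, for completeness, to also record the converse (necessity), since the later convergence analysis may use the lemma in either direction.

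\emph{Sufficiency.} First I would invoke the standard gradient inequality for a differentiable convex function:
\begin{align}
f(w) \geq f(w^*) + \langle \nabla f(w^*), w-w^*\rangle \quad \text{for all } w\in\R^d. \nonumber
\end{align}
This follows from convexity along the segment: $f(w^*+s(w-w^*)) \leq (1-s)f(w^*)+sf(w)$ for $s\in(0,1]$. Rearranging, dividing by $s$, and letting $s\downarrow 0$ identifies the difference quotient with the directional derivative $\langle \nabla f(w^*), w-w^*\rangle$. Now restrict to $w\in\mathcal{W}$ and use the hypothesis $\langle \nabla f(w^*), w-w^*\rangle\geq 0$. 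This gives $f(w)\geq f(w^*)$ for every $w\in\mathcal{W}$. Since $w^*\in\mathcal{W}$, it follows that $w^*$ solves $\min_{w\in\mathcal{W}}f(w)$.

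\emph{Necessity.} Suppose $w^*$ minimizes $f$ over $\mathcal{W}$ and fix $w\in\mathcal{W}$. By convexity of $\mathcal{W}$, the point $w_s=w^*+s(w-w^*)$ lies in $\mathcal{W}$ for every $s\in[0,1]$. Hence $\big(f(w_s)-f(w^*)\big)/s\geq 0$ for all $s\in(0,1]$. Letting $s\downarrow 0$ and using differentiability yields $\langle \nabla f(w^*), w-w^*\rangle\geq 0$. Note that this direction does not need convexity of $f$, only of $\mathcal{W}$.

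The only point requiring any care is the limit $s\downarrow 0$, which I would justify by differentiability of $f$ at $w^*$ (Gateaux derivative equals $\langle\nabla f(w^*),\cdot\rangle$); I do not expect a genuine obstacle.
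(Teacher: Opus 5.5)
Your proof is correct and is the standard argument from the cited source (Nesterov): sufficiency follows from the first-order convexity inequality $f(w)\geq f(w^*)+\langle \nabla f(w^*),w-w^*\rangle$, and necessity from the one-sided directional derivative along the feasible segment $w^*+s(w-w^*)\in\mathcal{W}$. The paper itself gives no proof beyond the citation, and including the converse is well judged: when deriving (\ref{eq:F2}) in the proof of Theorem~\ref{th:3}, the paper actually uses the necessity direction (minimizer $\Rightarrow$ variational inequality), and there the subproblem is unconstrained, so the condition even holds with equality because the subproblem's gradient vanishes at $w_t$.
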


\begin{lemma} (Restatement of Lemma~\ref{lem:2})
	Assume the sequence $\{w_t\}_{t=1}^T$ is generated from Algorithm~\ref{alg:2}, let $\|w_{0}\|\leq \eta\hat{G}$ and $\lambda \leq \frac{1}{2\eta\hat{G} T^{\alpha}}$ with $\alpha>1$, we have
	\begin{align}
		\|w_t\|\leq (t+1)\eta\hat{G}, \quad  \|w_t - w_{t-1}\|\leq 2\eta \hat{G},
	\end{align}
	where $\hat{G}=\max(G,\sqrt{d})$.
\end{lemma}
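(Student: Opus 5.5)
We argue by induction on $t$. The first ingredient is a uniform bound on the update direction, $\|h(c_t)\|\leq \hat{G}$ for every $t$, where $h$ is the unified active function of Algorithm~\ref{alg:2}: $h(c_t)=\sign(c_t)$ on the cautious branch and $h(c_t)=c_t$ otherwise. The rest is a short recursive estimate on $\|w_t\|$ from the decoupled weight-decay form of the update.

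\textbf{Step 1 (bounding the direction).} By Assumption~\ref{ass:g}, each $f(\cdot;\xi)$ is $G$-Lipschitz, so $\|\nabla f(w;\xi)\|\leq G$ everywhere, and hence $\|g_t\|\leq G$.

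Since $m_0=0$ and $m_t=\beta_2 m_{t-1}+(1-\beta_2)g_t$ with $\beta_2\in(0,1)$, induction gives $\|m_t\|\leq \beta_2 G+(1-\beta_2)G=G$. Then $c_t=\beta_1 m_{t-1}+(1-\beta_1)g_t$ is a convex combination of vectors of norm at most $G$, so $\|c_t\|\leq G$.

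On the sign branch, each entry of $\sign(c_t)$ lies in $\{-1,0,1\}$, so $\|\sign(c_t)\|\leq\sqrt{d}$. On the identity branch, $\|h(c_t)\|=\|c_t\|\leq G$. In both cases $\|h(c_t)\|\leq\max(G,\sqrt{d})=\hat{G}$.

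\textbf{Step 2 (norm of the iterates).} Rewrite the update as $w_t=(1-\eta\lambda)w_{t-1}-\eta h(c_t)$.

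The condition $\lambda\leq \frac{1}{2\eta\hat{G}T^{\alpha}}$ gives $\eta\lambda\leq \frac{1}{2\hat{G}T^{\alpha}}\leq 1$, because $\hat{G}\geq\sqrt{d}\geq 1$. Hence $0\leq 1-\eta\lambda\leq 1$, and
\begin{align}
\|w_t\|\leq (1-\eta\lambda)\|w_{t-1}\|+\eta\hat{G}\leq \|w_{t-1}\|+\eta\hat{G}. \nonumber
\end{align}
Starting from $\|w_0\|\leq\eta\hat{G}$, induction yields $\|w_t\|\leq (t+1)\eta\hat{G}$.

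\textbf{Step 3 (increments).} From the update, $w_t-w_{t-1}=-\eta\lambda w_{t-1}-\eta h(c_t)$. Using Step 1 and the bound $\|w_{t-1}\|\leq t\eta\hat{G}$ from Step 2,
\begin{align}
\|w_t-w_{t-1}\|\leq \eta\lambda\, t\eta\hat{G}+\eta\hat{G}. \nonumber
\end{align}
It therefore suffices to show $\eta\lambda t\leq 1$.

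This is the step where the hypothesis $\lambda\leq\frac{1}{2\eta\hat{G}T^{\alpha}}$ with $\alpha>1$ is used. Since $t\leq T\leq T^{\alpha}$ and $\hat{G}\geq 1$, we get $\eta\lambda t\leq \frac{t}{2\hat{G}T^{\alpha}}\leq\frac12\leq 1$. This gives $\|w_t-w_{t-1}\|\leq 2\eta\hat{G}$.

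The only delicate point is Step 1: the identity branch of CLion is not automatically bounded, unlike $\sign$. It is controlled only because the momentum $m_t$, and hence $c_t$, stays inside the $G$-ball, which relies on the convex-combination structure and $m_0=0$.
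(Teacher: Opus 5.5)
Your proof is correct and follows essentially the same route as the paper: bound $\|c_t\|\le G$ via the convex-combination structure of $m_t$ and $c_t$, and bound the update direction by $\sqrt{d}$ or $G$ on the two branches. You then unroll the weight-decay recursion and use $\lambda\le\frac{1}{2\eta\hat{G}T^{\alpha}}$ to get $\eta\lambda t\le 1$ in the increment bound; there you use the triangle inequality where the paper squares and applies Young's inequality.

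Your single uniform bound $\|h(c_t)\|\le\hat{G}$ is slightly cleaner than the paper's case-by-case unrolling, because it also covers trajectories that switch between the sign and identity branches. Your explicit check that $0\le 1-\eta\lambda\le 1$ fills a step the paper leaves implicit.
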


\begin{proof}
	When $\min_{ j\in S_t} |(c_t)_j| \geq \nu$ with $S_t=\{j|(c_t)_j\neq 0, j=1,\cdots,d\}$, according to
	the line 8 of Algorithm~\ref{alg:2}, we have
	\begin{align}
		\|w_t\| & = \|w_{t-1}- \eta (\sign(c_t)+\lambda w_{t-1})\| \nonumber \\
		& = \|(1-\eta\lambda)w_{t-1} + \eta\sign(c_t)\| \nonumber \\
		& \leq (1-\eta\lambda)\|w_{t-1}\| + \eta\sqrt{d} \nonumber \\
		& \leq (1-\eta\lambda)^t\|w_{0}\| + t\eta\sqrt{d} \nonumber \\
		& \leq (t+1)\eta\sqrt{d},
	\end{align}
	where $\|w_{0}\|\leq \eta\max(\sqrt{d},G)$.
	
	By using Assumption~\ref{ass:g}, since $\|\nabla f(w;\xi)\|\leq G$ and $\beta_1,\beta_2\in (0,1)$,
	we can easily obtain $\|m_t\|\leq G$ and $\|c_t\|\leq G$.
	
	When $\min_{ j\in S_t} |(c_t)_j| < \nu$, according to
	the line 10 of Algorithm~\ref{alg:2}, we have
	\begin{align}
		\|w_t\| & = \|w_{t-1}- \eta (c_t+\lambda w_{t-1})\| \nonumber \\
		& = \|(1-\eta\lambda)w_{t-1} + \eta c_t\| \nonumber \\
		& \leq (1-\eta\lambda)\|w_{t-1}\| + \eta \|c_t\| \nonumber \\
		& \leq (1-\eta\lambda)\|w_{t-1}\| + \eta G \nonumber \\
		& \leq (1-\eta\lambda)^t\|w_{0}\| + t\eta G \nonumber \\
		& \leq (t+1)\eta G,
	\end{align}
	where $\|w_{0}\|\leq \eta\max(\sqrt{d},G)$.
	
	Let $\hat{G}=\max(G,\sqrt{d})$, we have $\|w_t\|\leq (t+1)\eta\hat{G}$ for all $t\geq 0$.
	When $\min_{ j\in S_t} |(c_t)_j| \geq \nu$, then we have
	\begin{align}
		\|w_t - w_{t-1}\|^2 & = \|\eta (\sign(c_t)+\lambda w_{t-1})\|^2 \nonumber \\
		& \leq 2\eta^2\|\sign(c_t)\|^2 + 2\eta^2\lambda^2\|w_{t-1}\|^2 \nonumber \\
		& \leq 2\eta^2 d + 2\eta^2\lambda^2t^2\eta^2\hat{G}^2 \nonumber \\
		& \leq 4\eta^2 \hat{G}^2,
	\end{align}
	where $\lambda \leq \frac{1}{2\eta \hat{G}T^{\alpha}}$ with $\alpha>1$.
	
	When $\min_{ j\in S_t} |(c_t)_j| < \nu$, then we have
	\begin{align}
		\|w_t - w_{t-1}\|^2 & = \|\eta (c_t+\lambda w_{t-1})\|^2 \nonumber \\
		& \leq 2\eta^2\|c_t\|^2 + 2\eta^2\lambda^2\|w_{t-1}\|^2 \nonumber \\
		& \leq 2\eta^2 G^2 + 2\eta^2\lambda^2t^2\eta^2\hat{G}^2 \nonumber \\
		& \leq 4\eta^2 \hat{G}^2,
	\end{align}
	where $\lambda \leq \frac{1}{2\eta\hat{G} T^{\alpha}}$ with $\alpha>1$.
	
	Thus, we have $\|w_t - w_{t-1}\|^2 \leq 4\eta^2 \hat{G}^2$ for all $t\geq 1$, and it implies that
	$\|w_t - w_{t-1}\|\leq 2\eta \hat{G}$.
\end{proof}

\begin{lemma} (Restatement of Lemma~\ref{lem:3})
	Assume the sequence $\{c_t\}_{t=1}^T$ is generated from Algorithm~\ref{alg:2}, let $\|w_{0}\|\leq \eta\hat{G}$ and $\lambda \leq \frac{1}{2\eta\hat{G} T^{\alpha}}$ with $\alpha>1$, we have
	\begin{align}
		\frac{1}{T} \sum_{t=1}^T\E\|c_{t+1}-\nabla F(w_{t})\| \leq \frac{\sqrt{2(\sigma^2+G^2)}}{\sqrt{(1-\beta_2)T}}+ \frac{2\sqrt{2}L \hat{G}\eta}{1-\beta_2}  + \frac{\sqrt{2}|\beta_1-\beta_2|}{\sqrt{1-\beta_2}}\sigma+\frac{1-\beta_1}{\sqrt{1-\beta_2}}\sigma,
	\end{align}
	where $\hat{G}=\max(G,\sqrt{d})$.
\end{lemma}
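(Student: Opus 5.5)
We decompose the error $c_{t+1}-\nabla F(w_t)$ through the momentum $m_t$. From lines 6 and 12 of Algorithm~\ref{alg:2},
\begin{align}
c_{t+1} = \beta_1 m_t + (1-\beta_1) g_{t+1}, \qquad m_t = \beta_2 m_{t-1} + (1-\beta_2) g_t .
\end{align}
Subtracting $\beta_1/\beta_2$ times the second relation from the first is awkward, so instead we write
\begin{align}
c_{t+1}-\nabla F(w_t) = \big(m_{t+1}-\nabla F(w_t)\big) + (\beta_1-\beta_2)(m_t-g_{t+1}).
\end{align}
This holds because $c_{t+1}-m_{t+1} = (\beta_1-\beta_2)m_t + (\beta_2-\beta_1)g_{t+1}$. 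The first term is the standard EMA tracking error. The second term produces the $|\beta_1-\beta_2|$ contribution, and the extra $(1-\beta_1)\sigma/\sqrt{1-\beta_2}$ term suggests also splitting $g_{t+1}=\nabla F(w_t)+\epsilon_{t+1}$ with $\E\|\epsilon_{t+1}\|^2\le\sigma^2$. We then bound each piece in $\ell_2$ and average.

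For the tracking error $e_t := m_t-\nabla F(w_{t-1})$, we use the recursion
\begin{align}
e_{t+1} = \beta_2 e_t + \beta_2\big(\nabla F(w_{t-1})-\nabla F(w_t)\big) + (1-\beta_2)\epsilon_{t+1}.
\end{align}
Unrolling gives three parts.
\begin{itemize}
\item[(a)] An initial term $\beta_2^{t}e_1$, with $\|e_1\|$ controlled by $\sigma+G$, since $m_0=0$ and $\|\nabla F\|\le G$ by Assumption~\ref{ass:g}.
\item[(b)] A drift sum $\sum_s \beta_2^{t-s}L\|w_s-w_{s-1}\|$. By Lemma~\ref{lem:2} each step satisfies $\|w_s-w_{s-1}\|\le 2\eta\hat G$, so this sum is at most $2L\eta\hat G/(1-\beta_2)$.
\item[(c)] A martingale noise sum $(1-\beta_2)\sum_s\beta_2^{t-s}\epsilon_s$. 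Its second moment is at most $(1-\beta_2)^2\sigma^2/(1-\beta_2^2)\le(1-\beta_2)\sigma^2$, because the $\epsilon_s$ are conditionally mean-zero.
\end{itemize}
Averaging the geometric initial term over $t$ and applying Cauchy--Schwarz/Jensen to pass from squared to first moments yields $\sqrt{2(\sigma^2+G^2)}/\sqrt{(1-\beta_2)T}$. The $\sqrt2$ factors come from $\|a+b\|^2\le2\|a\|^2+2\|b\|^2$.

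For the correction term, we write $m_t-g_{t+1} = (m_t-\nabla F(w_t)) - \epsilon_{t+1}$. The piece $|\beta_1-\beta_2|\,\|m_t-\nabla F(w_t)\|$ is again controlled by the tracking error; it can be absorbed into the existing bounds up to constants, since $|\beta_1-\beta_2|\le1$. The noise piece contributes $|\beta_1-\beta_2|\sigma$, which we inflate to $\sqrt2|\beta_1-\beta_2|\sigma/\sqrt{1-\beta_2}$.

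The main obstacle is matching the exact constants in the stated bound, in particular the separate $(1-\beta_1)\sigma/\sqrt{1-\beta_2}$ term. An alternative decomposition $c_{t+1}=\beta_1(m_t-\nabla F(w_t)) + (1-\beta_1)\epsilon_{t+1}+\nabla F(w_t)$ gives this term directly, as $(1-\beta_1)\sigma\le(1-\beta_1)\sigma/\sqrt{1-\beta_2}$. That decomposition also bounds $\beta_1\E\|m_t-\nabla F(w_t)\|$ via the EMA analysis above, plus one extra drift step $L\cdot2\eta\hat G$ absorbed into the $\eta$ term. I would carry out this second decomposition first and fall back on the $\beta_1-\beta_2$ splitting only if it is needed to reproduce the $|\beta_1-\beta_2|$ term. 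Finally, we sum over $t=1,\dots,T$ and divide by $T$.
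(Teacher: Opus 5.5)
\textbf{There is a genuine gap.} Neither of your two routes reaches the stated right-hand side. In both, you combine error pieces with the triangle inequality in first moment, and the statement has explicit constants with no slack for that.

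In route 2, $c_{t+1}-\nabla F(w_t)=\beta_1(m_t-\nabla F(w_t))+(1-\beta_1)\epsilon_{t+1}$. Your EMA bound (c) leaves a term $\beta_1\sqrt{1-\beta_2}\,\sigma$ alongside $(1-\beta_1)\sigma$, and you never place it. Take $\beta_1=\beta_2=\beta$, $L\eta$ negligible and $T\to\infty$. The stated right-hand side tends to $\sqrt{1-\beta}\,\sigma$, while your bound tends to $\sqrt{1-\beta}\,\sigma\,(\beta+\sqrt{1-\beta})$. This is strictly larger for every $\beta\in(0,1)$. The sharper factor $\sqrt{(1-\beta)/(1+\beta)}$ still fails, e.g.\ at $\beta=1/2$.

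Route 1 fails as well. Absorbing $|\beta_1-\beta_2|\,\E\|m_t-\nabla F(w_t)\|$ ``up to constants'' turns the drift term into roughly $(1+|\beta_1-\beta_2|)\frac{2L\hat G\eta}{1-\beta_2}$. This exceeds $\frac{2\sqrt2 L\hat G\eta}{1-\beta_2}$, e.g.\ for $\beta_1=0.1$, $\beta_2=0.99$, $\sigma=0$, $T$ large. In that case $m_{t+1}-\nabla F(w_t)=\beta_2(m_t-\nabla F(w_t))$ and the correction $(\beta_1-\beta_2)(m_t-\nabla F(w_t))$ largely cancel, which the triangle inequality cannot see.

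\textbf{The missing idea} is to use the conditional orthogonality of the fresh noise at the level of second moments, so that noise contributions add in quadrature. With $\mathcal{F}_t=\sigma(\xi_1,\dots,\xi_t)$, both $m_t$ and $w_t$ are $\mathcal{F}_t$-measurable and $\E[\epsilon_{t+1}\mid\mathcal{F}_t]=0$. So in route 2
\begin{align*}
\E\|c_{t+1}-\nabla F(w_t)\|^2=\beta_1^2\,\E\|m_t-\nabla F(w_t)\|^2+(1-\beta_1)^2\,\E\|\epsilon_{t+1}\|^2 .
\end{align*}
Next, unroll $m_t-\nabla F(w_t)=\beta_2(m_{t-1}-\nabla F(w_{t-1}))+(\nabla F(w_{t-1})-\nabla F(w_t))+(1-\beta_2)\epsilon_t$ from $m_0=0$, working on this quantity directly so there is no extra drift step. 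Minkowski then gives $\sqrt{\E\|m_t-\nabla F(w_t)\|^2}\le \beta_2^tG+\frac{2L\hat G\eta}{1-\beta_2}+\sqrt{1-\beta_2}\,\sigma$. Jensen and Minkowski in $\R^2$ yield
\begin{align*}
\E\|c_{t+1}-\nabla F(w_t)\|\le \beta_2^tG+\frac{2L\hat G\eta}{1-\beta_2}+\sigma\sqrt{\beta_1^2(1-\beta_2)+(1-\beta_1)^2}.
\end{align*}
The square root is at most $\frac{(1-\beta_1)+\sqrt2|\beta_1-\beta_2|}{\sqrt{1-\beta_2}}$; check the cases $\beta_1\le\beta_2$ and $\beta_1>\beta_2$ separately. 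Also $\frac1T\sum_{t=1}^T\beta_2^t\le\min\{1,\frac{1}{(1-\beta_2)T}\}\le\frac{1}{\sqrt{(1-\beta_2)T}}$. Together these give the lemma.

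\textbf{Comparison with the paper.} The paper instead works with $c_{t+1}=\beta_2c_t+(\beta_1-\beta_2)g_t+(1-\beta_1)g_{t+1}$. It squares the resulting recursion for $c_{t+1}-\nabla F(w_t)$, applies Young's inequality with weight $1-\beta_2$ to get contraction $\beta_2$, unrolls, and uses Jensen; this is where the $|\beta_1-\beta_2|$ term comes from. However, its step discarding the cross term involving $(\beta_1-\beta_2)\epsilon_t$ is not justified, since $c_t$ contains $(1-\beta_1)g_t$ and $w_t$ depends on $\xi_t$. The quadrature argument above needs no such step.
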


\begin{proof}
	At the lines 6 and 12 in Algorithm \ref{alg:2}, it has $c_{t}= \beta_{1}m_{t-1}+ (1-\beta_{1})g_t$ and $m_{t}= \beta_{2}m_{t-1}+ (1-\beta_{2})g_t$ for $t\geq1$.
	Since $c_{t+1}= \beta_{1}m_{t}+ (1-\beta_{1})g_{t+1}=\beta_{1}m_{t}+ (1-\beta_{1})\nabla f(w_{t};\xi_{t+1})$, then we have
	\begin{align} \label{eq:v1}
		& \E\|c_{t+1}-\nabla F(w_{t})\|^2 \nonumber \\
		&= \mathbb{E}\|\beta_{1}m_{t}+ (1-\beta_{1})\nabla f(w_{t};\xi_{t+1})-\nabla F(w_{t})\|^2 \nonumber \\
		& =  \mathbb{E}\|\beta_{1}\beta_{2}m_{t-1} + \beta_{1}(1-\beta_{2})\nabla f(w_{t-1};\xi_{t})+ (1-\beta_{1})\nabla f(w_{t};\xi_{t+1})-\nabla F(w_{t}) \|^2 \nonumber \\
		& = \mathbb{E}\|\beta_{2}(c_t-(1-\beta_1)\nabla f(w_{t-1};\xi_{t})) + \beta_{1}(1-\beta_{2})\nabla f(w_{t-1};\xi_{t})+ (1-\beta_{1})\nabla f(w_{t};\xi_{t+1})-\nabla F(w_{t}) \|^2 \nonumber \\
		& = \mathbb{E} \|\beta_2 c_t + (\beta_1-\beta_2)\nabla f(w_{t-1};\xi_{t})+ (1-\beta_{1})\nabla f(w_{t};\xi_{t+1})-\nabla F(w_{t}) \|^2 \nonumber \\
		& = \mathbb{E} \|\beta_2 (c_t-\nabla F(w_{t-1}))  +  (\beta_1-\beta_2)(\nabla f(w_{t-1};\xi_{t}) -\nabla F(w_{t-1}) )\nonumber \\
		& \quad + (1-\beta_{1})(\nabla f(w_{t};\xi_{t+1})-\nabla F(w_{t})) +\beta_1(\nabla F(w_{t-1})-\nabla F(w_{t})) \|^2 \nonumber \\
		& \mathop{=}^{(i)} \mathbb{E} \|\beta_2 (c_t-\nabla F(w_{t-1})) +\beta_1(\nabla F(w_{t-1})-\nabla F(w_{t})) \|^2\nonumber \\
		&\quad + \E\| (\beta_1-\beta_2)(\nabla f(w_{t-1};\xi_{t}) -\nabla F(w_{t-1}) )+ (1-\beta_{1})(\nabla f(w_{t};\xi_{t+1})-\nabla F(w_{t}))\|^2 \nonumber \\
		& \mathop{\leq}^{(ii)} \beta_2^2(2-\beta_2)\mathbb{E} \|c_t-\nabla F(w_{t-1})\|^2 + \beta_1^2(1+\frac{1}{1-\beta_2})\mathbb{E}\|\nabla F(w_{t-1})-\nabla F(w_{t})\|^2  \nonumber \\
		& \quad + 2(\beta_1-\beta_2)^2\sigma^2 + 2(1-\beta_1)^2\sigma^2 \nonumber \\
		& \mathop{\leq}^{(iii)} \beta_2\mathbb{E} \|c_t-\nabla F(w_{t-1})\|^2 + \frac{2L^2}{1-\beta_2}\mathbb{E}\|w_{t-1}-w_{t}\|^2  + 2((\beta_1-\beta_2)^2+(1-\beta_1)^2)\sigma^2 \nonumber \\
		& \leq \beta_2\mathbb{E} \|c_t-\nabla F(w_{t-1})\|^2 + \frac{8L^2\eta^2 \hat{G}^2}{1-\beta_2}  + 2((\beta_1-\beta_2)^2+(1-\beta_1)^2)\sigma^2,
	\end{align}
	where the equality $(i)$ holds by $\mathbb{E} [\nabla f(w_{t};\xi_{t+1})]=\nabla F(w_t)$ and $\mathbb{E} [\nabla f(w_{t-1};\xi_{t})]=\nabla F(w_{t-1})$ , and
	the inequality $(ii)$ holds by Young's inequality and Assumption~\ref{ass:v}, and the inequality $(iii)$ is due to $0< \beta_2 < 1$ such that  $\beta_2^2(2-\beta_2) =(1 -(1-\beta_2))^2(1+1-\beta_2)=1-(1-\beta_2)-(1-\beta_2)^2+
	(1-\beta_2)^3\leq 1-(1-\beta_2)$ and $\beta_2^2(1+\frac{1}{1-\beta_2}) \leq \frac{2}{1-\beta_2}$, and the last inequality holds by the above Lemma~\ref{lem:2}.
	
	By expanding the recursion to the above inequality~(\ref{eq:v1}), we can obtain
	\begin{align}
		&  \E\|c_{t+1}-\nabla F(w_{t})\|^2 \nonumber \\
		& \leq \beta_2^t\mathbb{E} \|c_1-\nabla F(w_{0})\|^2 + (\frac{8L^2\eta^2 \hat{G}^2}{1-\beta_2}  + 2((\beta_1-\beta_2)^2+(1-\beta_1)^2)\sigma^2)\sum_{s=1}^t\beta_2^{t-s} \nonumber \\
		& \leq \beta_2^t(2\sigma^2+2G^2) + \frac{8L^2\eta^2 \hat{G}^2}{(1-\beta_2)^2}  + \frac{2(\beta_1-\beta_2)^2}{1-\beta_2}\sigma^2+\frac{2(1-\beta_1)^2}{1-\beta_2}\sigma^2.
	\end{align}
	Then we have
	\begin{align}
		& \frac{1}{T} \sum_{t=1}^T\E\|c_{t+1}-\nabla F(w_{t})\|^2 \nonumber \\
		& \leq \frac{1}{T}\sum_{t=1}^T \big(\beta_2^t(2\sigma^2+2G^2) + \frac{8L^2\eta^2 \hat{G}^2}{(1-\beta_2)^2}  + \frac{2(\beta_1-\beta_2)^2}{1-\beta_2}\sigma^2+\frac{(1-\beta_1)^2}{1-\beta_2}\sigma^2)\big) \nonumber \\
		& \leq \frac{2(\sigma^2+G^2)}{T(1-\beta_2)}+ \frac{8L^2\eta^2 \hat{G}^2}{(1-\beta_2)^2}  + \frac{2(\beta_1-\beta_2)^2}{1-\beta_2}\sigma^2+\frac{(1-\beta_1)^2}{1-\beta_2}\sigma^2,
	\end{align}
	where the last inequality holds by $\sum_{t=1}^T\beta_2^t \leq \frac{1}{1-\beta_2}$.
	
	By using Jensen inequality, we have
	\begin{align}
		\frac{1}{T} \sum_{t=1}^T\E\|c_{t+1}-\nabla F(w_{t})\| & = \sqrt{\big(\frac{1}{T} \sum_{t=1}^T\E\|c_{t+1}-\nabla F(w_{t})\|\big)^2} \nonumber \\
		& \leq \sqrt{\frac{1}{T} \sum_{t=1}^T\E\|c_{t+1}-\nabla F(w_{t})\|^2} \nonumber \\
		& \leq \sqrt{\frac{2(\sigma^2+G^2)}{T(1-\beta_2)}+ \frac{8L^2\eta^2 \hat{G}^2}{(1-\beta_2)^2}  + \frac{2(\beta_1-\beta_2)^2}{1-\beta_2}\sigma^2+\frac{(1-\beta_1)^2}{1-\beta_2}\sigma^2} \nonumber \\
		& \leq \frac{\sqrt{2(\sigma^2+G^2)}}{\sqrt{(1-\beta_2)T}}+ \frac{2\sqrt{2}L \hat{G}\eta}{1-\beta_2}  + \frac{\sqrt{2}|\beta_1-\beta_2|}{\sqrt{1-\beta_2}}\sigma+\frac{1-\beta_1}{\sqrt{1-\beta_2}}\sigma.
	\end{align}
	
\end{proof}

\begin{theorem} (Restatement of Theorem~\ref{th:3})
	Assume the sequence $\{w_t\}_{t=1}^T$ is generated
	from Algorithm~\ref{alg:2}. Under the Assumptions~\ref{ass:s2},~\ref{ass:g},~\ref{ass:v},~\ref{ass:f}, and let $0<\lambda \leq \frac{1}{2\eta\hat{G} T^{\alpha}}$, $\eta=O(\frac{1}{T^{3/4}})$, $\beta_1=1-O(\frac{1}{\sqrt{T}})$, $\beta_2=1-O(\frac{1}{\sqrt{T}})$, $|\beta_1-\beta_2|=O(\frac{1}{\sqrt{T}})$ and $0<\nu_0 \leq \nu$,
	and further set $\alpha=\frac{5}{4}$ and $\nu_0 \geq O(\frac{1}{\sqrt{d}})$, we can obtain
	\begin{align}
		\frac{1}{T}\sum_{t=1}^T\E\|\nabla F(w_{t})\|_1 \leq O(\frac{\sqrt{d}}{T^{1/4}}).
	\end{align}
\end{theorem}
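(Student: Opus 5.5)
I will follow the standard descent-lemma argument for sign-type methods, splitting each iteration according to which branch of Algorithm~\ref{alg:2} is active. By Assumption~\ref{ass:s2}, for every $t$,
\begin{align}
F(w_t) \leq F(w_{t-1}) + \langle \nabla F(w_{t-1}), w_t - w_{t-1}\rangle + \frac{L}{2}\|w_t-w_{t-1}\|^2 .
\end{align}
The quadratic term is at most $2L\eta^2\hat{G}^2$ by Lemma~\ref{lem:2}. The weight-decay part of the inner product, $-\eta\lambda\langle \nabla F(w_{t-1}), w_{t-1}\rangle$, is bounded by $\eta\lambda G (t+1)\eta\hat{G}$, again via Lemma~\ref{lem:2}. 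With $\lambda\leq \frac{1}{2\eta\hat{G}T^{\alpha}}$ and $\alpha=5/4$, this sums over $t\leq T$ to $O(\eta G T^{2-\alpha})$, which is $O(T^{1-\alpha+1/4})$ after dividing by $\eta T$ and hence negligible.

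\emph{Sign branch} ($\min_{j\in S_t}|(c_t)_j|\geq\nu$). Write
\begin{align}
-\eta\langle\nabla F(w_{t-1}),\sign(c_t)\rangle = -\eta\|\nabla F(w_{t-1})\|_1 + \eta\langle \nabla F(w_{t-1}), \sign(\nabla F(w_{t-1}))-\sign(c_t)\rangle .
\end{align}
The error term is at most $2\eta\sum_j |(\nabla F(w_{t-1}))_j - (c_t)_j| \leq 2\eta\sqrt{d}\,\|c_t-\nabla F(w_{t-1})\|$, using the coordinatewise fact that a sign mismatch forces $|a_j|\leq|a_j-b_j|$. Lemma~\ref{lem:s1} with $\alpha=1$ gives the same bound.

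\emph{Identity branch} (some nonzero coordinate satisfies $|(c_t)_j|<\nu$). Here
\begin{align}
-\eta\langle \nabla F, c_t\rangle = -\eta\|\nabla F\|^2 - \eta\langle \nabla F, c_t-\nabla F\rangle \leq -\eta\|\nabla F\|^2 + \eta G\|c_t-\nabla F\| .
\end{align}
To recover an $\ell_1$ quantity I will use $\|\nabla F\|^2\geq \nu_0\|\nabla F\|_1 - d\nu_0^2$ (apply $x^2\geq \nu_0|x|-\nu_0^2$ coordinatewise). Choosing $\nu_0$ so that $\nu_0\|\nabla F\|_1$ is comparable to $\|\nabla F\|_1/\sqrt{d}$ costs a factor $\sqrt{d}$, which is where $\nu_0\geq O(1/\sqrt{d})$ enters. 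The additive $d\nu_0^2=O(1)$ term has to be absorbed, for example by rescaling or by assuming $\nu_0$ small relative to the target accuracy. I expect this step to be the main obstacle: the case split must yield a \emph{uniform} lower bound of the form $c\,\|\nabla F(w_{t-1})\|_1$ in both branches without leaving a non-vanishing constant. If the additive term cannot be avoided, I will instead bound $\|\nabla F\|_1\leq \sqrt{d}\|\nabla F\|$ and use $\|\nabla F\|^2\geq 2a\|\nabla F\|-a^2$ with $a\asymp T^{-1/4}$, so that the leftover $a^2$ matches the target rate.

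Combining the two branches, telescoping over $t=1,\dots,T$, and using Assumption~\ref{ass:f} gives
\begin{align}
\frac{1}{T}\sum_t \E\|\nabla F(w_t)\|_1 \lesssim \frac{F(w_0)-F^*}{\eta T} + L\eta\hat{G}^2 + \sqrt{d}\,\frac{1}{T}\sum_t\E\|c_{t+1}-\nabla F(w_t)\| + \text{(decay term)},
\end{align}
after shifting indices so that Lemma~\ref{lem:3} applies. With $\eta=T^{-3/4}$ and $1-\beta_1,\,1-\beta_2,\,|\beta_1-\beta_2|=\Theta(T^{-1/2})$, Lemma~\ref{lem:3} gives $O(T^{-1/4})$: the four terms are $T^{-1/4}$, $\hat{G}T^{-3/4+1/2}$, $T^{-1/4}$ and $T^{-1/4}$. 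The first term of the display is $O(T^{-1/4})$, and the term $L\eta\hat{G}^2 = O(d\,T^{-3/4})$ is dominated. Multiplying the Lemma~\ref{lem:3} bound by $\sqrt{d}$ yields the claimed $O(\sqrt{d}/T^{1/4})$, with $\hat{G}=\max(G,\sqrt{d})$ absorbed into the constants in the stated manner.
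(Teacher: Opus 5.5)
There is a genuine gap in the identity branch. You flag it but do not close it, and neither of your routes can close it at $\eta\asymp T^{-3/4}$.

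With $\|\nabla F\|^2\ge\nu_0\|\nabla F\|_1-d\nu_0^2$, dividing by $\eta\nu_0$ and averaging leaves the constant $d\nu_0=\Theta(\sqrt d)$ when $\nu_0\asymp 1/\sqrt d$. Shrinking $\nu_0$ inflates $\frac{F(w_0)-F^*}{\eta\nu_0 T}$, and the best trade-off only gives a $T^{-1/8}$ rate. The fallback is worse. Dividing by $2a\eta$ with $a\asymp T^{-1/4}$ produces $\frac{F(w_0)-F^*}{2a\eta T}=\Theta(1)$, because $a\eta T\asymp 1$. Likewise $\frac{G}{2a}\cdot\frac1T\sum_t\E\|c_t-\nabla F(w_{t-1})\|$ is only $O(1)$ by Lemma~\ref{lem:3}.

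The obstruction is real, not an artifact of your inequalities. An identity step only yields descent of order $\eta\|c_t\|^2$, and the branch test $\min_{j\in S_t}|(c_t)_j|<\nu$ gives no lower bound on $\|c_t\|^2/\|c_t\|_1$. Take $d=1$, noiseless gradients, $\nu=\nu_0=1$, $w_0=0$, and $F'\equiv T^{-1/8}$ on $[-2T^{1/8},2T^{1/8}]$, flattened smoothly to the left, so that $L$, $G$ and $F(w_0)-F^*$ are $O(1)$. Then $c_t\in(0,T^{-1/8}]$ and every step is an identity step moving at most about $\eta T^{-1/8}$. The iterates stay in the interval, and $\frac1T\sum_t|F'(w_t)|=T^{-1/8}$, which is not $O(T^{-1/4})$. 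So with $\nu_0$ treated as a free parameter, as you treat it, the claimed bound fails, and no sharper inequality will rescue this step.

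The paper gets through this step by reading the $\nu_0$ of the statement as a uniform constant with $\|c_t\|^2\ge\nu_0\|c_t\|_1$ on every identity step. It asserts this rather than deriving it from the branch test, so it acts as an extra hypothesis; it is violated in the example, where $\|c_t\|^2/\|c_t\|_1=c_t\le T^{-1/8}$. The paper pairs it with a $-\frac{\eta}{8}\|c_t\|^2$ descent term obtained from the proximal form of line 10, and with $\|c_t\|_1\ge\|\nabla F(w_{t-1})\|_1-\sqrt d\|c_t-\nabla F(w_{t-1})\|$.

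That is the missing idea: apply the $\nu_0$-inequality to $c_t$, not to $\nabla F$. Under that reading your own decomposition suffices without the proximal argument. Since $\|c_t\|\le G$, you get $-\eta\langle\nabla F(w_{t-1}),c_t\rangle\le-\eta\|c_t\|^2+\eta G\|c_t-\nabla F(w_{t-1})\|\le-\eta\nu_0\|\nabla F(w_{t-1})\|_1+\eta(G+\nu_0\sqrt d)\|c_t-\nabla F(w_{t-1})\|$.

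Your plan of a common coefficient $\min(1,\nu_0)$ in both branches, followed by a single telescoping sum, is the sound way to combine. Adding the two branch-wise bounds, as the paper does, is not sound, since the $F(w_{t-1})-F(w_t)$ terms can be negative. But the common coefficient divides the sign-branch error $2\sqrt d\,\E\|c_t-\nabla F(w_{t-1})\|$ by $\nu_0\asymp1/\sqrt d$. The honest dimension factor is therefore $d$ rather than $\sqrt d$, even before $\hat G=\max(G,\sqrt d)$ in Lemma~\ref{lem:3} is accounted for.

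Finally, the decay term averages to $O(GT^{1-\alpha})=O(GT^{-1/4})$. That is on the order of the target rate rather than negligible; the exponent $1-\alpha+\frac14$ you wrote equals $0$.
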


\begin{proof}
	When $\min_{ j\in S_t} |(c_t)_j| \geq \nu$, since $w_{t} = w_{t-1} - \eta (\sign(c_t)+\lambda w_{t-1})$, we have
	\begin{align}
		F(w_{t}) & \leq F(w_{t-1}) + \langle \nabla F(w_{t-1}), w_{t}-w_{t-1}\rangle + \frac{L}{2}\|w_{t}-w_{t-1}\|^2 \nonumber \\
		& = F(w_{t-1}) + \langle \nabla F(w_{t-1}), - \eta (\sign(c_t)+\lambda w_{t-1})\rangle + \frac{L}{2}\|w_{t}-w_{t-1}\|^2 \nonumber \\
		& = F(w_{t-1}) - \langle \nabla F(w_{t-1}),  \eta (\sign(c_t)-\sign(\nabla F(w_{t-1})))\rangle -  \eta \langle \nabla F(w_{t-1}), \sign(\nabla F(w_{t-1}))\rangle\nonumber \\
		& \quad - \eta\lambda \langle \nabla F(w_{t-1}), w_{t-1}\rangle+ \frac{L}{2}\|w_{t}-w_{t-1}\|^2 \nonumber \\
		& \mathop{\leq}^{(i)}  F(w_{t-1}) + 2\sqrt{d}\eta \|c_t- \nabla F(w_{t-1})\| -  \eta\|\nabla F(w_{t-1})\|_1 \nonumber \\
		& \quad - \eta\lambda \langle \nabla F(w_{t-1}), w_{t-1}\rangle+ \frac{L}{2}\|w_{t}-w_{t-1}\|^2 \nonumber \\
		& \leq  F(w_{t-1}) + 2\sqrt{d}\eta \|c_t- \nabla F(w_{t-1})\| -  \eta\|\nabla F(w_{t-1})\|_1 \nonumber \\
		& \quad + \eta\lambda \|\nabla F(w_{t-1})\|_1\|w_{t-1}\|_{\infty} +2L\eta^2 \hat{G}^2 \nonumber \\
		& \mathop{\leq}^{(ii)}  F(w_{t-1}) + 2\sqrt{d}\eta \|c_t- \nabla F(w_{t-1})\| -  \eta\|\nabla F(w_{t-1})\|_1 \nonumber \\
		& \quad + \eta\lambda \|\nabla F(w_{t-1})\|_1t\eta\hat{G} +2L\eta^2\hat{G}^2 \nonumber \\
		& \leq  F(w_{t-1}) + 2\sqrt{d}\eta \|c_t- \nabla F(w_{t-1})\| - \frac{\eta}{2}\|\nabla F(w_{t-1})\|_1  +2L\eta^2 \hat{G}^2,
	\end{align}
	where the above inequality $(i)$ holds by the Lemma~\ref{lem:s1} with $\alpha=1$, and the above inequality $(ii)$ holds by the Lemma~\ref{lem:2}, and the last inequality is due to $\lambda \leq \frac{1}{2\eta\hat{G} T^{\alpha}}$ with $\alpha>1$.
	Then we have
	\begin{align} \label{eq:r1}
		\|\nabla F(w_{t-1})\|_1 \leq \frac{2(F(w_{t-1})-F(w_{t}))}{\eta} + 4\sqrt{d}\|c_t- \nabla F(w_{t-1})\|+4L\eta \hat{G}^2.
	\end{align}
	
	When $\min_{ j\in S_t} |(c_t)_j| < \nu$, since $w_{t} = w_{t-1} - \eta (c_t+\lambda w_{t-1})$, we have
	\begin{align} \label{eq:ps}
		w_{t} & = w_{t-1} - \eta (c_t+\lambda w_{t-1}) \nonumber \\
		& = (1-\eta\lambda)w_{t-1} - \eta c_t \nonumber \\
		& = \mathop{\arg\min}_{w\in \R^d} \big\{ \langle c_t, w-(1-\eta\lambda)w_{t-1}\rangle +
		\frac{1}{2\eta}\|w-(1-\eta\lambda)w_{t-1}\|^2\big\}.
	\end{align}
	By using the optimality condition of the above subproblem~(\ref{eq:ps}), by using Lemma~\ref{lem:s2},
	we have
	\begin{align} \label{eq:F2}
		\langle c_t +\frac{1}{\eta}\big(w_{t}-(1-\lambda\eta)w_{t-1}\big),w-w_t\rangle \geq 0, \quad \forall w\in \mathbb{R}^d.
	\end{align}
	Putting $w=w_{t-1}$ into the above inequality~(\ref{eq:F2}), we have
	\begin{align} \label{eq:F3}
		\langle c_t +\frac{1}{\eta}\big(w_{t}-(1-\lambda\eta)w_{t-1}\big),w_{t-1}-w_{t}\rangle \geq 0.
	\end{align}
	Thus, we can obtain
	\begin{align} \label{eq:F4}
		\langle c_t,w_{t-1}-w_{t}\rangle &\geq \frac{1}{\eta}\langle w_{t}-w_{t-1},w_{t}-w_{t-1}\rangle + \lambda\langle w_{t-1},w_t-w_{t-1} \rangle \nonumber \\
		& \geq \frac{1}{\eta}\|w_{t}-w_{t-1}\|^2 + \lambda\langle w_{t-1},w_t-w_{t-1} \rangle.
	\end{align}
	Since $\min_{ j\in S_t} |(c_t)_j| < \nu$ with $S_t=\{j|(c_t)_j\neq 0, j=1,\cdots,d\}$, there exist a positive number $0<\nu_0\leq \nu$, we have $\nu_0\|c_t\|_1\leq \|c_t\|^2$. Then we have $-\|c_t\|^2 \leq -\nu_0 \|c_t\|_1$.
	
	When $\min_{ j\in S_t} |(c_t)_j| < \nu$, since $w_{t} = w_{t-1} - \eta (c_t+\lambda w_{t-1})$, we have
	\begin{align}
		F(w_{t}) & \leq F(w_{t-1}) + \langle \nabla F(w_{t-1}), w_{t}-w_{t-1}\rangle + \frac{L}{2}\|w_{t}-w_{t-1}\|^2 \nonumber \\
		& = F(w_{t-1}) + \langle \nabla F(w_{t-1})-c_t, w_{t}-w_{t-1}\rangle + \langle c_t, w_{t}-w_{t-1}\rangle + \frac{L}{2}\|w_{t}-w_{t-1}\|^2 \nonumber \\
		& \leq F(w_{t-1}) + \frac{\eta}{2}\|\nabla F(w_{t-1})-c_t\|^2 + \frac{1}{2\eta}\|w_{t}-w_{t-1}\|^2 + \langle c_t, w_{t}-w_{t-1}\rangle + \frac{L}{2}\|w_{t}-w_{t-1}\|^2 \nonumber \\
		& \mathop{\leq}^{(i)}  F(w_{t-1}) + \frac{\eta}{2}\|\nabla F(w_{t-1})-c_t\|^2 + \frac{1}{2\eta}\|w_{t}-w_{t-1}\|^2  \nonumber \\
		& \quad -\frac{1}{\eta}\|w_{t}-w_{t-1}\|^2 - \lambda\langle w_{t-1},w_t-w_{t-1} \rangle + \frac{L}{2}\|w_{t}-w_{t-1}\|^2 \nonumber \\
		& \leq  F(w_{t-1}) + \frac{\eta}{2}\|\nabla F(w_{t-1})-c_t\|^2 + \frac{1}{2\eta}\|w_{t}-w_{t-1}\|^2  \nonumber \\
		& \quad -\frac{1}{\eta}\|w_{t}-w_{t-1}\|^2 + \lambda\|w_{t-1}\|\|w_t-w_{t-1}\| + \frac{L}{2}\|w_{t}-w_{t-1}\|^2\nonumber \\
		& \leq  F(w_{t-1}) + \frac{\eta}{2}\|\nabla F(w_{t-1})-c_t\|^2 - \frac{1}{4\eta}\|w_{t}-w_{t-1}\|^2  + \lambda t\eta\hat{G}2\eta\hat{G} \nonumber \\
		& \mathop{\leq}^{(ii)}  F(w_{t-1}) + \frac{\eta}{2}\|\nabla F(w_{t-1})-c_t\|^2 - \frac{\eta}{8}\|c_{t}\|^2 + \frac{\lambda^2\eta}{4}\|w_{t-1}\|^2  +\frac{\eta \hat{G}}{T^{\alpha-1}} \nonumber \\
		& \mathop{\leq}^{(iii)}  F(w_{t-1}) + \frac{\eta}{2}\|\nabla F(w_{t-1})-c_t\|^2 - \frac{\eta\nu_0}{8}\|c_{t}\|_1 + \frac{\lambda^2\eta}{4}\|w_{t-1}\|^2  +\frac{\eta \hat{G}}{T^{\alpha-1}} \nonumber \\
		& \leq  F(w_{t-1}) + \frac{\eta}{2}\|\nabla F(w_{t-1})-c_t\|^2 - \frac{\eta\nu_0}{8}\|\nabla F(w_{t-1})\|_1 + \frac{\eta\nu_0}{8}\|c_t-\nabla F(w_{t-1})\|_1 \nonumber \\
		& \quad + \frac{\lambda^2\eta}{4}\|w_{t-1}\|^2  +\frac{\eta \hat{G}}{T^{\alpha-1}} \nonumber \\
		& \leq  F(w_{t-1}) + \frac{\eta}{2}\|\nabla F(w_{t-1})-c_t\|^2 - \frac{\eta\nu_0}{8}\|\nabla F(w_{t-1})\|_1 + \frac{\eta\nu_0\sqrt{d}}{8}\|c_t-\nabla F(w_{t-1})\| \nonumber \\
		& \quad + \frac{\lambda^2\eta}{4}t^2\eta^2\hat{G}^2  +\frac{\eta \hat{G}}{T^{\alpha-1}} \nonumber \\
		& \leq  F(w_{t-1}) + \frac{\eta}{2}\|\nabla F(w_{t-1})-c_t\|^2 - \frac{\eta\nu_0}{8}\|\nabla F(w_{t-1})\|_1 + \frac{\eta\nu_0\sqrt{d}}{8}\|c_t-\nabla F(w_{t-1})\| \nonumber \\
		& \quad + \frac{\eta}{16T^{2\alpha-2}} +\frac{\eta \hat{G}}{T^{\alpha-1}} ,
	\end{align}
	where the inequality $(i)$ holds by the above inequality~(\ref{eq:F4}), and the inequality $(ii)$ holds by $-\|w_{t}-w_{t-1}\|^2 = - \eta^2\|c_t + \lambda w_{t-1}\|^2 \leq - \frac{\eta^2}{2}\|c_t\|^2 + \eta^2\lambda^2\|w_{t-1}\|^2$, and the inequality $(iii)$ holds by $-\|c_t\|^2 \leq -\nu_0 \|c_t\|_1$ with $0<\nu_0\leq \nu$, and the last inequality is due to $\lambda \leq \frac{1}{2\eta\hat{G} T^{\alpha}}$ with $\alpha>1$.
	
	Then we have
	\begin{align} \label{eq:r2}
		\|\nabla F(w_{t-1})\|_1 & \leq \frac{8(F(w_{t-1})-F(w_{t}))}{\eta\nu_0} + \frac{4}{\nu_0}\|\nabla F(w_{t-1})-c_t\|^2 + \sqrt{d}\|c_t-\nabla F(w_{t-1})\| \nonumber \\
		& \quad + \frac{1}{2\nu_0T^{2\alpha-2}} +\frac{8\hat{G}}{\nu_0T^{\alpha-1}}
	\end{align}
	
	By combining the above inequalities~(\ref{eq:r1}) with~(\ref{eq:r2}), we have
	\begin{align}
		\E\|\nabla F(w_{t-1})\|_1 & \leq \frac{8(F(w_{t-1})-F(w_{t}))}{\eta\nu_0} + \frac{4}{\nu_0}\E\|\nabla F(w_{t-1})-c_t\|^2 + \sqrt{d}\E\|c_t-\nabla F(w_{t-1})\| \nonumber \\
		&\quad + \frac{1}{2\nu_0T^{2\alpha-2}} +\frac{8\hat{G}}{\nu_0T^{\alpha-1}} \nonumber \\
		& \quad + \frac{2(F(w_{t-1})-F(w_{t}))}{\eta} + 4\sqrt{d}\E\|c_t- \nabla F(w_{t-1})\|+4L\eta \hat{G}^2 \nonumber \\
		&= \frac{8(F(w_{t-1})-F(w_{t}))}{\eta\nu_0} + \frac{2(F(w_{t-1})-F(w_{t}))}{\eta}+ \frac{4}{\nu_0}\E\|\nabla F(w_{t-1})-c_t\|^2 \nonumber \\
		&\quad + 5\sqrt{d}\E\|c_t-\nabla F(w_{t-1})\|+ \frac{1}{2\nu_0T^{2\alpha-2}} +\frac{8\hat{G}}{\nu_0T^{\alpha-1}} +4L\eta \hat{G}^2.
	\end{align}
	Then we have
	\begin{align}
		& \frac{1}{T}\sum_{t=1}^T\E\|\nabla F(w_{t})\|_1 \nonumber \\
		&\leq \frac{1}{T}\sum_{t=1}^T\Big(\frac{8(F(w_{t-1})-F(w_{t}))}{\eta\nu_0} + \frac{2(F(w_{t-1})-F(w_{t}))}{\eta}\Big)+ \frac{4}{\nu_0}\frac{1}{T}\sum_{t=1}^T\E\|\nabla F(w_{t})-c_{t+1}\|^2 \nonumber \\
		& \quad 5\sqrt{d}\frac{1}{T}\sum_{t=1}^T\E\|c_{t+1}-\nabla F(w_{t})\|+\frac{1}{2\nu_0T^{2\alpha-2}} +\frac{8\hat{G}}{\nu_0T^{\alpha-1}} +4L\eta \hat{G}^2 \nonumber \\
		& \leq \frac{2(4+\nu_0)(F(w_{1})-F^*)}{T\eta\nu_0} + \frac{4}{\nu_0}\big(  \frac{2(\sigma^2+G^2)}{T(1-\beta_2)}+ \frac{8L^2\eta^2 \hat{G}^2}{(1-\beta_2)^2}  + \frac{2(\beta_1-\beta_2)^2}{1-\beta_2}\sigma^2+\frac{(1-\beta_1)^2}{1-\beta_2}\sigma^2\big) \nonumber \\
		& \quad + 5\sqrt{d}\big( \frac{\sqrt{2(\sigma^2+G^2)}}{\sqrt{(1-\beta_2)T}}+ \frac{2\sqrt{2}L \hat{G}\eta}{1-\beta_2}  + \frac{\sqrt{2}|\beta_1-\beta_2|}{\sqrt{1-\beta_2}}\sigma+\frac{1-\beta_1}{\sqrt{1-\beta_2}}\sigma\big) \nonumber \\
		& \quad + \frac{1}{2\nu_0T^{2\alpha-2}} +\frac{8\hat{G}}{\nu_0T^{\alpha-1}} +4L\eta \hat{G}^2,
	\end{align}
	where the last inequality holds by Assumption~\ref{ass:f}.
	
	Let $\eta=O(\frac{1}{T^{3/4}})$, $\beta_1=1-O(\frac{1}{\sqrt{T}})$, $\beta_2=1-O(\frac{1}{\sqrt{T}})$,  $|\beta_1-\beta_2|=O(\frac{1}{\sqrt{T}})$
	and $\alpha=\frac{5}{4}$, and let $\nu_0 \geq O(\frac{1}{\sqrt{d}})$, we can obtain
	\begin{align}
		\frac{1}{T}\sum_{t=1}^T\E\|\nabla F(w_{t})\|_1 \leq O(\frac{\sqrt{d}}{T^{1/4}}).
	\end{align}

\end{proof}

\end{document}